\newcommand{\negaspace}{\vspace{-.5\baselineskip}}
\theoremstyle:=definition,remark,plain\do{%
        \expandafter\g@addto@macro\csname th@\theoremstyle\endcsname{%
            \addtolength\thm@preskip\parskip
            }%
        }
\newcommand{\yrm}[1]{}
\newcommand{\qiangremoved}[1]{}
 \def\bb#1\ee{\begin{align*}#1\end{align*}}
 \def\bba#1\eea{\begin{align}#1\end{align}}
\newcommand{\printfnsymbol}[1]{%
  \textsuperscript{\@fnsymbol{#1}}%
}
\title{\huge Let us Build Bridges: 
Understanding and Extending Diffusion Generative Models }
\author{%
	Xingchao Liu\\
    ~University of Texas at Austin\\
	\texttt{xcliu@cs.utexas.edu}
	\and
	Lemeng Wu\\
	~University of Texas at Austin\\
	\texttt{lmwu@cs.utexas.edu} \\
	\and 
	Mao Ye\\
	University of Texas at Austin\\
	\texttt{my21@cs.utexas.edu} \\
	\and
	Qiang Liu\\
	University of Texas at Austin\\
	\texttt{lqiang@cs.utexas.edu} \\
}
\date{}
\begin{document}
\maketitle

\begin{abstract}
Diffusion-based generative models 
have achieved promising results recently, 
but raise an array of open questions in terms of 
conceptual understanding, 
theoretical analysis, algorithm improvement and extensions to discrete, structured, non-Euclidean domains.  %
This work tries
to re-exam the overall framework, in order to 
gain better theoretical understandings and 
develop algorithmic extensions for data from arbitrary domains.      
By viewing 
diffusion models 
as latent variable models with unobserved diffusion trajectories  
and applying maximum likelihood estimation (MLE) %
with latent trajectories imputed from an auxiliary distribution, 
we show that both the model construction and the imputation of latent trajectories amount to constructing diffusion bridge processes that achieve deterministic values and constraints at end point, %
for which we provide a systematic study and a suit of tools.  
Leveraging our framework, we present 1) a first  theoretical error analysis for learning diffusion generation models, 
and 2) a simple and unified approach to learning on data from different discrete and constrained domains. 
Experiments show that our methods perform superbly 
on generating images, semantic segments and 3D point clouds. 
\end{abstract}

\section{Introduction}
\negaspace
 Diffusion-based deep generative models,  
notably score matching with Langevin dynamics (SMLD)  \citep{song2019generative, song2020improved}, 
denoising diffusion probabilistic models (DDPM)  \citep{ho2020denoising}, 
and their variants  \citep[e.g.,][]{song2020score,  song2020denoising, kong2021fast, song2021maximum, nichol2021improved}, have shown to %
achieve new state of the art results for image synthesis \citep{dhariwal2021diffusion, ramesh2022hierarchical, ho2022cascaded, liu2021sampling}, audio synthesis~\citep{chen2020wavegrad, kong2020diffwave}, point cloud synthesis~\cite{luo2021diffusion, luo2021score, zhou20213d}, and many other AI tasks. 
These methods train a deep neural network to drive as drift force a diffusion process to generate data, and  
are shown to outperform competitors, mainly GANs and VAEs, on stability and sample  diversity~\citep{xiao2021tackling, ho2020denoising, song2020score}. %

However, 
a range of open challenges arise 
on understanding, analyzing, and improving diffusion-based models.  
On the conceptual and theoretical perspective, 
 existing methods %
have been derived from multiple angles,  
including denoising score matching \citep{vincent2011connection, song2019generative},
time reversed diffusion  \citep{song2020score}, 
and  variational bounds %
\citep{ho2020denoising},  
but these approaches leave many design choices whose relations and effects have been unclear and difficult to analyze. 
On the practical side, 
standard approaches tend to be slow in both training and inference due to the need of a large number of diffusion steps, and are restricted to generating 
continuous data in $\RR^d$ -- 
 special techniques such as 
 dequantization \citep{uria2013rnade, ho2019flow++}  and 
 multinomial diffusion~\citep{hoogeboom2021argmax, austin2021structured} need to be developed
 case by case for different types of discrete data 
 and the results still tend to be unsatisfying despite 
 promising recent advances \cite{hoogeboom2021argmax, austin2021structured}.

In this work, we approach diffusion models with a simple and classical statistical learning  framework. %
By viewing the diffusion models as 
a latent variable model 
consisting of unobserved trajectories whose end points output observed data, 
the learning is decomposed into two parts: 
1) constructing imputation mechanisms to generate latent trajectories that would have generated a given data point $x$, 
and 2) specifying and training the diffusion generative model to generate  data on the domain $\Omega$ of interest by maximizing likelihood using the imputed trajectories. 
Both components involve constructing \emph{diffusion bridge} processes, called $x$-bridge and $\Omega$-bridge, whose end points guarantee to hit a deterministic value $x$ or domain $\Omega$ at the terminal time, respectively. 
The design of learning algorithms reduces to constructing two bridges, for both which 
we provide a systematic study and a full suit of techniques. %
Our framework
allows us to decouple 
the various building blocks of the diffusion learning, enabling new theoretical analysis, algorithmic extensions to structured domains, and speedup in the regime of small sampling steps.  
Among others, we want to highlight two particular contributions: 

1) We develop  
a first error analysis 
for learning diffusion models including both statistical errors and time-discretization errors. In regime of classical asymptotic statistics \citep{van2000asymptotic}, we show that the KL divergence between the true and learned distributions from a variant of our method has an asymptotic rate of $\bigO{(\log(1/\epsilon)+1)/n+\epsilon}$, where $n$ is the number of i.i.d. data points and $\epsilon$ the step size in Euler discretization of the SDEs. 

2) Our framework 
is instantiated to 
provide a simple and universal approach to learning on data from an arbitrary domain $\Omega$ 
that can be embedded in $\RR^d$ and on which the expectation of truncated standard Gaussian distribution can be evaluated. 
This includes  product spaces of any type, bounded/unbounded, continuous/discrete, categorical/ordinal data, and their mix.
The efficiency of the method is testified on a suit of examples, including generating images, segmentation maps, %
and grid-valued point clouds.

\section{Learning Latent Diffusion Models}
\paragraph{Diffusion Generative Models} 
 \begin{wrapfigure}{r}{0.25\textwidth}
  \begin{center}
    \includegraphics[width=0.25\textwidth]{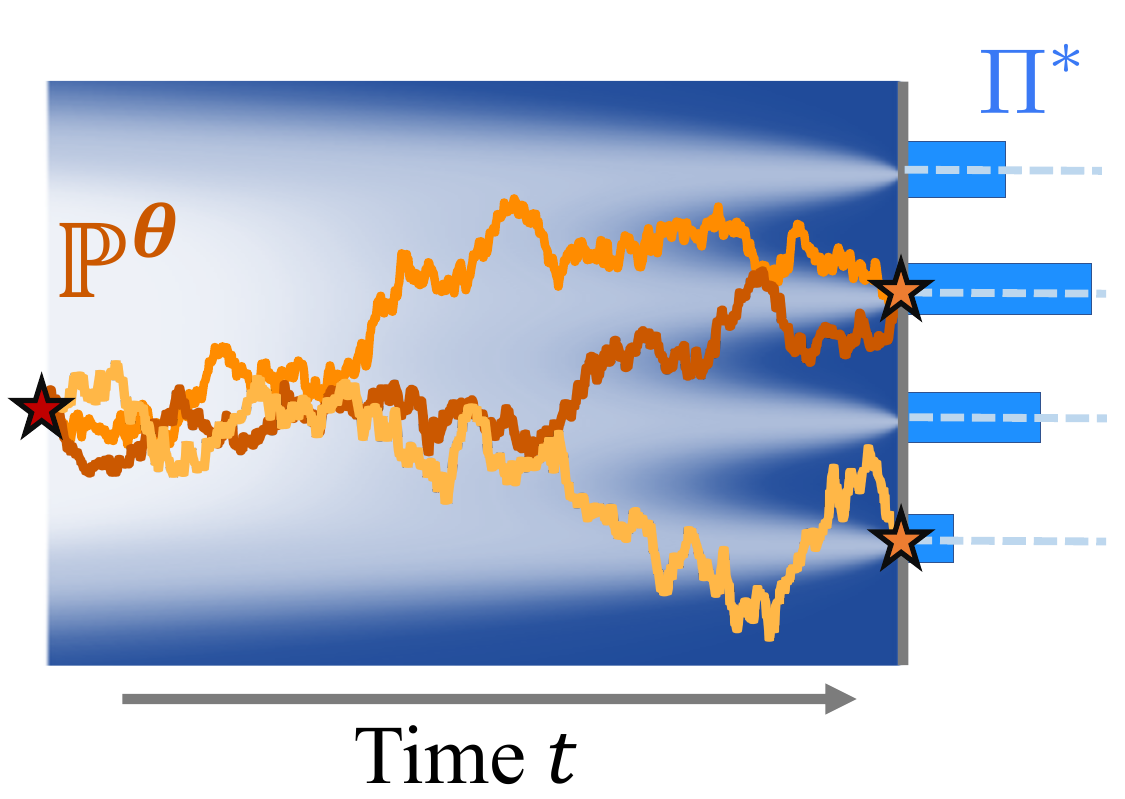}
  \end{center}
  \caption{\small{$\Omega$-Bridges for discrete $\Omega=\{1,2,3,4\}$.}} 
\end{wrapfigure}
 Let $\{x\datai\}_{i=1}^n$ be  
 an i.i.d. sample from an unknown distribution $\tg$ on a domain 
 $\Omega\subseteq \RR^d$. 
 We want 
 to fit the data with a diffusion model $%
 \P^\theta(\d Z)$, 
 which specifies the distribution of 
 a latent trajectory 
 $\rd Z = \{Z_{t} \colon t\in[0, \T]\}$
 that outputs an observation ($x= Z_\t$) at the terminal time $\t$. 
 The evolution of $Z$ is governed by an Ito process: 
 \bbb \label{equ:sthetax}
 \d \Z_t = s^\theta_t(\Z_t) \dt  + \sigma_t(\Z_t) \d W_t,~~ \forall t\in[0,\t],~~~~~~~ Z_0 \sim {\P_0^\theta},  
 \eee 
 where {$W_t$ is a Wiener process}; 
 $\sigma\colon [0,\t]\times \RR^d\to\RR^{d\times d}$ is a fixed, positive definite diffusion coefficient; 
  the drift term $s^\theta \in[0,\t] \times \RR^{d}\to \RR^{d}$ depends on a trainable parameter $\theta$ and is often specified  using a deep neural network. The initial distribution $\P_0^\theta$ is often a fixed elementary distribution (Gaussian or deterministic), but 
  we keep it trainable in the general framework. 
Here, $\P^\theta$ is the path measure on continuous trajectories $Z$ following \eqref{equ:sthetax}. We denote by $\P_t^\theta$ the marginal distribution of $Z_t$ at time $t$. 
We want to estimate $\theta$ such that the terminal distribution $Z_\T \sim \P_\T^\theta$ matches the data  $X\sim \tg$. 

If $\Omega$ is a strict subset of $\RR^d$,  e.g., bounded or discrete, 
then we need to specify the model $\P^\theta$  in \eqref{equ:sthetax} such that $\Z_\t$ is guaranteed to arrive $\Omega$ at $t=\t$ (while the non-terminal states may not belong to $\Omega$). 
 \begin{msg}  
 A process $Z$ in $\RR^d$ with law $\P$ is called a bridge to a set $B \subset \RR^d$, or $B$-bridge, 
 if $\P(Z_\T\in B)=1$. %
 \end{msg} 
Section~\ref{sec:constrained} discusses how to specify $s^\theta$ such that $\P^\theta$ is an $\Omega$-bridge. We assume $\Omega=\RR^d$ for now.

 \paragraph{A Poor man's EM}
 \begin{wrapfigure}{r}{0.3\textwidth}
  \begin{center}
    \includegraphics[width=0.3\textwidth]{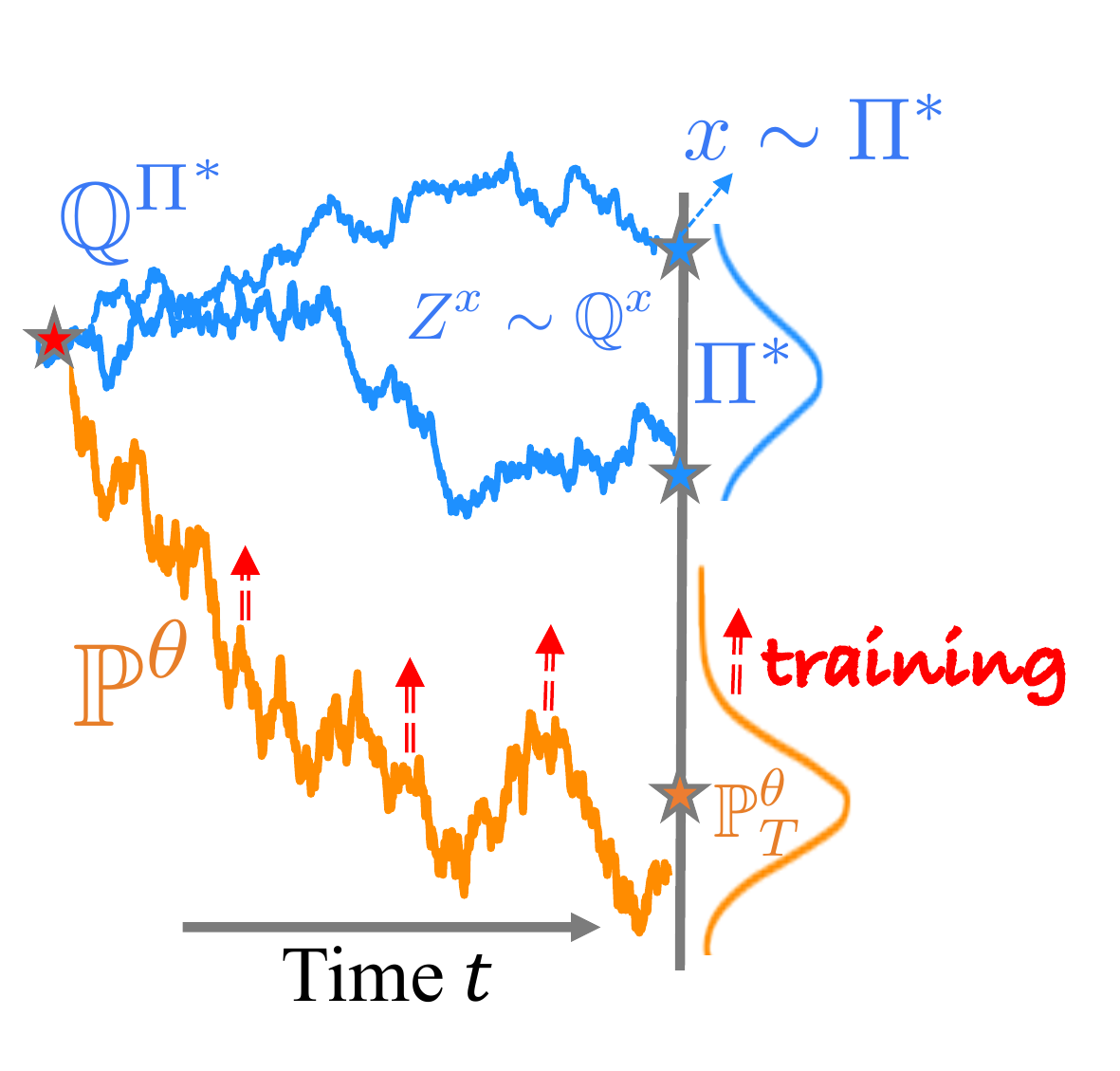}
  \end{center}
\end{wrapfigure}
  A canonical approach to learning latent variable models like $\P^\theta$
 is (variational) expectation maximization (EM), 
 which alternates between
 1) estimating the 
 posterior distribution $\P^{\theta,x}\defeq \P^\theta(Z|\Z_\t=x)$ of the latent 
 trajectories $Z$ given the observation (E-step), and 
 2) 
 estimating the model parameter $\theta$ with $Z$ imputed from $\P^{\theta,x}$ (M-step). 
Following  DDPM \citep{ho2020denoising}, 
we consider a simpler approach consisting of only the M-step, estimating $\theta$ with $Z$ drawn from $\Q^x\defeq \Q(Z|\Z_\t= x)$ of a pre-specified simple baseline process $\Q$, rather than the more expensive $\P^{\theta,x}$. 
Here  for each $x\in \Omega$, the conditioned process 
 $\Q^x\defeq \Q(Z|Z_\t = x)$  
 is the distribution of the trajectories from $\Q$ that are pinned at $x$ at time $t$.
 Therefore, $\Q^x$ is an $x$-bridge by definition. 
 
 Let $\Q^\tg(\cdot) = \int \Q^x(\cdot ) \tg(\dx)$ be the distribution of trajectories $Z$ generated in the following ``backward'' way: first drawing a data point $x\sim \tg$, and then $Z \sim \Q^x$ conditioned on the end point $x$. 
 This construction ensures that the terminal distribution of $\Q^\tg$ equals $\tg$, that is, $\Q^\tg_\t = \tg$.  %
 Then, the model $\P^\theta$ can be  estimated by fitting data drawn from $\Q^\tg$ using maximum likelihood estimator: 
\bbb \label{equ:kl0}
 \min_{\theta} \left \{ \L(\theta)\defeq  \KL(\Q^\tg ~||~ \P^\theta) 
\right\}. 
 \eee

The classical (variational) EM would alternatively update $\theta$ (M-step) and $\Q^x$ (E-step) to make $\Q^x \approx \P^{\theta,x}$. 
Why is it ok to simply drop the E-step? 
At the high level, it is the benefit from using universal approximators like deep neural networks: if the model space of $\P^\theta$ is sufficiently rich,  by minimizing the KL divergence in \eqref{equ:kl0}, $\P^\theta$ can approximate the given $\Q^\tg$ %
well enough (in a way that is made precise in sequel) %
such that their terminal distributions are close: $\P^\theta_\T \approx \Q^\tg_\T =  \tg$. 

\begin{msg} 
{Learning latent variable models require no E-step if the model space is sufficiently rich.} 
\end{msg}

We should see that 
in this case the latent variables $Z$ in 
the learned model $\P^\theta$ is 
\emph{dictated} by the choice of 
the imputation distribution $\Q$ since we have $\P^{\theta,x}=\Q^x$ when the KL divergence in \eqref{equ:kl0}  is fully minimized to zero; EM also achieves $\P^{\theta,x}=\Q^x$ but has the imputation distribution $\Q^x$ determined by the model $\P^\theta$, not the other way.

\paragraph{Loss Function}
 Let us assume that  $\Q^x$ yields  a general non-Markov diffusion process of form 
 \bbb \label{equ:zxt}  
\d Z_t =  \eta^x(Z_{[0,t]}, t) \dt + \sigma(Z_t, t)\dW_t,~~~~~ Z_0 \sim \mu^x,
\eee 
where the drift $\eta^x$ and initial distribution $\mu^x$ depend on the end point $x$ and the diffusion coefficient $\sigma$ is the same as that of $\P^\theta$. Here $\eta^x$ can depend on the whole trajectory upto time $t$ and hence $\Q^x$ can be non-Markov.   %
$\Q^x$ is Markov if $\eta^x(Z_{[0,t]}, t) = \eta^x(Z_{t}, t).$ 
See Section~\ref{sec:bridges} for instances of $\eta^x, \mu^x$. %

Using  Girsanov theorem \citep[e.g.,][]{oksendal2013stochastic},   
with $\P^\theta$
in \eqref{equ:sthetax} and 
$\Q^x$ in \eqref{equ:zxt}, 
the KL divergence  in \eqref{equ:kl0}
can be reframed into a form of the score matching loss from  \cite{song2020score, song2021maximum}:  
 
 \bbb \label{equ:lossscorem}
\L(\theta) 
 = 
 \E_{\substack{x\sim \tg \\ Z\sim \Q^x}}
 \!\!\!\left [ 
  \underbrace{- \log p_0^\theta(\X_0)}_{\text{MLE of initial dist.}} + 
 \frac{1}{2} \int_0^\T \!\!\!\underbrace{ \norm{ \sigma^{-1}(\X_t, t)(s^\theta(Z_t, t) - \eta^x(Z_{[0,t]}, t))}^2 }_{\text{score matching}} 
\! \df t
\right ] +  \const,  
\eee

where %
$p_0^\theta$ is the probability density function (PDF) of the initial distribution $\P^\theta_0$. 
Therefore, $\L(\theta)$ is a sum of the negative log-likelihood of the initial distribution that encourages  $\P_0^\theta\approx \Q^\tg_0$, 
and a least squares loss between $s^\theta$ and $\eta^x$. 
See practical implementation in Section~\ref{sec:practical} and Algorithm~\ref{alg:learning}. 

\paragraph{\mrk}  
As $\P^\theta$ is Markov by the model assumption, 
it can not perfectly fit $\Q^\tg$ 
which is non-Markov in general.  
This is a substantial problem because
$\Q^\tg$ can be non-Markov \emph{even if} $\Q^x$ is Markov for all $x\in \Omega$ (see  Section~\ref{sec:markov}).   
In fact, using Doob's $h$-transform method \citep{doob1984classical}, 
$\Q^\tg$ can be shown to be the law of a diffusion process %
\bb 
\d Z_t = \eta^\tg(Z_{[0,t]}, t) \dt + \sigma(Z_t, t)\d W_t, && 
\eta^\tg(z_{[0,t]}, t) = \E_{Z\sim \Q^\tg}\left [\eta^{Z_\T}(z_{[0,t]}, t)~|~Z_{[0,t]} = z_{[0,t]} \right],  
\ee 
where $\eta^\tg$ is the expectation 
of $\eta^x$ when $x=Z_\t$ is drawn from $\Q$ conditioned on $Z_{[0,t]}$. 

We resolve this by observing that 
it is not necessary to match the whole path  measure 
($\P^\theta\approx \Q^\tg$) to match the terminal  
($\P_\t^\theta \approx  \Q_\t^\tg = \tg$). It is enough for $\P^\theta$ to be the best Markov approximation (a.k.a. \mrk) of $\Q^\tg$, 
which matches all (hence terminal) fixed-time marginals with $\Qt$: 
\bb \proj(\Qt, \mathcal M) \defeq \argmin_{\P\in \mathcal M} \KL(\Qt ~||~ \P),
&& 
\text{$\mathcal M$ = the set of all Markov processes on $[0,\t]$}.  
\ee

\begin{pro}\label{thm:markov}
The global optimum of $\L(\theta)$ in \eqref{equ:kl0} and \eqref{equ:lossscorem} is achieved by $\theta^*$ if  
\bbb \label{equ:global} 
s^\thetat(z, t) = \E_{Z\sim \Q^\tg}\left [\eta^{Z_\T}(Z_{[0,t]}, t)~|~Z_t = z \right ], && \mu^{\theta\true}(\d z_0) = \Q^\tg_0 = \E_{x\sim \tg} \left [{\Q}_0^x(\d z_0) \right ]. 
\eee  
In this case, $ \P^{\theta\true} = \proj(\Q^\tg, \mathcal M)$ is the {\mrk} of $\Qt$, 
with which it matches all time-marginals: $\P^{\theta\true}_t = \Q^\tg_t$ for all time $t\in[0,T]$. In addition, %
\bbb \label{equ:kldfdfdffd}
\KL(\tg ~||~ \P^\theta_\t) \leq 
\KL(\P^{\theta\true}~||~ \P^\theta) 
= \KL(\Q^\tg ~||~ \P^\theta) -
\KL(\Q^\tg ~||~ \P^{\theta\true})  
= \L(\theta) - \L(\theta\true). 
\eee  
\end{pro}
Note that $s^\thetat$ is a conditional expectation of $\eta^\tg$: %
$s^\thetat(z,t) = \E_{Z\sim \Q^\tg} [\eta^{\tg}(Z_{[0,t]},t)~|~Z_t = z].$ 
Theorem 1 of \cite{peluchetti2021non} 
gives a related result that 
the marginals of mixtures of Markov diffusion processes can be matched by another Markov diffusion process, but does not 
discuss the issue of {\mrk} nor connect to KL divergence. 
Theorem 1 of \cite{song2021maximum} is the special case of \eqref{equ:kldfdfdffd} when $\Q^\tg$ is Markov.

\begin{algorithm}[t] 
\caption{Learning diffusion generative models on domain $\Omega\subseteq \RR^d$} 
\begin{algorithmic}
\STATE \textbf{Input}: 
Given a dataset $\{x\datai\}$ drawn from distribution $\tg$  on domain $\Omega\subseteq \RR^d$, learn a diffusion model $\P^\theta$. 
\STATE 1) Take a baseline process $\Q$, 
find an $x$-bridge $\Q^x$ and set the model $\P^\theta$ to be  $\Omega$-bridges as in \eqref{equ:omegabridge}. 
\STATE 2) Estimate $\theta$ by minimizing $\hat \L(\theta)$ in  \eqref{equ:disc_training} with stochastic gradient descent. %
\STATE 3) Draw approximate sample from $\P^\theta$ following \eqref{equ:disc_inference}.  
\STATE 
\textbf{Remark} If we initialize $\Q^x$ from a common distribution $\mu^x=\mu$, we can set $\P_0^\theta = \mu$ and drop the initial log-likelihood term $\log p_0^\theta$ in \eqref{equ:disc_training}. 
\end{algorithmic}
\label{alg:learning}
\end{algorithm} 

\negaspace

\section{Let us Build Bridges}
\label{sec:bridges} 

\negaspace

We discuss how to build bridges, 
both $\Q^x$ as $x$-bridges and $\P^\theta$ as $\Omega$-bridges for constrained domains.  
We first derive $\Q^x$ as the conditioned process $\Q(\cdot|Z_\t=x)$ by using time reversal and $h$-transform %
(Section~\ref{sec:time}-\ref{sec:h-transform}), 
and then construct new bridges 
using mixtures of existing bridges which allows us to decouple the choice of initialization and dynamics in bridges (Section~\ref{sec:mixing})  and clarify the Markov property of the resulting $\Q^\tg$ (Section~\ref{sec:markov}). 
In Section~\ref{sec:constrained}, 
we provide a general approach for constructing $\P^\theta$ as $\Omega$-bridges for constrained domains.

\negaspace
\subsection{Bridge Construction: Time Reversal}
\label{sec:time} 
\negaspace
SMLD and DDPM can be viewed as specifying $\barQ$ via its time-reversed process that starts at time $\t$ and proceed backwards to $t=0$. 
The conditioning on $Z_\t=x$ 
can be achieved by simply initializing the reversed process from $x$. 
Specifically, $\Q^x$ is defined as the law of $Z_t^x$ whose reversed process $\rev\X_{t}^x \defeq \X_{\T-t}^x$ follows a Markov diffusion process that starts at $\rev \Z_0^x=x$: %
\bbb \label{equ:dervZdd}
\d \rev\X_t^x = \rev \eta (\rev \X_t^x, \t-t) \d t +   \sigma(\rev \X_t^x, \t-t) \d  \tilde W_t,~~~~~~~~~ \rev\X_0^x = x,
\eee 
where $\rev W$ is a standard Brownian motion. 
Using the time reversion formula \citep[e.g.,][]{anderson1982reverse},  
$Z_t^x$ follows 
\bbb \label{equ:reverse}
\d \X_t^x = \left ( - \rev \eta (\X_t^x, t)  +  \frac{\dd_z (\sigma^2( Z^x_t,  t) \q^x_{t}(  Z^x_t ))}{ q^x_{t}(  Z^x_t )} \right )\d t +   \sigma( \X_t^x,t) \d   W_t,~~~~~~~~~ \X_0^x \sim  \Q^x_{0}, 
\eee 
 where $\sigma^2 = \sigma\sigma\tt$ denotes matrix square, and 
$\Q_t^x$ and $q_t^x$ are the distribution and density function of $Z^x_t = \rev Z^x_{\t-t}$ following \eqref{equ:dervZdd},
which formula needs to be derived. 

As summarized in \cite{song2020score},  
most existing works specify \eqref{equ:dervZdd} as an Ornstein–Uhlenbeck (O-U) process of form $\d \rev Z_t = -\alpha_{\t-t} \rev Z_t \dt + \varsigma_{\t-t} \d \rev W_t$. In particular,  
SMLD \citep{song2019generative, song2020improved}  uses $\alpha_t = 0$ (Variance Exploding (VE) SDE) and DDPM uses $\alpha_t = \varsigma_t^2/2$  (Variance Preserving (VP) SDE). 

\begin{exa}\label{exa:smld}
SMLD \citep{song2019generative,song2020improved} uses %
$\d\rev Z^x_t = \varsigma_{\T-t} \d \rev W_t$. 
Let  $\beta_t = \int_0^t\varsigma_s^2 \d s$. 
Then $Z^x_t = \rev Z^x_{\T-t}$ follows
\bbb \label{equ:bmsigma}
\df \X^x_t= \eta_{\mathrm{bb},\varsigma}^x(Z^x_t, t)\dt + \varsigma_t
\df W_t, &&\text{ with~~~~ $ \eta_{\mathrm{bb},\varsigma}^x(Z^x_t, t)= \varsigma_t^2  \frac{x - \X^x_t}{\beta_\T -\beta_t}$ and  $Z^x_0 \sim \normal(x, \beta_\T)$,}
\eee 
which is a Brownian bridge (BB) process. 
A simple case is when $\varsigma_t=1$ and $\eta_{\mathrm{bb},1}^x(z,t) = \frac{x - z}{\t-t}$. \\
Because the initial distribution $\Q^x_0 = \normal(x, \beta_\t)$ depends on data $x$, the initial distribution $\P_0^\theta$ of $\P^\theta$ should in principle be learned to fit the mixture $\int \Q^x_0\tg(\dx)$. 
But as suggested in SMLD,  we can set $\P_0^\theta = \normal(0, \beta_\T)$ as an approximation when $\beta_\t$ is very large compared to the variance of the data $\tg$. 
\end{exa}

\negaspace
\subsection{Bridge Construction:  $h$-transform}
\label{sec:h-transform} 
\negaspace
The conditioned process can be derived directly without resorting to time reversal \cite{peluchetti2021non}. 
Assume $\barQ $ follows $\d Z_t = b(Z_t, t) \dt + \sigma(Z_t, t) \d W_t$. Then by using Doob’s method of $h$-transforms \citep{oksendal2013stochastic}, 
the conditioned process 
$\Q^x (\cdot)\defeq \barQ(\cdot ~|~ Z_\T = x)$, if it exists,  can be shown to be the law of %
\bbb \label{equ:Xtrw} 
\df \X^x_t 
= \left ( b(\X^x_t, t) + {\sigma^2(
\X^x_t, t) \dd_{z} \log 
q_{\T|t}(x~|~\X^x_t) } \right ) \dt + \sigma(\X_t^x,t) \df W_t,~~~~ \X_0 \sim 
\barQ_{0|T}(\cdot ~|~ x), 
 \eee  
 where %
 $q_{\T|t}(x|z)$ is the density function of the transition probability $\barQ_{\T|t}(\dx | z) = \barQ(\Z_\T \in \dx |\Z_t = z)$, assuming it exists. 
The additional drift term $\sigma^2 \dd \log q_{\T|t}(x |z)$ 
plays the role of 
steering $\Z_t$ towards the target $\Z_\T = x$. %
The initial distribution can be calculated by Bayes rule: $\barQ_{0|T}(\d z | x)  \propto \barQ_0(\d z )q_{\t|0} (x| z)$. 
We should note that the drift term in \eqref{equ:Xtrw} is independent of the initialization $\barQ_0$, which allows us to decouple in Section~\ref{sec:mixing} the choices of initialization and drift in bridges.

 \begin{exa} %
If $\barQ$ is the law of $\d \X_t = \varsigma_t \d W_t$,  
we have $\barQ_{\T|t}(\cdot | z) 
= \normal(z, \beta_\T-\beta_t)$, where $\beta_t = \int_0^t\varsigma_s^2 \d s$. Hence $\Q^x = \barQ(\cdot|Z_\T = x)$ is the law of  
\bbb \label{equ:bmsigma2}
\df \X_t^x= \eta_{\mathrm{bb},\varsigma}^x(\Z^x_t, t)\dt + \varsigma_t
\df W_t, &&\text{with~~~~ $Z^x_0\sim \Q^x_0 = 
\barQ_{0|T}(\cdot | x)$,} 
\eee 
where $\Q_0^x(\d z) \propto \barQ_0(\d z) \phi(x ~|~z, \beta_\t-\beta_t)$, and $\phi(\cdot|\mu,\sigma^2)$ is the density function of $\normal(\mu,\sigma^2).$ 
 \end{exa} 

 The $\Q^x$ in \eqref{equ:bmsigma2} shares the same drift $\eta^{x}_{\mathrm{bb},\varsigma}$ as that of SMLD in \eqref{equ:bmsigma}, but has a different initialization that depends on $\barQ_0$.  Two extreme choices of $\barQ_0$ stand out: 
 
 1) The SMLD initialization can be viewed as the case when we initialize $\barQ$ with an improper ``uniform'' prior $\barQ_0 = 1$, corresponding $\barQ_0=\normal(0,v)$ with $v \to+\infty$.  DDPM can be similarly interpreted as 
taking $\barQ$ to be a forward time O-U process $\d  Z_t = \alpha_{t}  Z_t \dt + \varsigma_{t} \d  W_t$ with an improper uniform initialization $\barQ_0=1$ 
(see %
more discussion in Appendix). 
 
2) %
Let $z_0$ be any point that can reach $Z_\t = x$ under $\barQ$ in that $x \in \supp(\barQ_{\T|0}(\cdot | z))$. If we take $\barQ_0 = \delta_{z_0}$, the delta measure centered at $z_0$, the bridge $\Q^x$ has the same deterministic initialization $\Q^x_0 = \delta_{z_0}$. 
 Hence any deterministic initialization equipped with the drift in \eqref{equ:Xtrw} yields a conditional bridge. 
 This choice is particularly convenient because $\Q^x_0$ is independent of $x$, and hence $\P^\theta$ can be initialized at $\P_0^\theta = \delta_z$ without learning. %

\negaspace 
\subsection{Bridge Construction: Mixtures}  \label{sec:mixing}
It is an immediate observation that mixtures of bridges are bridges: Let  $\Q^{z, A}$ be a set of $A$-bridges indexed by a variable $z$, then $\Q^A \defeq \int \Q^{z,A} \mu(\d z)$ is an $x$-bridge for any distribution $\mu$ on $z$.  

A special case is to take the mixture of the conditional bridges in \eqref{equ:Xtrw} starting from different deterministic initialization, which 
shows that we can obtain a valid $x$-bridge by
equipping the same drift in \eqref{equ:Xtrw} with essentially \emph{any}  initialization. 
Hence, the choices of the drift force and initialization
in $\Q^x$ can be completely decouple, 
which is not  obvious  from the time reversal framework,  
since there different dynamics (e.g., VP-SDE, VE-SDE) have to designed to obtain different  $\Q_0^x$. 

\begin{pro} 
\label{thm:mup}
Let $\tilde \Q$ %
is a path measure and $\Omega_x$ is the set of $z$ for which $  \tilde \Q^{z_0,x}(\cdot) \defeq   \tilde\Q(\cdot | Z_\t = x, Z_0 = z_0)$ exists. Then $\Q^x \defeq  \int \tilde\Q^{z_0,x}  \mu(\d z_0~|~x)$ is an $x$-bridge, for any distribution $\mu$ on $\Omega\times \Omega$.  
\end{pro}

\subsection{Markov and Reciprocal Structures of $\Q^\tg$} \label{sec:markov} 
If $\Q^x$ is constructed as $\Q^x = \Q(\cdot|Z_\t=x)$, 
 it is easy to see
 that $\Q^\tg$ is Markov iff $\Q$ is Markov.  
If $\Q^x$ is constructed from mixtures of bridges as above, 
the resulting $\Q^\tg$ is more complex.   
In fact, simply varying the initialization $\mu$ in Proposition~\eqref{thm:mup} can change the Markov structure of $\Q^\tg$. 
\begin{pro}
\label{thm:takeqxtobe}
Take $\Q^x$ to be the dynamics in \eqref{equ:bmsigma2} initialized from $\Z_0 \sim \normal(0, v_0)$. Assume $\varsigma_t>0$, $\forall t\in[0,\t]$. Then $\Q^\tg$ is Markov only when  $v_0=0$, or $v_0 = +\infty$. 
\end{pro} 

The right characterization of  $\Q^\tg$  %
 from  
  Proposition~\eqref{thm:mup} involves 
 reciprocal processes \citep{leonard2014reciprocal}. %
\begin{mydef}\label{def:reciprocal} 
A process $Z$ with law $\Q$ on $[0,T]$ is said to be reciporcal if it can be written into $\Q = \int \tilde \Q^{z_0, z_\t}\mu(\d z_0, \d z_\T)$, where $\tilde \Q$ is a Markov process and $\tilde \Q^{z_0,z_\t} =\tilde \Q(\cdot |Z_0 = z_0, Z_\t = z_\t)$, 
and $\mu$ is a probability measure on $\Omega\times \Omega.$ %
\end{mydef}

\begin{pro} \label{thm:qtgis}
$\Q^\tg$ is reciprocal iff 
$\Q^x = \int \tilde \Q^{z_0, x} \mu(\d z_0~|~x) $ for a Markov $\tilde \Q$ and 
 distribution $\mu$. %
\end{pro}

Intuitively, a reciprocal process can be viewed 
as connecting the head and tail of a Markov chain, yielding a single loop structure. 
A characteristic property is  $\Q(X_{[s,t]}\in A  ~|~ \X_{[0,s]}, \X_{[t,T]}) = \Q(X_{[s,t]}\in A ~|~ \X_s, \X_t)$, where 
$A$ is any event that occur between time $s$ and $t$. Solutions of the  Schrodinger bridge problems are reciprocal processes \citep{leonard2014reciprocal}.

\subsection{Constructing $\Omega$-Bridges for  Constrained Domains}  
\label{sec:constrained}

If $\Omega$ is a constrained domain,   
we need to specify the model $\P^\theta$ such that it is an $\Omega$-bridge for any $\theta$.  
We provide a simple method that works for 
\emph{any domain on which integration of standard Gaussian density function can be calculated.}   
An importance class is product spaces of form $\Omega = I_1 \times I_2 \times \cdots  I_d$, where $I_i$ can be discrete sets or intervals in $\RR$.  

Our method consists of two steps: 
1) we first get a baseline $\Omega$-bridge by deriving the conditioned process $\barQ(\cdot ~|~ Z_\t \in \Omega)$ from  $\Q$ which by definition is an $\Omega$-bridge;
2) we then show that add extra drifts on top of it keeps the $\Omega$-bridge property unchanged under some minor regularity condition. %

In the first step, for any $\barQ$ following $\d Z_t = b(Z_t, t)\dt + \sigma(Z_t, t) \d W_t$, 
the $h$-transform method shows that 
the conditioned process 
$\Q^\Omega  \defeq \barQ(\cdot ~|~ Z_\t \in \Omega)$ follows $\d Z_t = \eta^\Omega(Z_t,t) \dt + \sigma(\Z_t, t)\d W_t$ with 
\bb 
\eta^\Omega(z, t) 
=  b(z,t) + \sigma^2(z,t)
\E_{x\sim \barQ_{\T|t,z, \Omega}}[\dd_z \log q_{\T|t}(x~|~z)], &&  \X_0 \sim  \Q_{0|T}(\cdot ~|~ X_T\in \Omega).  
 \ee  
Its drift term is similar to that of the $x$-bridge in \eqref{equ:Xtrw}, except that $x$ is now randomly drawn from
an $\Omega$-truncated transition probability:  $\Q_{\T|t,z,\Omega}(\dx~|~z):= \Q(Z_\t = \dx~|~ \Z_t = z, Z_\t \in \Omega)$. 
As an example, assuming $\barQ $ follows $\d Z_t = \varsigma_t \d W_t$, we can show that $\Q^\Omega$ yields the following  $\Omega$-bridge: 
\bb 
\d Z_t = \eta_{\mathrm{bb}, \varsigma}^{\Omega}(Z_t,t)  \dt  + \varsigma_t\dW_t, 
&& 
\eta_{\mathrm{bb},\varsigma}^{\Omega}(z,t) 
= \varsigma_t^2 \E_{x\sim \normal_{\Omega}(z,\beta_\t-\beta_t)} \left [ \frac{x - Z_t}{\beta_\t - \beta_t} \right],
\ee 
where $\normal_{\Omega}(z,\beta_\t-\beta_t) = \mathrm{Law}(Z~|~Z\in \Omega)$ when $Z\sim \normal(\mu,\sigma)$, which is an $\Omega$-truncated Gaussian distribution. %
Hence, we can calculate $\eta_{\mathrm{bb},\varsigma}^{\Omega}$ once we can evaluate the expectation of $\normal_{\Omega}(z,\beta_\t-\beta_t)$. 
A general case is when $\Omega = I_1\times \cdots I_d$, for 
which the expectation reduces to one dimensional Gaussian integrals; %
see Appendix for details.  %

In the second step, given  an $\Omega$-bridge $\Q^\Omega$, 
we construct a parametric model $\P^\theta$ by 
adding a learnable neural network $f^\theta$ in the drift and (optionally) starting from a learnable initial distribution $\mu^\theta$: 
\bbb \label{equ:omegabridge}
\P^\theta \colon && \d Z_t = 
(\sigma(Z_t,t)f^\theta(Z_t,t)+ \eta^{\Omega}(Z_t, t)) \dt +  \sigma(Z_t, t) \d W_t, ~~~~ Z_0 \sim \P_0^\theta. 
\eee 
\begin{pro}\label{thm:foranyqomega}
For any $\Q^\Omega$ following $\d Z_t = \eta^\Omega(Z_t,t)\dt + \d W_t$ that is an $\Omega$-bridge, the  $\P^\theta$ in \eqref{equ:omegabridge} 
is also an $\Omega$-bridge if $\E_{Z\sim \barQ^\Omega}[\int_0^\t\norm{f^\theta(Z_t, t)}_2^2\dt ]<+\infty$ and $\KL(\Q^\Omega_0~||~\P^\theta_0) < +\infty$. 
\end{pro}
The condition on $f^\theta$ is very mild, 
and it is satisfied if $f^\theta$ is bounded, as is the case for most neural networks. 
Further, using the mixture of initialization argument in Section~\ref{sec:mixing}, 
we can set the initialization $\P_0^\theta$ to be any 
distribution supported on the set of points that can reach $\Omega$ following $\Q$ (precisely, points $z_0$ that satisfy $\Omega \cap \supp(\Q_\t(\cdot|Z_0 =z_0))\neq \emptyset$).

\section{Practical Algorithms and Error Analysis}\label{sec:practical}
In practice, %
we need to introduce empirical and numerical approximations 
in both training and inference phases. 
Denote by $\tau = \{\tau_i\}_{i=1}^{K+1}$ a grid of time points with $0 = \tau_1 < \tau_2 \ldots < \tau_{K+1}=\t$. 
During training, we minimize  an empirical and time-discretized surrogate of $\L(\theta)$ as follows 
\bbb \label{equ:disc_training} 
\hat \L(\theta) = 
\frac{1}{n}\sum_{i=1}^n
\ell(\theta; Z\datai, \tau\datai), && 
\ell(\theta; Z, \tau)
\defeq - \log p_0^\theta(Z_0) + 
\frac{1}{2K}\sum_{k=1}^K\Delta(\theta; Z, \tau_k), 
\eee  
where $\Delta(\theta; Z, t)\defeq \norm{ \sigma^{-1}(\X_t, t)(s^\theta(Z_t, t) -\eta^x(Z_{[0,t]}, t))}^2$, and 
$\{Z\datai\}$ is drawn from $\Q^\tg$, and $\tau\datai$ can be either a deterministic uniform grid of $[0,\t]$, i.e., $\tau\datai= \{i/K\}_{i=0}^{K}$, or drawn i.i.d. uniformly on $[0,\t]$ (see e.g.,\cite{song2020score, ho2020denoising}).  %
A subtle problem here is that the variance of 
 $\Delta(\theta; Z, t)$ grows to infinite  as $t\uparrow \T$. 
Hence, we should not include $\Delta(\theta; Z, \t)$ at the end point $\tau^{K+1} = \t$ into the sum in the loss $\ell(\theta,Z,\tau)$ to avoid variance exploding.  %

In the sampling phase, the continuous-time model $\P^\theta$
should be approximated numerically.  
A standard approach is  the Euler-Maruyama method, 
which simulates the trajectory on a time grid $\tau$ by 
\bbb \label{equ:disc_inference}
\hat \X_{\tau_{k+1}} = \hat \X_{\tau_k} + \epsilon_k  s^\theta(\hat \X_{\tau_k}, \tau_k) + \sqrt{\epsilon_k} \sigma(\hat \X_{\tau_k}, \tau_k)  \xi_{k},
&& \epsilon_k = \tau_{k+1} - \tau_k, && \xi_{k}\sim \normal(0, I_{d}),
\eee  
The final output is $\hat \X_\t$. %
The following result shows 
the KL divergence between  $\tg$ and the distribution of $\hat Z_\t$ can be bounded by the sum of the step size and the expected optimality gap $\E[\hat \L(\theta) - \hat\L(\theta\true)]$ of the time-discretized loss in \eqref{equ:disc_training}.

\begin{pro}%
\label{thm:disc}
Assume $\Omega=\RR^d$ and  $\sigma(z, t) = \sigma(t)$ is state-independent. 
Take the uniform time grid $\tau^{\mathrm{unif}} \defeq \{i\epsilon\}_{i=0}^K$ with step size $\epsilon=\t/K$ in %
the sampling step \eqref{equ:disc_inference}. 
Assume $\sigma(t)>c>0$, $\forall t$ and $\sigma(t)$ is piecewise constant w.r.t. time grid $\tau^{\mathrm{unif}}$. 
Let $\L_\epsilon(\theta) = \E_{Z\sim \Q^\tg}[\ell(\theta; Z, \tau^{\mathrm{unif}})].$ 
Let $\P^{\theta, \epsilon}_\T$ be the distribution of the resulting sample $\hat \X_\t$. 
Let  $\theta\true$ be an optimal parameter satisfying \eqref{equ:global}.  Assume 
$C_0\defeq \sup_{z,t}\left (\norm{s^\thetat(z,t)}^2/(1+\norm{z}^2),~ \trace(\sigma^2(z,t)),~  \E_{\P^{\theta^*}}[\norm{Z_0}^2]\right) <+\infty$, and %
$\norm{s^\thetat(z,t) - s^\thetat (z',t')}^2_2 \leq L\left ( 
\norm{z -z'}^2 + \abs{t - t'} \right )$ for $\forall z,z'\in \RR^d$ and $t,t'\in[0,\t]$. 
Then %
$$
\sqrt{\KL(\tg~||~\P^{\theta,\epsilon}_\t )} \leq 
\sqrt{\L_\epsilon(\theta) - \L_\epsilon(\theta\true)}  + \bigO{\sqrt{\epsilon}}. 
$$
\end{pro}
\negaspace
To provide a simple analysis of the statistical error, we assume that  $\hat \theta_n = \argmin_\theta \hat \L(\theta)$ 
is an asymptotically normal M-estimator of $\theta\true$ following classical asymptotic statistics  \cite{van2000asymptotic},
with which we can estimate the rate of the excess risk $\L_\epsilon(\hat \theta_n) - \L_\epsilon(\theta\true)$ and hence the KL divergence. 

\begin{pro}%
\label{thm:asymptotic}
Assume the  conditions in Proposition~\eqref{thm:disc}. 
Assume $\hat \theta_n = \argmin_{\theta} \hat \L_\epsilon(\theta)$ with $\hat \L_\epsilon(\theta)=\sum_{i=1}^n \ell(\theta; Z\datai, \tau^{\mathrm{unif}})/n$, $Z\datai\sim \Q^\tg$. 
Take $\Q^x$ to be the standard Brownian bridge $\d Z^x_t = \frac{x-Z^x_t}{\T-t}\dt + \d W_t$ with $Z_0\sim \normal(0,v_0)$ and $v_0 >0$. 
Assume 
$
\sqrt{n}(\hat \theta_n-\theta\true) \dto 
\normal(0, \Sigma_*)$ as $n\to+\infty$, where $\Sigma_*$ is the asymptotic covariance matrix of the M estimator $\hat \theta_n$. 
Assume $\L_\epsilon(\theta)$ is second order continuously differentiable and strongly convex at  $\thetat$. 
Assume $\tg$ has a finite covariance and admits a density function $\pi$ 
that satisfies %
$\sup_{t\in[0,\t]}\E_{\Q^\tg}\left [\norm{\dd_\theta s^{\thetat}(Z_t, t)}^2(1+\norm{\dd\log\pi(Z_T)}^2+\trace(\dd^2\log \pi(Z_T)))
\right ]<+\infty$.
We have %
\bbb \label{equ:eklpi}
\E\left[\sqrt{\KL(\tg ~||~ \P_\t^{\hat \theta_n, \epsilon})}~ \right ] = \bigO{ \sqrt{\frac{\log (1/\epsilon)+1}{n}} + \sqrt{\epsilon} }.  
\eee 
\end{pro}
The expectation in Eq.~\ref{equ:eklpi} is w.r.t. the randomness of $\hat\theta_n$. 
The $\log (1/\epsilon)$ factor shows up as the sum of a harmonic series as the variance  of $\Delta(\theta; Z, t)$ grows 
with $\bigO{1/(\t-t)}$ when $t\uparrow \t$. %
Taking $\epsilon = 1/n$ yields  ${\KL(\tg ~||~ \P_\t^{\hat \theta_n, \epsilon})} = \bigO{\log n/n}$. 
If we want to achieve $\KL(\tg ~||~ \P_\t^{\hat \theta_n, \epsilon}) = \bigO{\eta}$, it is sufficient to take $K=\t/\epsilon = \bigO{1/\eta}$ steps 
and $n = \O(\log(1/\eta)/\eta)$ data points.

\section{Related Works}
Given that we pursuit an re-examination of a now popular framework,  
it is not surprising to share common findings with existing works. 
The time reversal method \citep{song2020score} amounts to a special approach to constructing bridges, but has the conceptual and practical disadvantage of entangling the choice of bridge dynamics and initialization. 
The $x$-bridge part of our framework overlaps  
with an independent work by \citet{peluchetti2021non}, which discusses a similar diffusion generative learning framework based on diffusion bridges and use $h$-transform to construct bridges in lieu of time reversal.  
We provide a complete picture of a more general framework 
based on maximum likelihood principle 
amendable to the first asymptotic error analysis,  %
with clarify of subtle issues of dropping E-step, initialization and  {\mrk}, extension to structured domains with $\Omega$-bridges.  
A number of works 
\citep[e.g.,][]{song2021maximum, huang2021variational} have approached diffusion models 
via the variational inference on stochastic processes, which is closely connected to our approach.  

A different approach is based on Schr\"odinger bridge~\citep{wang2021deep}, 
which, however, leads to more complicated algorithms that require iterative proportional fitting procedures. %
\cite{tzen2019theoretical} discusses 
sampling and inference in diffusion generative models through the lens of stochastic control, but does not touch learning.

\section{Experiments}
We evaluate our algorithms for generating 
 integer-valued point clouds, categorical semantic segmentation maps,  discrete and continuous CIFAR10 images. %
We observe that  1) our method provides a particularly attractive and superb approach to generating data from various discrete domains, 
2) our method shows significant advantages even on 
standard continuous data in terms of fast generation using very small number of diffusion steps. 

\begin{center}
\begin{minipage}{\textwidth}
\vspace{-10pt}
  \begin{minipage}{0.45\textwidth}
    \centering
    \includegraphics[width=6cm]{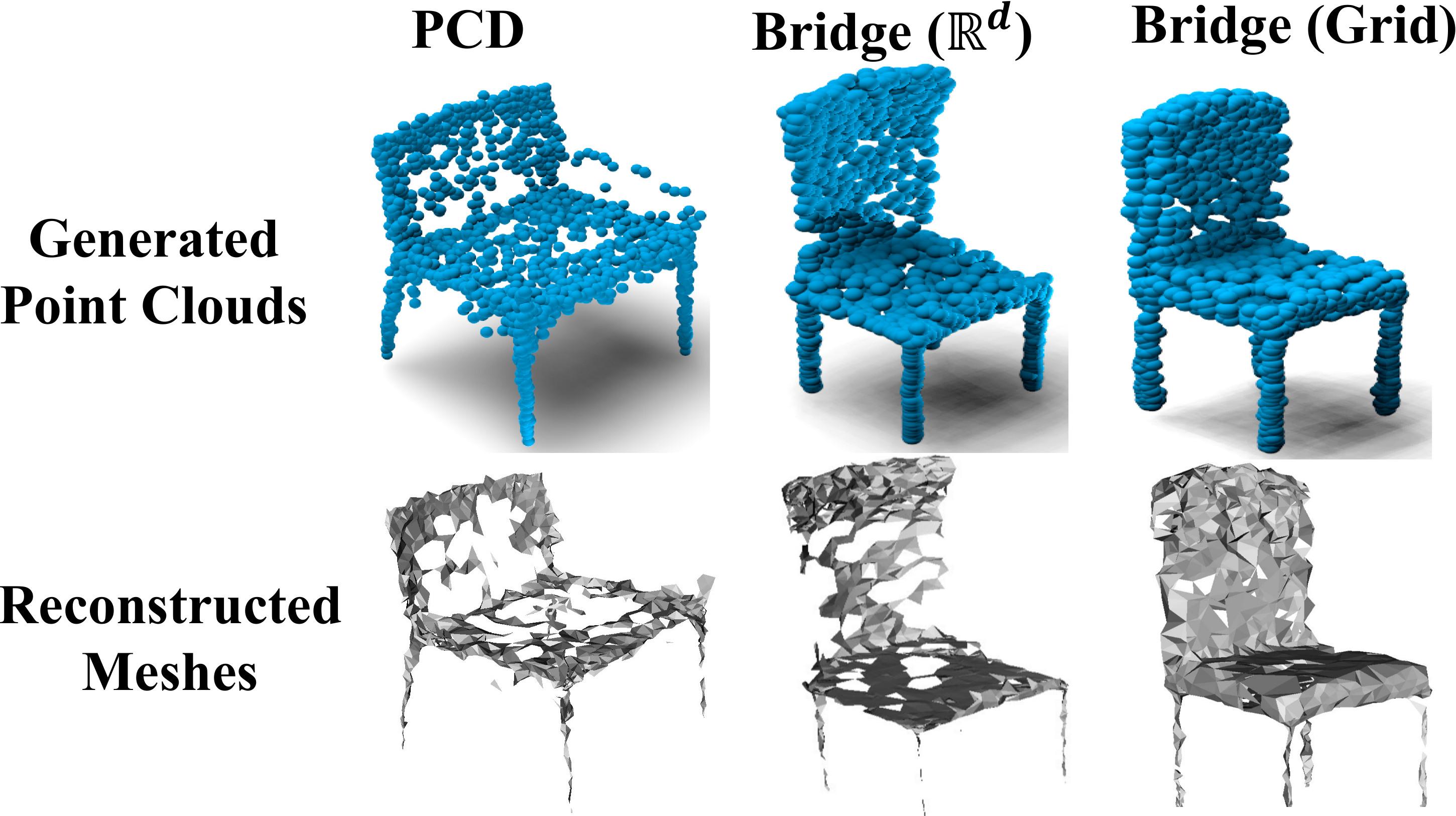}
  \end{minipage}
  \hfill
  \begin{minipage}{0.5\textwidth}
    \vspace{+8pt}
    \centering
    \renewcommand\arraystretch{1.8}
    \setlength{\tabcolsep}{1mm}
    \small{
    \begin{tabular}{c|ccc}
    \hline\hline
      Method & MMD $\downarrow$ & COV $\uparrow$ & 1-NNA $\downarrow$  \\ \hline
        PCD~\citep{luo2021diffusion} & 13.37 & 46.60 & 58.94\\
        Bridge ($\R^d$) & 13.30 & 46.52 & 59.32 \\
        Bridge (Grid) & \textbf{12.85} & \textbf{47.78} & \textbf{56.25} \\
        \hline \hline
      \end{tabular}}
      \vspace{5pt} 
    \captionsetup{labelformat=empty}
    \captionof{table}{}
    \label{tab:pc}
    \end{minipage}
  \end{minipage}
  \vspace{-10pt}
  \captionsetup{labelformat=andtable}
  \captionof{figure}{
  The point clouds (upper row) generated by different methods and meshes reconstructed from them (lower row).  
    Bridge (Grid) obtains more uniform points and hence better mesh thanks to the integer constraints.  
    Table shows the standard evaluation metrics for point cloud generation (numbers multiplied by $10^3$).} 
    \label{fig:pc}
\end{center}

\begin{figure}[h]
    \centering
    \vspace{-10pt}
    \includegraphics[width=1.0\textwidth]{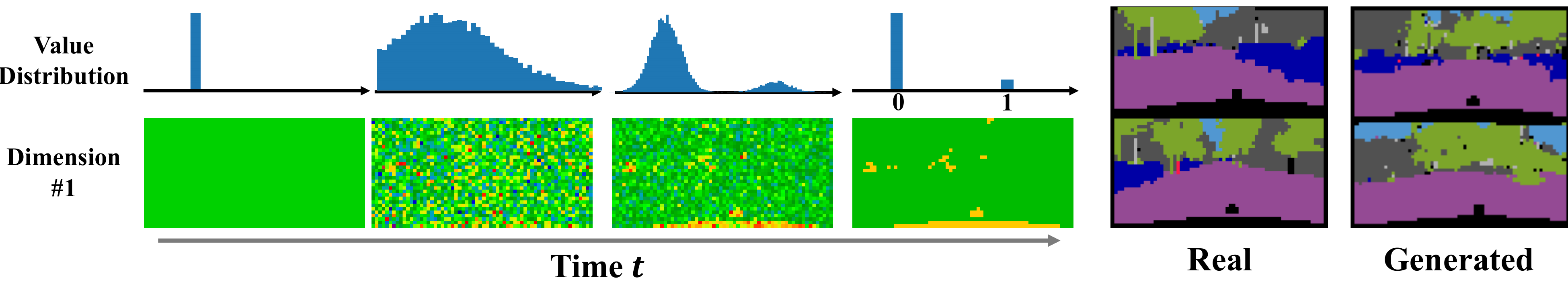}
    \caption{
    Results on generating categorical segmentation maps.
    Each pixel here an \texttt{one-hot} vector. 
    Each dimension of the $\Omega$-bridge starts from a deterministic and evolve through a stochastic trajectory to converge to either $0$ or $1$. 
    The generated samples have similar visual quality to the training data.
    }
    \label{fig:segmentation}
\end{figure}

\textbf{Generating Integer-valued Point Clouds}~~~ 
A feature of point clouds in 3D objects in graphics 
is that they tend to distribute even, 
especially if they are discretized from a mesh. 
This aspect is omitted in most existing works on point cloud generation. As a result they tend to generate non-uniform points 
that are unsuitable for real applications, which often involve converting back to meshes with procedures like Ball-Pivoting~\citep{bernardini1999ball}.   
We apply our method to generate point clouds that constrained on a integer grid which we show yields much more uniformly distributed points. 
To the best of our knowledge, we are the first work on integer-valued 3D point cloud generation.

A point cloud is a set of points  $\{x_i\}_{i=1}^m$, $x_i \in\RR^3$ in  the 3D space, where $m$ refers to the number of points. We apply two variants of our method: 
Bridge ($\R^d$) and Bridge (Grid). 
Both of the bridges use the process $\Q: \d  \Z_t = \d W_t$
starting at $\Z_0 = 0$, but on different domain $\Omega$.  
Bridge ($\R^d$) generates points in the continuous 3D space, i.e., $\Omega = \R^{3m}$. Bridge (Grid) generate points that on integer grids, $\Omega = \{1,\ldots,128\}^{3m}$. 
We test our method on 
ShapeNet~\citep{chang2015shapenet} chair models, and compare it with Point Cloud Diffusion  (PCD)~\citep{luo2021diffusion}, a state-of-the-art continuous diffusion-based  generative model for point clouds. The neural network $f^\theta$ in our methods are the same as that of PCD for fair comparison. %
Qualitative results and quantitative results are shown in Figure~\ref{fig:pc} and Table~\ref{tab:pc}.
As common practice~\citep{luo2021diffusion, luo2021score}, 
we measure minimum matching distance (MMD),  coverage score (COV) and 1-NN accuracy (1-NNA)  using Chamfer Distance (CD) with the test dataset.

\begin{minipage}{0.48\textwidth}
\centering
\renewcommand\arraystretch{1.05}
\setlength{\tabcolsep}{1mm}{
\scriptsize{
\resizebox{0.99\textwidth}{!}{
\begin{tabular}{l|cc}
    \hline \hline
    Methods & ELBO ($\downarrow$) & IWBO ($\downarrow$) \\ \hline
    Uniform Dequantization~\cite{uria2013rnade} & 1.010 & 0.930  \\ 
    Variational Dequantization~\cite{ho2019flow++} & 0.334 & 0.315 \\
    Argmax Flow (Softplus thres.)~\cite{hoogeboom2021argmax} & 0.303  & 0.290 \\
    Argmax Flow (Gumbel distr.)~\cite{hoogeboom2021argmax} & 0.365 & 0.341\\
    Argmax Flow (Gumbel thres.)~\cite{hoogeboom2021argmax} & 0.307 & 0.287 \\
    Multinomial Diffusion~\cite{hoogeboom2021argmax} & 0.305 & - \\ \hline
    Bridge-Cat. (Constant Noise) & 0.844 & 0.707 \\
    Bridge-Cat. (Noise Decay A) & \textbf{0.276} & \textbf{0.232} \\
    Bridge-Cat. (Noise Decay B) & 0.301  & 0.285 \\
    Bridge-Cat. (Noise Decay C) & 0.363   & 0.302 \\
    \hline \hline
\end{tabular}}}}
    \captionof{table}{Results on the CityScapes dataset.}
    \label{tab:segmentation}
\end{minipage}
\hfill
\begin{minipage}{0.48\textwidth}
    \centering
    \renewcommand\arraystretch{1.05}
    \setlength{\tabcolsep}{1mm}{
    \scriptsize{
    \resizebox{0.99\textwidth}{!}{
    \begin{tabular}{l|ccc}
    \hline \hline
    Methods & IS ($\uparrow$) & FID ($\downarrow$)  & NLL ($\downarrow$)  \\ \hline 
    \multicolumn{1}{l}{\textbf{Discrete}} & \multicolumn{1}{c}{} & & \\ \hline \hline
    D3PM uniform $L_{vb}$~\cite{austin2021structured} & 5.99 & 51.27 & 5.08 \\
    D3PM absorbing $L_{vb}$~\cite{austin2021structured} & 6.26 & 41.28 & 4.83 \\
    D3PM Gauss $L_{vb}$~\cite{austin2021structured} & 7.75 & 15.30 & 3.966 \\ 
    D3PM Gauss $L_{\lambda=0.001}$~\cite{austin2021structured} & 8.54 & 8.34 & 3.975 \\
    D3PM Gauss + logistic $L_{\lambda=0.001}$ & 8.56 & 7.34 & 3.435 \\\hline
    Bridge-Integer (Init. A) & \textbf{8.77} & \textbf{6.77} & 3.46 \\
    Bridge-Integer (Init. B) & 8.68 & 6.91 & \textbf{3.35} \\
    Bridge-Integer (Init. C) & 8.72 & 6.94  & 3.40 \\
    \hline \hline
    \end{tabular}}}}
    \captionof{table}{Discrete CIFAR10 Image Generation}
    \label{tab:discrete_cifar}
\end{minipage}

\textbf{Generating Semantic Segmentation Maps on CityScapes}~~~
\label{sec:segmentation}

\begin{wrapfigure}[24]{r}{0.5\textwidth}
    \includegraphics[width=0.5\textwidth]{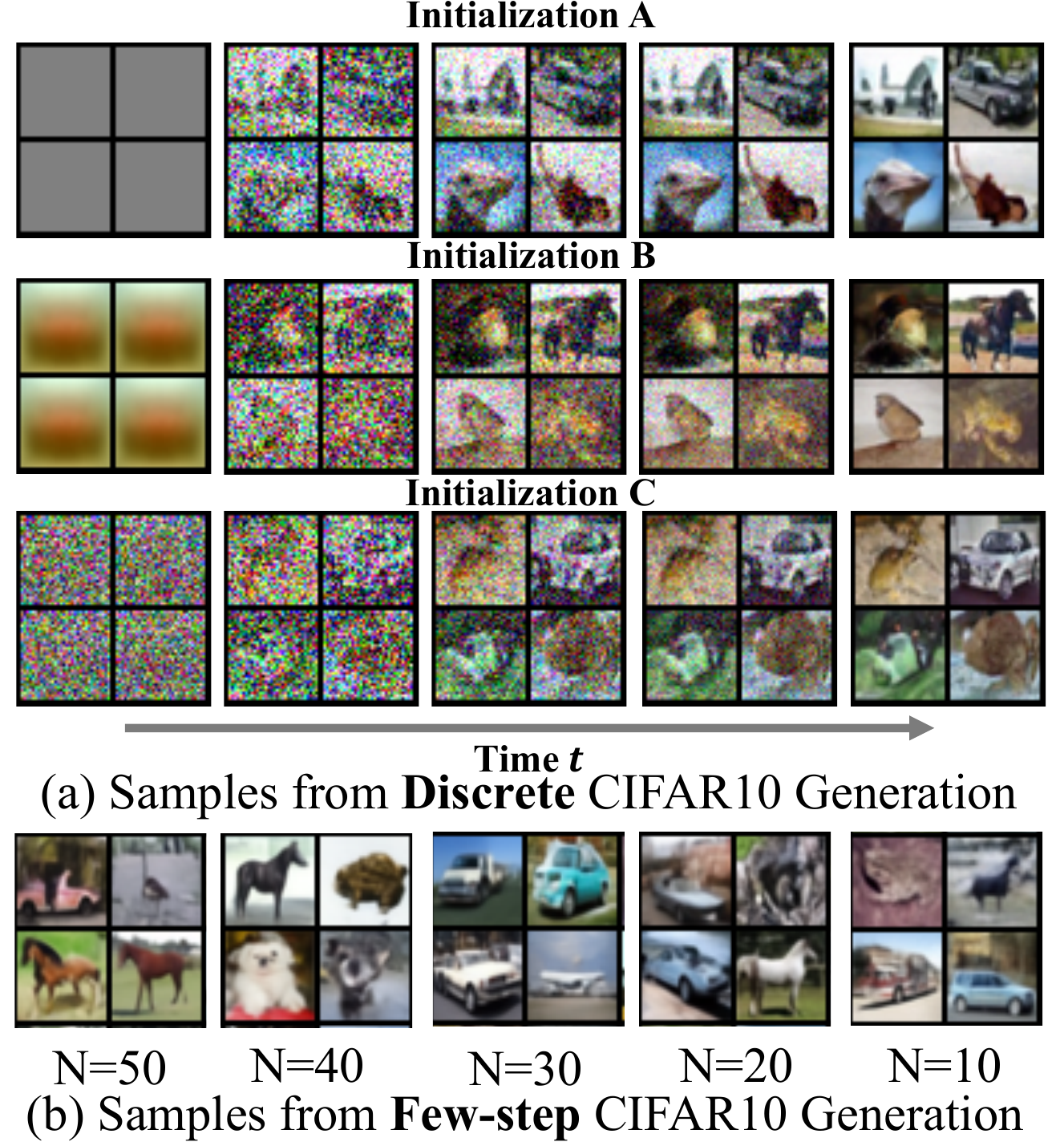}
  \caption{(a) Bridges can generate high-quality discrete samples with different initialization distribution. (b) The model can generate recognizable images even when the number of iterations is small.}
  \label{fig:cifar_imgs}
\end{wrapfigure}
We consider {unconditionally} generating categorical semantic segmentation maps. 
We represent each pixels as  a \texttt{one-hot} categorical vector.  
Hence 
the data domain is $\Omega = \{e_1,\ldots, e_c\}^{h\times w}$, where $c$ is the number of classes and $e_i$ is the $i$-th $c$-dimensional one-hot vector, and $h,w$ represent the height and width of the image. In CityScapes~\cite{Cordts2016Cityscapes}, $h=32, w=64, c=8$. 
We test a number of bridge models with $\Q: \d \Z_t = \varsigma_t \d W_t$ starting at the uniform point $Z_0 = 1/c$, with different schedule of the diffusion coefficient $\varsigma_t$, including   \emph{(Constant Noise)}: $\varsigma_t=1$;  \emph{(Noise Decay A)}: $\varsigma_t=a\exp(-bt)$;  \emph{(Noise Decay B)}: $\varsigma_t=a(1-t)$; \emph{(Noise Decay C)} $\varsigma_t = a - a\exp(-b(1-t))$. Here $a$ and $b$ are  hyper-parameters. 
We measure the negative log-likelihood (NLL) of the test set using the learned models. The NLL (bits-per-dimension) is estimated with evidence lower bound (ELBO) and importance weighted bound (IWBO)~\cite{burda2016importance}, respectively.
The results are shown in Figure~\ref{fig:segmentation} and Table~\ref{tab:segmentation}. 

\begin{table}[b]
    \centering
    \scriptsize{
    \begin{tabular}{l|c|ccccc}
    \hline \hline
    Methods & $K= 1000$ & $K=50$ & $K=40$ & $K=30$ & $K=20$ & $K= 10$ \\ \hline
    DDPM & 3.37 & 37.96 & 95.79 & 135.23 & 199.22 & 257.78 \\ 
    SMLD & \textbf{2.45} & 140.98 & 157.67 & 169.62 & 267.21 & 361.23 \\ \hline
    Bridge & 9.80 & 18.55 & 19.11 & 21.14 & 24.93 & 34.97 \\
    Bridge (Init. C) & 9.65 & \textbf{17.91} & \textbf{18.71} & \textbf{20.31} & \textbf{24.12} & \textbf{33.38} \\
    \hline \hline
    \end{tabular}}
    \caption{
    Results on continuous CIFAR10 generation when varying the number of diffusion steps in both training and testing. 
    Our method shows significant advantages in regime of small diffusion steps ($K\leq 50$). 
    }
    \label{tab:fewstep}
\end{table}

\textbf{Generating Discrete CIFAR10 Images}~~~
In this experiment, we apply three types of bridges. 
All of these bridges use the same output domain $\Omega=\{0,\ldots, 255\}^{h\times w\times c}$, %
where $h, w, c$ are the height, width and number of channels of the images, respectively. 
We set $\Q$ to be Brownian motion with the Noise Decay A in Section~\ref{sec:segmentation}, that is, $\Q : \d Z_t=\varsigma_t \d W_t$, where  $\varsigma_t=a \exp(-bt)$. We consider different initializations of $\Q$: 
\emph{(Init. A)} $Z_0 = {128}$;  \emph{(Init. B)} $Z_0 ={\hat \mu_0}$, \emph{(Init. C)} $Z_0 \sim \mathcal{N}(\hat \mu_0, \hat \sigma_0)$, where $\hat \mu_0$ and $\hat \sigma_0$ are the empirical mean and variance of pixels in the CIFAR10 training set. 
We compare with the variants of a state-of-the-art discrete diffusion model, D3PM~\citep{austin2021structured}. For fair comparison, we use 
the DDPM backbone~\citep{ho2020denoising} as the neural drift $f^\theta$ in our method, similar to D3PM. 
We report the Inception Score  (IS)~\cite{salimans2016improved}, Fréchet Inception Distance (FID)~\cite{heusel2017gans} and negative log-likelihood (NLL) of the test dataset. 
The results are shown in Table~\ref{tab:discrete_cifar} and Figure~\ref{fig:cifar_imgs}. 

\vspace{10pt}
\textbf{Generating Continuous CIFAR10 Images with Few-Step Diffusion Models}~~~
In this experiment, we consider training diffusion models with very few sampling steps to generate continuous CIFAR10 images. 
For bridge, we use $\Q: \Z_t = \d W_t$ initialized from $Z_0={0.5}$. %
For SMLD, we use the implementation of NCSN++ in~\citep{song2020score}. For DDPM, we use their original configuration. We use the DDPM backbone. 
We train the models with $K=10,20,30,40,50$ diffusion steps. Note that this is different from training with $K=1000$ steps, then sampling with fewer steps. Because in the latter case, the neural network is trained on more time steps which are unnecessary when sampling. This could hurt performance. The results are shown in Table~\ref{tab:fewstep} and Figure~\ref{fig:cifar_imgs}.

\section{Conclusion and Limitations}
We present a framework for learning diffusion generative models %
that enables both theoretical analysis and algorithmic extensions to structured data domains.  It leaves a number of directions for further explorations and improvement. 
For example, the practical impact of the choices of the bridges $\Q$, in terms of initialization, dynamics, and noise schedule, 
are still not well understood and need more systematical studies. 
The current error analysis works in the classical finite dimensional asymptotic regime and does not consider optimization error, 
one direction is to extend it to 
high dimensional and 
non-asymptotic analysis and consider the training dynamics with stochastic gradient descent equipped with neural network architectures, using techniques such as neural tangent kernels. 

\clearpage
\bibliography{reference}

\clearpage

\onecolumn
\appendix
\section{Appendix} 

\subsection{Derivation of the main loss in Equation~(\ref{equ:lossscorem})} 
\begin{proof}[Proof of Equation~\eqref{equ:lossscorem}]
Denote by $\Q^x = \Q(\cdot | Z_\tau = x)$. 
Note that %
\bb 
\KL(\Q^\tg ~||~ \P^\theta) 
& = 
\E_{x \sim \tg, Z\sim \Q^x} \left [  \log \frac{\d\Qt}{\d \P^\theta}(Z)\right ]\\
& = 
\E_{x \sim \tg, Z\sim \Q^x} \left [  \log \frac{\d\Q^x}{\d \P^\theta}(Z) + \log \frac{\d \Qt}{\d \Q^x}(Z)\right ] \\
& = \E_{x\sim \Pi^*}\left [ \KL(\Q^x ~||~ \P^\theta) \right ] +\const, 
\ee 
where $\const$ denotes a constant that is independent of $\theta$. 
Recall that $\Q^x$ follows 
$\d Z_t = \eta^x(Z_{[0,t]}, t) \dt + \sigma(Z_t, t) \d W_t$,
and $\Pt$ follows 
$\d Z_t = s^\theta(Z_t, t) \dt + \sigma(Z_t, t) \d W_t$. %
By Girsanov theorem \citep[e.g.,][]{lejay2018girsanov}, 
\bb 
 \KL(\Q^x ~||~ \P^\theta)  
 & = \KL(\Q^x_0 ~||~ \P^\theta_0) 
 + \frac{1}{2}  \E_{Z\sim \Q^x}
 \left [
 \int_0^\t \norm{s^\theta(Z_t, t) - \eta^x(Z_{[0,t]}, t)}^2_2  \dt 
 \right ] \\ 
 & = 
   \E_{Z\sim \Q^x}\left [  
 -\log p_0^\theta(Z_0) + 
 \frac{1}{2} \int_0^\t \norm{s^\theta(Z_t, t) - \eta^x(Z_{[0,t]}, t)}^2_2  \dt 
 \right ]  + \const.  
\ee 
Hence 
\bb 
L(\theta)
& =  \E_{x\sim \tg, Z\sim \Q^x}\left [  
 -\log p_0^\theta(Z_0) +  \frac{1}{2} \int_0^\t \norm{s^\theta(Z_t, t) - \eta^x(Z_{[0,t]}, t)}^2_2  \dt 
 \right ]  + \const  \\ 
 & =  \E_{Z\sim \Qt}\left [  
 -\log p_0^\theta(Z_0) + 
  \frac{1}{2} \int_0^\t \norm{s^\theta(Z_t, t) - \eta^{Z_\t}(Z_{[0,t]}, t)}^2_2  \dt 
 \right ]  + \const . 
\ee 
\end{proof}

\subsection{Derivation of the drift $\eta^\tg$ of $\Qt$}
\begin{lem}
Let $\Q^x$ is the law of 
$$
\d Z^x_t =  \eta^x(Z^x_{[0,t]}, t) \dt + \sigma(Z^x_t, t) \dW_t, ~~~ \Z_0 \sim \Q^x_0, 
$$
and $\Q^\tg \defeq \int \Q^x(Z) \tg(\dx )$ for a distribution  $\tg$ on $\RR^d$. 
Then  $\Q^\tg$ is the law of  
$$
\d Z_t =  \eta^\tg(Z_{[0,t]}, t) \dt + \sigma(Z_t, t) \d W_t, ~~~~ \Z_0 \sim \Q_0^\tg,
$$
where 
\bb 
\eta^\tg 
(z_{[0,t]}, t) = \E_{x\sim \tg, Z\sim \Q^x}
[\eta^x(Z_{[0,t]}, t) ~|~ Z_{[0,t]} =z_{[0,t]}], &&
\Q^\tg_0(\d z_0) = \E_{x\sim \tg} [\Q^x_0(\d z_0 )]. 
\ee 
\end{lem}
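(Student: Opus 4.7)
My plan is to verify the claim by the martingale-problem characterization of SDEs: it suffices to show that under $\Q^\tg$, the process $M_t \defeq Z_t - Z_0 - \int_0^t \eta^\tg(Z_{[0,s]}, s)\, ds$ is a martingale with quadratic variation $[M]_t = \int_0^t \sigma\sigma^\top(Z_s, s)\, ds$ with respect to the natural filtration $(\mathcal F_t)$ of $Z$, and that $Z_0$ has the claimed initial law. The SDE form then follows from a standard martingale representation on a (possibly enlarged) probability space.

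The initial distribution is immediate from $\Q^\tg = \int \Q^x\, \tg(\d x)$, since marginalizing to time $0$ gives $\Q^\tg_0 = \int \Q^x_0\, \tg(\d x)$. For the martingale step, I would fix $0 \le s < t$ and a bounded $\mathcal F_s$-measurable test functional $\phi$, and compute
\begin{align*}
\E_{\Q^\tg}\!\left[\phi\cdot(Z_t - Z_s)\right]
&= \int \E_{\Q^x}\!\left[\phi\cdot(Z_t - Z_s)\right] \tg(\d x)
 = \int \E_{\Q^x}\!\left[\phi \int_s^t \eta^x(Z_{[0,u]}, u)\, du\right] \tg(\d x),
\end{align*}
where the second equality uses that $Z_t - Z_s - \int_s^t \eta^x\, du$ is a $\Q^x$-martingale (by the SDE defining $\Q^x$ together with an integrability assumption on $\eta^x$, which one assumes throughout). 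Then Fubini swaps the time and $x$ integrals, and the tower property gives
\begin{align*}
\int_s^t \E_{\Q^\tg}\!\left[\phi\, \eta^X(Z_{[0,u]}, u)\right] du
 = \int_s^t \E_{\Q^\tg}\!\left[\phi\, \E\bigl[\eta^X(Z_{[0,u]},u)\,\big|\,Z_{[0,u]}\bigr]\right] du
 = \E_{\Q^\tg}\!\left[\phi \int_s^t \eta^\tg\, du\right],
\end{align*}
where $X$ is the latent end-point variable with $X \sim \tg$ and $Z \mid X \sim \Q^X$. This yields $\E_{\Q^\tg}[\phi\,(M_t - M_s)] = 0$, proving the martingale property.

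For the quadratic variation, I would observe that $[Z]_t$ is defined pathwise as a limit in probability over refining partitions, and under each $\Q^x$ one has $[Z]_t = \int_0^t \sigma\sigma^\top(Z_s,s)\, ds$ almost surely; hence this identity holds $\Q^\tg$-almost surely as well. Combining the martingale property of $M$ with its quadratic variation and applying the standard representation of continuous local martingales as stochastic integrals against a Brownian motion (enlarging the probability space if $\sigma$ is degenerate), $Z$ is seen to be a weak solution of the claimed SDE, and the initial law identification above closes the argument. The main delicate point is justifying the Fubini swap and the tower identity, which requires that $x\mapsto \Q^x$ be a measurable kernel and that $\eta^x$ be jointly measurable and suitably integrable under $\tg\otimes \Q^x$; these are implicit regularity hypotheses on the family of bridges and are mild in all the concrete constructions used in the paper.
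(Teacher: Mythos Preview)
Your argument is correct: verifying the martingale problem for the mixture and reading off the drift via the tower property is a clean and standard route, and the pathwise identification of the quadratic variation passes to the mixture exactly as you say. The paper, however, proves the lemma by a different (variational) device: it notes that $\Q^\tg = \argmin_{\P}\KL(\Q^\tg\,\|\,\P)$, writes $\KL(\Q^\tg\,\|\,\P)=\E_{x\sim\tg}[\KL(\Q^x\,\|\,\P)]+\const$, and then uses Girsanov to express each $\KL(\Q^x\,\|\,\P)$ as an initial-law KL plus a time-integrated, $\sigma^{-1}$-weighted $L^2$ distance between drifts; minimizing this quadratic functional over the drift of $\P$ immediately gives $\eta^\tg$ as the conditional expectation and $\P_0=\E_{x\sim\tg}[\Q^x_0]$. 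Your route is more elementary and does not lean on Girsanov or on invertibility of $\sigma$, so it applies slightly more broadly; the paper's route has the advantage of making the least-squares structure of $\eta^\tg$ explicit and of reusing the same Girsanov machinery that drives the loss derivation in \eqref{equ:lossscorem} and the Markovianization argument in Proposition~\ref{thm:markov}.
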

\begin{proof}
$\Q^\tg$ is the solution of the following optimization problem: 
$$
\Q^\tg = \argmin_{\P}  
\left \{ \KL(\Q^\tg~||~ \P)  = 
\E_{x\sim \tg}[\KL(\Q^x ~||~ \P)] + \const \right\}.
$$
By Girsanov's Theorem \cite[e.g.,][]{lejay2018girsanov}, 
any stochastic process $\P$ that has $ \KL(\Q^x~||~ \P)<+\infty$
(and hence is equivalent to $\Q^x$) has a form of $\d Z_t =  \eta^\tg(Z_{[0,t]}, t) \dt + \sigma(Z_t, t) \d W_t$ for some measurable function $\eta^\tg$, and 
\bb
&\E_{x\sim \tg}[\KL(\Q^x ~||~ \P)]  \\
&=  \E_{x\sim \tg}[\KL(\Q^x_0 ~||~\P_0)] + 
\E_{x\sim \tg, Z\sim \Q^x}\left [\frac{1}{2}\int_{0}^\t 
\norm{\sigma(Z_t,t)^{-1}(\eta^\tg(Z_{[0,t]}, t) - \eta^x(Z_{[0,t]}, 0))}_2^2 \right ]. 
\ee 
It is clear that to achieve the minimum, we need to take $\P_0(\cdot) = \E_{x\sim \tg}[\Q_0^x(\cdot)]$ and 
$\eta^\tg (z_{[0,t]}, t) = \E_{x\sim \tg, Z\sim \Q^x}[\eta^x(Z_{[0,t]}, t) ~|~ Z_{[0,t]} =z_{[0,t]}]$, which yields the desirable form of $\Q^\tg$.  
\end{proof}

\subsection{Derivation of {\mrk} (Proposition~\ref{thm:markov})}

\begin{proof}[Proof of Proposition~\ref{thm:markov}]
It is the combined result of Lemma~\ref{lem:markovdd2} and Lemma~\ref{lem:mrk_22} below. 
\end{proof}

\begin{lem}\label{lem:markovdd2}
Let  $\Q$ be a non-Markov diffusion process on $[0,\t]$ of form 
\bb 
&\Q: ~~~~~\d Z_t = \eta(Z_{[0,t]}, t)\dt + \sigma(Z_t, t) \dW_t,~~~ Z_0 \sim \Q_0, 
\ee 
and $\meas M = \argmin_{\P \in \mathcal M} \KL(\Q ~||~\P)$ be the {\mrk} of $\Q$, where $\mathcal M$ is the set of all Markov processes on $[0,\T]$. Then $\Q$ is the law of 
\bb 
&\meas M: ~~~~~\d Z_t = m(Z_t, t)\dt + \sigma(Z_t, t) \dW_t,~~~ Z_0 \sim \Q_0, 
\ee 
where 
$$
m(z,t) = \E_{Z\sim \Q} [\eta(Z_{[0,t]}, t)~|~ Z_t = z]. 
$$
In addition, we have $\Q_t = \meas M_t$ for all time $t\in[0,\t]$. 

\end{lem}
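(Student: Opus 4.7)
The plan is to exploit Girsanov's theorem to reduce the variational problem $\argmin_{\P\in\mathcal M}\KL(\Q\,||\,\P)$ to a pointwise quadratic minimization over the Markov drift. First I would observe that any $\P\in\mathcal M$ with $\KL(\Q\,||\,\P)<+\infty$ must be equivalent to $\Q$ as path measures, and hence (since the diffusion coefficient is determined by the quadratic variation of the paths, which is invariant under equivalent changes of measure) must itself be a diffusion with the same coefficient $\sigma(Z_t,t)$. So without loss of generality the candidate $\P$ has the form $\d Z_t = b(Z_t,t)\dt + \sigma(Z_t,t)\dW_t$, $Z_0\sim \P_0$, for some measurable Markov drift $b$.

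Next, by Girsanov's theorem (as already invoked in the derivation of \eqref{equ:lossscorem}),
\begin{align*}
\KL(\Q\,||\,\P) = \KL(\Q_0\,||\,\P_0) + \tfrac{1}{2}\,\E_{Z\sim\Q}\!\left[\int_0^\t \left\|\sigma^{-1}(Z_t,t)\bigl(\eta(Z_{[0,t]},t)-b(Z_t,t)\bigr)\right\|^2\dt\right].
\end{align*}
The initial-law term is minimized by $\P_0 = \Q_0$. For the integral term, I would decompose the inner expectation using the tower property, conditioning on $Z_t$: for each fixed $(z,t)$, minimizing $\E_\Q\bigl[\|\sigma^{-1}(z,t)(\eta(Z_{[0,t]},t)-b(z,t))\|^2\,\big|\,Z_t=z\bigr]$ over $b(z,t)\in\RR^d$ is a standard quadratic minimization whose solution is the conditional mean $b(z,t) = \E_\Q[\eta(Z_{[0,t]},t)\,|\,Z_t=z] = m(z,t)$. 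Since this pointwise optimizer is admissible as a Markov drift and is attained simultaneously at every $(z,t)$, it is the global minimizer, yielding the claimed form of $\meas M$.

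For the marginal-matching assertion $\Q_t = \meas M_t$, the plan is an Itô/Fokker--Planck argument. For any smooth test function $f$, applying Itô under $\Q$ and taking expectations gives
\begin{align*}
\E_\Q[f(Z_t)] - \E_\Q[f(Z_0)] = \int_0^t \E_\Q\!\left[\nabla f(Z_s)^\top \eta(Z_{[0,s]},s) + \tfrac{1}{2}\trace\bigl(\sigma\sigma^\top \nabla^2 f(Z_s)\bigr)\right]\d s.
\end{align*}
By the tower property and the definition of $m$, the first term inside the expectation rewrites as $\E_\Q[\nabla f(Z_s)^\top m(Z_s,s)]$, depending on $Z$ only through the marginal $\Q_s$. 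Hence the family $\{\Q_t\}$ satisfies the same weak Fokker--Planck equation (with drift $m$ and diffusion $\sigma$) as the marginals $\{\meas M_t\}$ of $\meas M$, and they share the initial condition $\Q_0 = \meas M_0$. Uniqueness of solutions to the Fokker--Planck equation under the stated regularity then yields $\Q_t = \meas M_t$ for all $t\in[0,\t]$ (this is the classical Gyöngy mimicking principle).

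The main technical obstacle is the very first step: justifying that the KL-optimal $\P\in\mathcal M$ is necessarily a diffusion with coefficient $\sigma$, rather than some more exotic Markov process on path space. A careful argument hinges on the fact that $\KL(\Q\,||\,\P)<+\infty$ forces $\P\ll\Q$, which combined with $\Q$ being a diffusion with coefficient $\sigma$ pins down the coefficient of $\P$; after that the proof is a routine Girsanov computation plus a mimicking-theorem-style identification of marginals.
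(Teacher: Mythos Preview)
Your proposal is correct, and the first half (restricting to diffusions with coefficient $\sigma$ via Girsanov, then reducing to a pointwise quadratic minimization with the tower property) is exactly the paper's argument.

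Where you diverge is in the marginal-matching step $\Q_t=\meas M_t$. You invoke the Gy\"ongy mimicking route: It\^o on test functions, tower property to replace $\eta$ by $m$ inside the expectation, and then appeal to uniqueness of the Fokker--Planck evolution. The paper instead uses a pure information-theoretic shortcut: for any fixed $t$ the chain rule gives
\[
\KL(\Q\,||\,\P)=\KL(\Q_t\,||\,\P_t)+\E_{Z_t\sim\Q_t}\bigl[\KL(\Q(\cdot|Z_t)\,||\,\P(\cdot|Z_t))\bigr],
\]
and since the second term depends only on the conditional law $\P(\cdot|Z_t)$, the minimizer $\meas M$ must have $\meas M_t=\Q_t$. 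This avoids any well-posedness assumption on Fokker--Planck and stays entirely within the KL-variational framing already set up; your argument is equally valid but imports an additional regularity/uniqueness hypothesis that the paper does not need. Conversely, your Fokker--Planck computation makes the mechanism for marginal agreement more explicit and connects to the classical mimicking literature.
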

\begin{proof}
By Girsanov's Theorem \citep[e.g.,][]{lejay2018girsanov}, 
any  process that has $ \KL(\Q ~||~ \meas M)<+\infty$
(and hence is equivalent to $\Q$) has a form of $\d Z_t =  m(Z_{[0,t]}, t) \dt + \sigma(Z_t, t) \d W_t$, where $m$ is a measurable function. Since $\meas M$ is Markov, we have $m(Z_{[0,t]}, t) = m(Z_t, t)$. 
Then 
\bb
\KL(\Q ~||~ \P) 
=  \KL(\Q_0 ~||~\P_0) + 
\E_{Z\sim \Q}\left [\frac{1}{2}\int_{0}^\t 
\norm{\sigma(Z_t, t)^{-1}(\eta(Z_{[0,t]}, t) - m(Z_{t}, 0))}_2^2 \right ]. 
\ee 
It is clear that to achieve the minimum, we need to take $\meas M_0 = \Q_0$ and 
$m(z, t) = \E_{Z\sim \Q}[\eta(Z_{[0,t]}, t) ~|~ Z_t=z]$.

To prove $\Q_t = \meas M_t$, note that by the chain rule of KL divergence: 
$$
\KL(\Q~||~\P) = 
\KL(\Q_t~||~ \P_t) +  \E_{Z_t\sim \Q_t}[\KL(\Q(\cdot|Z_t)~||~ \P(\cdot|Z_t))], ~~~\forall t \in[0,\t]. 
$$
As the second term $\P(\cdot | Z_t)$ is independent of the choice of the marginal 
$\P_t$ at time $t\in[0,\t]$, 
the optimum should be achieved by $\M$ only if $\M_t = \Q_t$. 
\end{proof}

\begin{lem}\label{lem:mrk_22}
Let 
\bb 
&\Q: ~~~~~\d Z_t = \eta(Z_{[0,t]}, t)\dt + \sigma(Z_t, t) \dW_t,~~~ Z_0 \sim \Q_0\\
& \meas M: ~~~~~\d Z_t = m(Z_t, t)\dt + \sigma(Z_t, t) \dW_t,~~~ Z_0 \sim \Q_0, \\ 
&  \P^\theta: ~~~~~\d Z_t = s^\theta(Z_t, t)\dt + \sigma(Z_t, t) \dW_t,~~~ Z_0 \sim \P^\theta_0, 
\ee 
where $\meas M$ is the {\mrk} of $\Q$ (see Lemma~\ref{lem:markovdd2}). Then 
$$
\KL(\Q~||~\P^\theta) =
\KL(\Q~||~\meas M)  + \KL(\meas M~||~\P^\theta). 
$$
Hence, assume there exists $\theta\true$ such that $\P^{\theta\true} = \M$ and write  $\L(\theta) \defeq  \KL(\Q~||~\P^\theta).$ We have 
\bb \KL(\meas Q_\t ~||~\P^\theta_\t) = 
\KL(\meas M_\t ~||~\P^\theta_\t)\leq 
\KL(\meas M ~||~ \P^\theta)= \L(\theta) - \L(\theta\true). 
\ee 
\end{lem}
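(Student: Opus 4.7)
The plan is to establish a Pythagorean-type identity
$$\KL(\Q \,\|\, \P^\theta) \;=\; \KL(\Q \,\|\, \M) + \KL(\M \,\|\, \P^\theta),$$
after which the desired bound follows immediately from the data-processing inequality together with the marginal-matching property $\M_t = \Q_t$ already established in Lemma~\ref{lem:markovdd2}. My first step would be to apply Girsanov's theorem to each of the three KL divergences. Since all three processes share the same diffusion coefficient $\sigma$, and since $\M$ and $\Q$ share the initial distribution $\Q_0$, I would write
\begin{align*}
\KL(\Q \,\|\, \P^\theta) &= \KL(\Q_0\,\|\, \P^\theta_0) + \tfrac{1}{2}\E_{Z\sim\Q}\!\!\int_0^\t \|\sigma^{-1}(\eta(Z_{[0,t]},t) - s^\theta(Z_t,t))\|^2\,\dt, \\
\KL(\Q \,\|\, \M) &= \tfrac{1}{2}\E_{Z\sim\Q}\!\!\int_0^\t \|\sigma^{-1}(\eta(Z_{[0,t]},t) - m(Z_t,t))\|^2\,\dt, \\
\KL(\M \,\|\, \P^\theta) &= \KL(\Q_0\,\|\, \P^\theta_0) + \tfrac{1}{2}\E_{Z\sim\M}\!\!\int_0^\t \|\sigma^{-1}(m(Z_t,t) - s^\theta(Z_t,t))\|^2\,\dt.
\end{align*}

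The crucial step, and what I anticipate to be the main (only real) obstacle, is showing the cross-term from the square expansion vanishes. Expanding $\|\eta - s^\theta\|^2 = \|\eta - m\|^2 + \|m - s^\theta\|^2 + 2\langle \eta - m,\, m - s^\theta\rangle$ inside the $\Q$-expectation (with the inner product taken in the $\sigma^{-\top}\sigma^{-1}$ metric), it suffices to show that, for each $t$,
$$\E_{Z\sim\Q}\!\left[\,\langle \sigma^{-1}(\eta(Z_{[0,t]},t) - m(Z_t,t)),\; \sigma^{-1}(m(Z_t,t) - s^\theta(Z_t,t))\rangle\,\right] = 0.$$
I would prove this by conditioning on $Z_t$ and invoking the tower property: the second factor depends only on $Z_t$, while by the very definition $m(Z_t,t) = \E_\Q[\eta(Z_{[0,t]},t)\mid Z_t]$ the conditional expectation of the first factor given $Z_t$ is zero. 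Then, to reconcile the $\Q$-expectation of $\|m - s^\theta\|^2$ with the $\M$-expectation appearing in $\KL(\M\,\|\,\P^\theta)$, I use $\M_t = \Q_t$ (Lemma~\ref{lem:markovdd2}) together with the fact that $\|m(Z_t,t) - s^\theta(Z_t,t)\|^2$ depends on $Z$ only through the marginal $Z_t$; the two integrals therefore agree time-slice by time-slice.

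Combining these observations yields the Pythagorean identity. To conclude the lemma, I substitute $\P^{\theta^*} = \M$, which gives $\L(\theta^*) = \KL(\Q\,\|\,\M)$ and hence $\L(\theta) - \L(\theta^*) = \KL(\M\,\|\,\P^\theta)$. The data-processing inequality applied to the terminal-time projection $Z \mapsto Z_\t$ yields $\KL(\M_\t\,\|\,\P^\theta_\t) \leq \KL(\M\,\|\,\P^\theta)$, and the identity $\M_\t = \Q_\t$ finishes the chain
$$\KL(\Q_\t\,\|\,\P^\theta_\t) = \KL(\M_\t\,\|\,\P^\theta_\t) \leq \KL(\M\,\|\,\P^\theta) = \L(\theta) - \L(\theta^*).$$
No computation beyond these two expansions is needed; the only subtlety is the conditional-expectation argument killing the cross term, which is essentially the defining variational property of the Markovian projection.
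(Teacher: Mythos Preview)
Your proposal is correct and follows essentially the same route as the paper: Girsanov expressions for all three KL divergences, the marginal-matching $\M_t=\Q_t$ to swap the $\M$-expectation for a $\Q$-expectation, a bias--variance (Pythagorean) decomposition where the cross term vanishes because $m(Z_t,t)=\E_\Q[\eta(Z_{[0,t]},t)\mid Z_t]$, and finally the chain rule/data-processing step for the terminal marginal. The paper isolates the cross-term argument as a separate elementary lemma (Lemma~\ref{lem:vb}), but the content is identical to your inline conditioning argument.
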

\begin{proof}
Note that 
\bb
 & \KL(\meas M ~||~ \P^\theta)  \\
 & %
= 
\KL(\M_0 ~||~\P_0^\theta) + 
\frac{1}{2}
\E_{Z_t\sim \M_t}\left [\int_{0}^\t 
\norm{\sigma(Z_t, t)^{-1}(s^\theta(Z_t, t) - m(Z_{t}, t))}_2^2 \right ] \dt \\ 
& =  \KL(\meas M_0 ~||~\P_0^\theta) + 
\frac{1}{2}\int_{0}^\t 
\E_{Z_t\sim \meas M_t}\left [ \norm{\sigma(Z_t, t)^{-1}(s^\theta(Z_t, t) - m(Z_{t}, t))}_2^2 \right ] \dt 
\\ 
& = 
\KL(\Q_0 ~||~\P_0^\theta) + 
\frac{1}{2}\int_{0}^\t 
\E_{Z_t\sim \Q_t}\left [
\norm{\sigma(Z_t, t)^{-1}(s^\theta(Z_t, t) - m(Z_{t}, t))}_2^2 \right ] \dt \ant{$\Q_t = \M_t~~\forall t$}\\ 
& = \KL(\Q_0 ~||~\P_0^\theta) + 
\E_{Z\sim \Q}\left [ \frac{1}{2}\int_{0}^\t 
\norm{\sigma(Z_t, t)^{-1}(s^\theta(Z_t, t) - m(Z_{t}, t))}_2^2\dt  \right ] \\
& = \KL(\Q_0 ~||~\P_0^\theta) + 
\frac{1}{2}\norm{s^\theta - m}_{\Q,\sigma}^2,
\ee 
where we define 
$
\norm{f}_{\Q,\sigma}^2 = \E_{Z\sim \Q}\left [ \frac{1}{2}\int_{0}^\t 
\norm{\sigma(Z_t, t)^{-1}f(Z_t, t)}_2^2\dt  \right ]. 
$

On the other hand, 
\bb
\KL(\Q ~||~ \P^\theta)  
& =  \KL(\Q_0 ~||~\P_0^\theta) + 
\E_{Z\sim \Q}\left [\frac{1}{2}\int_{0}^\t 
\norm{\sigma(Z_t, t)^{-1}(s^\theta(Z_{t}, 0))-\eta(Z_{[0,t]}, t) }_2^2 \dt  \right ] \\ 
& = \KL(\Q_0 ~||~\P_0^\theta) + \frac{1}{2} \norm{s^\theta - \eta}_{\Q, \sigma}^2 \\ 
 \KL(\Q ~||~ \M) & 
= %
\E_{Z\sim \Q}\left [\frac{1}{2}\int_{0}^\t 
\norm{\sigma(Z_t, t)^{-1}(\eta(Z_{[0,t]}, t) - m(Z_{t}, 0))}_2^2 \dt \right ] \\
& =\frac{1}{2} \norm{\eta - m}_{\Q, \sigma}^2.
\ee 
Using  Lemma~\ref{lem:vb} with $a(z) = \sigma(z,t)^{-1} s^\theta(z,t)$, and $b(z_{[0,t]}) =  \sigma(z,t)^{-1} \eta(z_{[0,t]},t)$, 
we have the following bias-variance decomposition: $$ \norm{\eta - s^\theta}_{\Q, \sigma}^2 =  \norm{s^\theta - m}_{\Q, \sigma}^2 +  \norm{\eta - m}_{\Q, \sigma}^2.$$ 
Hence, $\KL(\Q ~||~ \P^\theta) =\KL(\M ~||~ \P^\theta) + \KL(\Q~||~\M)$. %

Finally, $\KL(M_\t~||~\P_\t^\theta) \leq \KL(\M~||~\P^\theta)$ is the direct result of the following factorization of KL divergence:
$$
\KL(\M~||~\P^\theta) = \KL(\M_\t~||~\P_\t^\theta)  + \E_{x\sim \M_\t} \left[ \KL(\M_\t(\cdot | Z_\t = x)~||~\P_\t^\theta(\cdot | Z_\t = x) ) \right]. 
$$
\end{proof}

\begin{lem}\label{lem:vb}
Let $(X,Y)$ be a random variable and $a(x)$, $b(x,y)$ are square integral functions. Let $m(x) = \E[b(X,Y)~|~X = x]$. We have 
$$
\E[\norm{a(X) - b(X, Y)}^2_2] = 
\E[\norm{a(X) - m(X)}^2_2] + \E[\norm{b(X,Y)-m(X)}^2_2].
$$
\end{lem}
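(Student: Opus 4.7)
The plan is to prove this by the standard ``add and subtract'' trick followed by an application of the tower property of conditional expectation. This is the classical bias-variance decomposition with $m(X)$ playing the role of the conditional mean, which annihilates the cross term.

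First I would write
\[
a(X) - b(X,Y) = \bigl(a(X) - m(X)\bigr) + \bigl(m(X) - b(X,Y)\bigr),
\]
and expand the squared norm on the left-hand side using $\norm{u+v}_2^2 = \norm{u}_2^2 + 2\langle u, v\rangle + \norm{v}_2^2$. Taking expectations produces the two desired terms $\E[\norm{a(X) - m(X)}_2^2]$ and $\E[\norm{b(X,Y) - m(X)}_2^2]$, plus a cross term
\[
2\,\E\!\left[ \bigl\langle a(X) - m(X),\ m(X) - b(X,Y) \bigr\rangle \right].
\]

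The only nontrivial step is to show this cross term vanishes. I would do this by conditioning on $X$: since $a(X) - m(X)$ is $X$-measurable, the tower property gives
\[
\E\!\left[\langle a(X) - m(X),\ m(X) - b(X,Y)\rangle\right] = \E\!\left[\bigl\langle a(X) - m(X),\ \E[m(X) - b(X,Y) \mid X]\bigr\rangle\right],
\]
and by the defining property $m(X) = \E[b(X,Y)\mid X]$ the inner conditional expectation is identically zero. Hence the cross term is zero and the identity follows.

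There is essentially no obstacle here; this is a textbook computation. The only thing to be slightly careful about is justifying that all quantities are integrable so that one may freely split expectations and swap in the conditional expectation, but the square-integrability hypothesis on $a$ and $b$ (and hence on $m$, by Jensen's inequality and the conditional-expectation contraction in $L^2$) makes this immediate via the Cauchy-Schwarz inequality applied to the cross term.
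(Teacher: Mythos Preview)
Your proposal is correct and follows essentially the same approach as the paper: add and subtract $m(X)$, expand the square, and use the tower property with the defining relation $m(X)=\E[b(X,Y)\mid X]$ to kill the cross term. The paper's proof is identical in structure, only omitting your (accurate) remark about integrability.
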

\begin{proof}
\bb
\E[\norm{a(X) - b(X, Y)}^2_2] 
& = \E[\norm{a(X) - m(X) + m(X) -  b(X, Y)}^2_2]  \\
& = \E[\norm{a(X) - m(X)}^2_2] + \E[ \norm{m(X) -  b(X, Y)}^2_2 ] + 2\Delta,
\ee 
where 
\bb
\Delta 
& = \E[(a(X) -m(X))\tt (m(X) - b(X,Y))] ] \\
& = \E[(a(X) -m(X))\tt \E[(m(X) - b(X,Y))|X]] \\
& = \E[(a(X) -m(X))\tt(m(X) - m(X))] 
 = 0.
\ee 
\end{proof}

\subsection{SMLD and DDPM as bridges with uninformative initialization}

We show that methods like SMLD and DDPM that specify $\Q^x$ as a time reversed 
O-U process $Z^x_t = \rev Z^x_{\T-t}$ with $\d \rev Z^x_t = -\alpha_{\T-t} \rev Z^x_t \dt + \varsigma_{\t-t} \d\tilde W_t$ and $\rev Z_0 = x$ 
can be viewed as taking $\Q^x =  \Q(\cdot | Z_\t = x)$ 
with $\Q$ the law of $\d Z_t = \alpha_t Z_t \dt  + \varsigma_t \dt $ 
initialized from $\normal(0, v)$ with $v \to +\infty$. 
This is made concrete in the following result.

\begin{pro} \label{thm:revou}
Let 
$\Q^x$ be the law of $Z^x_t = \rev Z^x_{\T-t}$ following $\d \rev Z^x_t = -\alpha_{\T-t} \rev Z^x_t \dt + \varsigma_{\t-t} \d\tilde W_t$ with $\rev Z_0 = x$. Assume $\sup_{t\in[0,T]}\{\alpha_t, \varsigma_t\} < \infty$. Let $\Q_v := \ito(\{\alpha_t\}, \{\beta_t\}, \normal(0,v))$ be the law of $\d Z_t = \alpha_t Z_t \dt + \varsigma_t \d W_t$ starting from $Z_0\sim \normal(0,v)$, where $v$ is the variance of the initial distribution. Then we have $\Q^x = \lim_{v\to+\infty}  \Q_v(\cdot | Z_\T = x)$, where the limit denotes weak convergence.
\end{pro}  
\begin{proof}
Let $\Q_v$ be the law of  the O-U process $\d Z_t  = \alpha_t Z_t \d t + \varsigma_t \d W_t$ initialized at $Z_0 \sim \normal(\mu_0, v)$.  Its solution is 
\bb 
Z_t 
 = \mu_{t|0} Z_0 + \int_0^t  \mu_{t|s}  \varsigma_s \d B_s  \sim \normal(\alpha_{t|0} \mu_0, ~ \alpha_{t|0}^2 v  + \beta_{t|0}),
\ee 
where we define  
\bb 
\alpha_{t|s} = \exp\left (\int_s^t \alpha_r \d r \right),  &&
\beta_{t|s} = \int_{s}^t \alpha_{t|r}^2 \varsigma_r^2 \d r,&& \forall 0\leq s\leq t\leq T.
\ee 
Using the time reversal formula~\cite{anderson1982reverse}, 
 the time-reversed process $\rev Z_t = Z_{T-t}$ follows  
\bb 
\d \rev Z_t = \left ( - \alpha_{\T-t}\rev Z_t  + 
r(\rev Z_t,  t) \right)
\dt + \varsigma_{\T-t} \d \rev W_t, && 
r(\rev Z_t, t) = \varsigma_{\T-t}^2 \frac{\alpha_{\T-t|0} \mu_0 - \rev Z_t}{\alpha_{\t-t|0}^2 v + \beta_{\T-t|0}},
\ee 
where $\rev W_t$ is a standard Brownian motion.  %
Taking $v\to+\infty$, the extra drift term due to the time reversion is vanished, and hence we get the follow process in the limit: 
$$
\d \rev Z_t = - \alpha_{\T-t} \dt 
+ \varsigma_{\T-t} \d \rev W_t. 
$$
This is directly reverting $\d Z_t = \alpha_t \dt + \varsigma_t \d W_t$ without introducing the extra term in the time reversal formula.  
\end{proof}

\subsection{Markov and Reciprocal Properties of $\Q^\tg$} 

\begin{proof}[Proof of Proposition~\ref{thm:mup}]
This is an obvious result. 
We have $\Q^{z_0, x}(Z_\t = x) = 1$ by the definition of conditioned processes. 
Hence $\Q^x(Z_\t = x) = \int\Q^{z_0, x}(Z_\t = x) \mu(\d z_0 ~|~x) = \int \mu(\d z_0 ~|~x)  = 1$. 
\end{proof}

\begin{pro} \label{thm:assumeqx}
Assume $\Q^x =  \Q(\cdot ~|~Z_\t= x)$ and 
$\pi^*(z) \defeq \frac{\d \tg}{\d \Q_\T}(z)$ exists and is positive everywhere. Then 
 $\Q^\tg$ is Markov, iff $\Q$ is Markov. 
\end{pro} 
\begin{proof}
If $\Q^x = \Q(\cdot ~|~ Z_\t = x)$, we have from the definition of $\Qt$: 
$$
\Q^\tg(Z) =  \Q(Z | \Z_\T) \tg(\Z_\T) =  \Q(Z) \pi^*(Z_\T),  
$$
where $\pi^*(Z_\T) = \frac{\d \tg}{\d  \Q_\T}(\Z_\T).$
Therefore, $\Q^\tg$ is obtained by multiplying a positive factor $\pi^*(Z_\t)$ on the terminal state $\Z_\T$ of $\Q$. Hence $\Qt$ has the same Markov structure as that of $\Q$.  
\end{proof}

\begin{proof}[Proof of Proposition~\ref{thm:takeqxtobe}]
When taking $\Q^x$ to be the dynamics \eqref{equ:bmsigma2} initialized from $Z_0 \sim \mu_0 = \normal(0,v_0)$, we have $\Q^x = \int \mu_0(\d z_0) \tilde \Q^{z_0, x}$, where 
$\tilde\Q^{z_0, x} = \tilde\Q(\cdot | Z_0 = z_0, Z_\t = x)$ with $\tilde\Q$ following Brownian motion $\d Z_t = \d W_t$.  Hence, we can write $\Q^\tg(\d Z) = 
\tilde\Q(\d Z) r(Z_0, Z_\t)$, where 
$r(z_0, z_\t) =\frac{\d \mu_0 \otimes \tg}{\d \tilde\Q_{0,\t}}(z_0, z_\t).$
 From \cite{leonard2014reciprocal}, $\Q^\tg$ is Markov iff $r(x, z_0) = f(x) g(z_0)$ for some $f$ and $g$, which is not the case except the degenerated case ($v_0 = 0$ and $v_0= +\infty$) because $\tilde\Q_{0,1}$ is  not factorized. 
 
 On the other hand, when $v_0 = 0$,  we have that $\Q^x = \Q(\cdot | Z_\t = x)$ is the standard Brownian bridge and hence $\Q^\tg$ is Markov following Proposition~\ref{thm:assumeqx}. 
 When $v_ 0 = +\infty$, as the case of SMLD, 
 $\Q^\tg$ is the law of $Z_t = \rev Z_{\t-t}$ with $
 \d \rev Z_t = \d W_t$ and $\rev Z_0 \sim \tg$, which is also Markov. 
\end{proof}

\begin{proof}[Proof of Proposition~\ref{thm:qtgis}]
Note that 
$$
\Q^\tg(\cdot) = \int \pi(\d x) \Q^x(\cdot)  
= \int  \pi(\d x) \mu(\d z_0~|~x) \tilde \Q^{z_0,x}(\cdot). 
$$
Hence if $\tilde \Q$ is Markov, $\Q^\tg$ is  reciprocal by Definition~\ref{def:reciprocal}. 

On the other hand, if $\Q^\tg$ is reciprocal, we have $\Q^\tg(\cdot) = \int \M^{z_0, x}(\cdot) \mu(\d z_0, \d x) $ for some Markov process $\M$ and probability measure $\mu$ on $\Omega\times \Omega$.  
In this case, we have $\Q^x(\cdot)=\Q^\tg(\cdot | Z_\t = x) =   \int \M^{z_0, x} (\cdot ) \mu(\d z_0~|~x)$, assuming it exits. 
\end{proof}
 
\subsection{Condition for $\Omega$-bridges}

\begin{pro}\label{thm:foranyqomega}
For any $\Q^\Omega$ following $\d Z_t = \eta^\Omega(Z_t,t)\dt + \d W_t$ that is an $\Omega$-bridge, the  $\P^\theta$ in \eqref{equ:omegabridge} 
is also an $\Omega$-bridge if $\E_{Z\sim \barQ^\Omega}[\int_0^\t\norm{f^\theta(Z_t, t)}_2^2\dt ]<+\infty$ and $\KL(\Q^\Omega_0~||~\P^\theta_0) < +\infty$. 
\end{pro}
\begin{proof}[Proof of Proposition~\ref{thm:foranyqomega}] 
We know that $$\KL(\Q^\Omega~||\P^\theta) = 
\KL(\Q^\Omega_0~||~\P^\theta_0) + \frac{1}{2} \E_{Z\sim \barQ^\Omega}\left [\int_0^\t\norm{f^\theta(Z_t, t)}_2^2\dt \right ] < +\infty.$$ This means that $\Q^\Omega$ and $\P^\theta$ are absolutely continuous to each other, and hence have the same support. Therefore, $\Q^\Omega(Z_\t\in \Omega) = 1$ implies that $\P^\theta(Z_\t\in \Omega) = 1$.
\end{proof}

\subsection{Examples of $\Omega$-Bridges}  %
If $\Omega$ is a product space, 
the integration can be factorized into one-dimensional integrals. So it is sufficient to focus on 1D case. 

If $\Omega$ is a discrete set, say $\Omega = \{e_1\ldots, e_K\}$, we have 
\bb 
\eta_{\mathrm{bb}, \varsigma}^\Omega (z, t)
 & = \varsigma_t^2\frac{1}{\sum_{k=1}^K \omega(e_k, z, t)}
 \sum_{k=1}^K %
 \omega(e_k, z, t)
 \frac{e_k - z}{\beta_\t - \beta_t} \\ 
 & = \varsigma_t^2 
 \dd_z  \log  \sum_{k=1}^K \omega(e_k, z, t),
\ee 
where $$
\omega(e_k, z, t) = \exp\left (- \frac{\norm{z - e_k}^2}{2(\beta_\t - \beta_t)}\right ).$$

If $\Omega = [a,b]$, we have 
\bb 
\eta^\Omega_{\mathrm{bb}, \varsigma}(z,t)
& = 
\varsigma_t^2 \frac{1}{\int_a^b \omega(e, z, t)}
\int_{a}^b \omega(e, z, t)   \frac{e - z}{\beta_\t - \beta_t} \d e \\ 
& =\varsigma_t^2 \dd_z \log \int_{a}^b \omega(e, z, t) \d e \\
& = \varsigma_t^2 \dd_z \log \left (F(\frac{z-a}{\sqrt{\beta_\t - \beta_t}}) - 
F(\frac{z-b}{\sqrt{\beta_\t - \beta_t}}) \right ), 
\ee 
where $F$ is the standard Gaussian CDF.

\subsection{Time-Discretization Error Analysis (Proposition~\ref{thm:disc})}

\begin{pro}%
\label{thm:disc_dd}
Assume $\Omega=\RR^d$ and  $\sigma(z, t) = \sigma(t)$ is state-independent and $\sigma(t)>c>0$, $\forall t\in[0,\t]$.  
Take the uniform time grid $\tau^{\mathrm{unif}} \defeq \{i\epsilon\}_{i=0}^K$ with step size $\epsilon=\t/K$ in %
the sampling step \eqref{equ:disc_inference}.
Let $\L_\epsilon(\theta) = \E_{Z\sim \Q^\tg}[\ell_\epsilon(\theta; Z)]$ with  
$$
\ell_\epsilon(\theta, Z_t) 
= - \log p_0^\theta(Z_0) + 
 \frac{1}{2K}\sum_{k=1}^K \norm{\sigma_k^{-1}(s^\theta(Z_{\tk}, \tk) - \eta^{Z_\t}(Z_{[0,\tk]}, \tk))}^2_2,  
$$
where $\epsilon>0$ is a step size with $\t = K\epsilon$ and 
$\tk = (k-1)\epsilon$,   
and $\sigma_k^{2} \defeq (t_{k+1}-\tk)^{-1}\int_{\tk}^{t_{k+1}} \sigma(t)^{2} \dt$.
Let $\P^{\theta, \epsilon}_\T$ be the distribution of the sample $\hat \X_\t$ resulting from the following Euler method:
$$
\hat Z_{t_{k+1}} 
 = \hat Z_{t_k} + \epsilon s^{\theta}(Z_{t_k}, t_k) + \sqrt{\epsilon} \sigma_k \xi_k, 
$$
where $\xi_k \sim \normal(0, I_{d})$ is the standard Gaussian noise in $\RR^d$.  %
Let  $\theta\true$ be an optimal parameter 
satisfying \eqref{equ:global}.  Assume 
$C_0\defeq \sup_{z,t}\left (\norm{s^\thetat(z,t)}^2/(1+\norm{z}^2),~ \trace(\sigma^2(z,t)),~  \E_{\P^{\theta^*}}[\norm{Z_0}^2]\right) <+\infty$, and $s^\thetat$ satisfies 
$\norm{s^\thetat(z,t) - s^\thetat (z',t')}^2_2 \leq L\left ( 
\norm{z -z'}^2 + \abs{t - t'} \right )$ for $\forall z,z'\in \RR^d$ and $t,t'\in[0,\t]$. %
Then we have 
$$
\sqrt{\KL(\tg~||~\P^{\theta,\epsilon}_\t )} \leq 
\sqrt{\L_\epsilon(\theta) - \L_\epsilon(\theta\true)}  + \bigO{\sqrt{\epsilon}}. 
$$
\end{pro}

\begin{proof}[Proof of Proposition~\ref{thm:disc}]
This is the result of Lemma~\ref{lem:lethbe} below by noting that the $\hat \P^\theta$ there is equivalent to the Euler method above, and 
$\L_\epsilon(\theta) - \L_\epsilon(\theta^*) \leq \tilde \L_\epsilon(\theta) - \tilde \L_\epsilon(\theta^*) $ (because $
\sigma_k^{-2} = ((t_{k+1} - t_k)^{-1}\int_{\tk}^{t_{k+1}} \sigma(t)^{2})^{-1} \leq (t_{k+1} - t_k)^{-1}\int_{\tk}^{t_{k+1}} \sigma(t)^{-2}$). 
\end{proof}

\begin{lem}\label{lem:lethbe} 
Let $h$ be a step size and $\eps  = T/K$ for a positive integer $K$. 
For each $t\in [0,\infty)$, denote by $\disc t \eps  = \max(\{ k \eps  \colon k \in \mathbb N \}\cap [0, t]$). 
Assume 
\bb 
&\Qt: ~~~~~\d Z_t = \etat(Z_{[0,t]}, t)\dt + \sigma(Z_t, t) \dW_t,~~~ Z_0 \sim \Q_0\\
& \Ptt: ~~~~~\d Z_t = \stt(Z_t, t)\dt + \sigma(Z_t, t) \dW_t,~~~ Z_0 \sim \Q_0, \\ 
&  \P^\theta: ~~~~~\d Z_t = s^\theta(Z_t, t)\dt + \sigma(Z_t, t) \dW_t,~~~ Z_0 \sim \P^\theta_0 \\
&  \hat \P^{\theta}: ~~~~~\d Z_t = s^\theta(Z_{\disc t \eps }, t)\dt + \sigma(Z_t, t) \dW_t,~~~ Z_0 \sim \P^\theta_0, 
\ee 
where $\Ptt$ is the Markovianization of $\Qt$, and $\hat \P^\theta$ is a discretized version of $\P^\theta$. 
Define 
$$
\tilde \L_\eps(\theta) = 
\E_{\Qtt} \left [ - \log p_0^\theta(Z_0) + 
\frac{1}{2} \int_0^\T 
\norm{ \sigma^{-1}(Z_t, t)(s^{\theta}(\X_{\disc t \eps}, \disc t \eps) -
 \eta^{Z_\t}(\X_{[0,\disc t \eps]}, \disc t \eps ))}^2
 \dt \right]. 
$$
Assume the conditions of Lemma~\ref{lem:bound} holds for $\P^{\theta\true}$, 
and $\sigma(z, t)\geq c>0$ for all $z, t$, and $s^\thetat$ satisfies 
$\norm{s^\thetat(z,t) - s^\thetat (z',t')}^2_2 \leq L\left ( 
\norm{z -z'}^2 + \abs{t - t'} \right )$ for $\forall z,z'\in \RR^d$ and $t,t'\in[0,\t]$. 
Then  
$$
\sqrt{\KL(\Ptt ~||~ \hat \P^\theta)}
\leq \sqrt{\tilde \L_\eps(\theta) -\tilde  \L_\eps(\theta\true)} + 
C \sqrt{\epsilon}, 
$$
where %
$C$ is a constant that is independent of $\epsilon$. 
\end{lem}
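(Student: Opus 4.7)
\textbf{Proof proposal for Lemma~\ref{lem:lethbe}.}

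The plan is to bound $\KL(\Ptt\|\hat\P^\theta)$ via Girsanov's theorem, relate the resulting drift-mismatch integral to $\tilde\L_\eps(\theta)-\tilde\L_\eps(\theta\true)$ via the bias-variance decomposition (Lemma~\ref{lem:vb}) together with the fact that $\Ptt$ and $\Q^\tg$ share all fixed-time marginals (Lemma~\ref{lem:markovdd2}), and then absorb the mismatch between evaluating drifts at $(Z_t,t)$ versus $(Z_{\disc t\eps},\disc t\eps)$ into an $O(\sqrt{\eps})$ residual using the Lipschitz and growth hypotheses on $s^{\theta\true}$.

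Concretely, I would first apply Girsanov (since $\Ptt$ and $\hat\P^\theta$ share the diffusion coefficient) to obtain
\begin{equation*}
\KL(\Ptt\|\hat\P^\theta)=\KL(\Q_0\|\P_0^\theta)+\tfrac{1}{2}\E_{Z\sim\Ptt}\!\int_0^\t \norm{\sigma^{-1}(Z_t,t)\big(s^{\theta\true}(Z_t,t)-s^\theta(Z_{\disc t\eps},\disc t\eps)\big)}^2\dt,
\end{equation*}
and in parallel, using the pointwise identity $s^{\theta\true}(z,s)=\E_{\Q^\tg}[\eta^{Z_\t}(Z_{[0,s]},s)\mid Z_s=z]$ applied at $s=\disc t\eps$ inside Lemma~\ref{lem:vb}, decompose
\begin{equation*}
\tilde\L_\eps(\theta)-\tilde\L_\eps(\theta\true)=\KL(\Q_0\|\P_0^\theta)+\tfrac{1}{2}\E_{Z\sim\Q^\tg}\!\int_0^\t \norm{\sigma^{-1}(Z_t,t)\big(s^{\theta\true}(Z_{\disc t\eps},\disc t\eps)-s^\theta(Z_{\disc t\eps},\disc t\eps)\big)}^2\dt,
\end{equation*}
where the $\theta$-independent variance piece cancels and the initial-distribution term reduces using $\P_0^{\theta\true}=\Q_0$.

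Next, since $\sigma$ is state-independent in $t$ (the clean setting of Proposition~\ref{thm:disc}), the integrand in the second display depends only on $(Z_{\disc t\eps},t)$; combined with $\Ptt_s=\Q^\tg_s$ for every fixed $s$, this lets me rewrite the $\Q^\tg$-expectation as a $\Ptt$-expectation. Splitting the Girsanov integrand as
\begin{equation*}
s^{\theta\true}(Z_t,t)-s^\theta(Z_{\disc t\eps},\disc t\eps)=\underbrace{\big[s^{\theta\true}(Z_t,t)-s^{\theta\true}(Z_{\disc t\eps},\disc t\eps)\big]}_{R(t)}+\underbrace{\big[s^{\theta\true}(Z_{\disc t\eps},\disc t\eps)-s^\theta(Z_{\disc t\eps},\disc t\eps)\big]}_{G(t)}
\end{equation*}
and using the triangle inequality in $L^2(\Ptt;\dt)$ weighted by $\sigma^{-1}$, one gets $\sqrt{E}\le\sqrt{E'}+\sqrt{D}$, where $E,E'$ are twice the integrals in the two displays above and $D=\E_\Ptt\!\int_0^\t\norm{\sigma^{-1}(t)R(t)}^2\dt$. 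The Lipschitz assumption then gives $D\le c^{-2}L\int_0^\t(\E_\Ptt[\norm{Z_t-Z_{\disc t\eps}}^2]+\eps)\dt$, and a standard Itô isometry / Grönwall computation on the SDE for $\Ptt$, using the growth bound on $s^{\theta\true}$ and the second-moment control provided by Lemma~\ref{lem:bound}, shows $\E_\Ptt[\norm{Z_t-Z_{\disc t\eps}}^2]=O(\eps)$, so $D=O(\eps)$.

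To conclude, I would use the elementary inequality $\sqrt{a+(x+y)^2}\le\sqrt{a+x^2}+y$ (for $a,x,y\ge 0$) with $a=2\KL(\Q_0\|\P_0^\theta)$, $x=\sqrt{E'}$, $y=\sqrt{D}$, which, after dividing by $\sqrt{2}$, yields $\sqrt{\KL(\Ptt\|\hat\P^\theta)}\le\sqrt{\tilde\L_\eps(\theta)-\tilde\L_\eps(\theta\true)}+C\sqrt{\eps}$. The main subtlety I expect is the expectation-matching step: it relies on state-independence of $\sigma$ so that the $G(t)$ integrand depends only on $Z_{\disc t\eps}$; with a genuinely state-dependent $\sigma(z,t)$ the joint law of $(Z_{\disc t\eps},Z_t)$ under $\Ptt$ and $\Q^\tg$ need not coincide, and an additional small-step argument comparing those joints (or an extra Lipschitz handle on $\sigma^{-1}$) would be required to close the gap.
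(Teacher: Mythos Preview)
Your proposal is correct and follows essentially the same route as the paper: Girsanov on $\Ptt$ versus $\hat\P^\theta$, the splitting of the drift mismatch into the discretization error $R(t)$ and the piecewise-constant score error $G(t)$, transfer of the $G$-integral from $\Ptt$ to $\Q^\tg$ via the shared fixed-time marginals, the bias--variance identity (Lemma~\ref{lem:vb}) to rewrite it as $\tilde\L_\eps(\theta)-\tilde\L_\eps(\theta\true)$, and Lemma~\ref{lem:bound} together with the Lipschitz hypothesis for the $O(\eps)$ residual. The only cosmetic difference is in how the two pieces are recombined: the paper uses the Young-type inequality $\norm{a+b}^2\le(1+\omega)\norm{a}^2+(1+1/\omega)\norm{b}^2$ and then minimizes over $\omega$, whereas you use the $L^2$ triangle inequality followed by $\sqrt{a+(x+y)^2}\le\sqrt{a+x^2}+y$; these yield the same bound, and your observation about the state-independence of $\sigma$ being needed for the marginal-transfer step is a point the paper leaves implicit.
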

\begin{proof}
Define $\norm{f}_{\Q, \sigma}^2 = \E_{Z\sim \Q}[ \int_0^T \norm{\sigma(Z,t) f(Z,t)}^2]$ for convenient notation.
 Let $s^\theta_\epsilon(Z,t) = s^\theta(Z_{\disc t \eps}, \disc t \eps)$, and $\eta_\epsilon = \eta^{Z_T}(Z_{[0,\disc t \eps]}, \disc t \eps)$. 
\bb
& \KL(\Ptt~||~ \hat \P^\theta)  \\
& = \KL(\Ptt_0~||~ \hat \P^\theta_0) + 
\frac{1}{2}  \norm{s^{\theta\true} - s^\theta_\epsilon}^2 \\ 
& \leq 
\KL(\Ptt_0~||~ \P^\theta_0) +
\frac{1}{2} 
\left( (1+\omega)  \norm{s^{\theta}_\epsilon - s^{\theta\true}_\epsilon}^2_{\Ptt, \sigma}
+ (1+1/\omega)\norm{s^{\theta\true} - s^{\theta\true}_\epsilon}^2_{\Ptt, \sigma}  \right)
 \\ 
& \defeq  (1+\omega) I_1 + 
(1+1/\omega)  I_2, 
 \ee 
 where  $\omega >0$ is any positive number and 
 \bb
 I_1 & 
 \defeq  
\frac{1}{1+\omega} \KL(\Ptt_0~||~ \P^\theta_0) + \frac{1}{2}\norm{s^{\theta}_\epsilon - s^{\theta\true}_\epsilon}^2_{\Ptt, \sigma} \\ 
 & =  
\frac{1}{1+\omega} \KL(\Ptt_0~||~ \P^\theta_0) + \frac{1}{2}\norm{s^{\theta}_\epsilon - s^{\theta\true}_\epsilon}^2_{\Qt, \sigma} \\ 
 & \leq   
 \KL(\Ptt_0~||~ \P^\theta_0) + \frac{1}{2}\norm{s^{\theta}_\epsilon - s^{\theta\true}_\epsilon}^2_{\Qt, \sigma} \\ 
  & \leq   
 \KL(\Ptt_0~||~ \P^\theta_0) + 
 \frac{1}{2}
 \left(\norm{s^{\theta}_\epsilon - \eta^{Z_\t}_\epsilon}^2_{\Qt, \sigma} 
 - \norm{s^{\theta\true}_\epsilon - \eta^{Z_\t}_\epsilon}^2_{\Qt, \sigma} 
 \right ) \ant{Lemma~\ref{lem:vb}}\\ 
& = \tilde   \L_\epsilon(\theta) -\tilde  \L_\epsilon(\theta\true), 
 \ee 
 and 
 \bb 
 I_2
 & \defeq \frac{1}{2} \norm{s^{\thetat} - s^\thetat_\epsilon}_{\Ptt, \sigma}^2 \\
  & \leq  \frac{L}{2}   \E_{\Ptt}\left[\int_0^\t \sigma^{-2}(Z_t, t) 
  \left (\norm{Z_t - Z_{\disc t \epsilon}}^2 + (t-\disc t \epsilon)\right )\dt \right] \\  
  & \leq  \frac{L}{2c^2} \E_{\Ptt}\left[\int_0^\t 
  ((Z_t - Z_{\disc t \epsilon})^2 + (t-\disc t \epsilon))\dt \right] \\    
& \leq \frac{L}{2c^2}\left (C_{\Ptt} +1 \right )\int_0^\t (t-\disc t \eps) \dt   \ant{Lemma~\ref{lem:bound}} \\
& = \frac{L}{2c^2}\left (C_{\Ptt} +1 \right )  \frac{\t \eps}{2}. \ant{Lemma~\ref{lem:tth},}
 \ee 
 where $C_{\Ptt}$ is a constant depending on $\Ptt$ that comes from Lemma~\ref{lem:bound}. 
Hence 
\bb 
\KL(\Ptt ~||~\hat \Pt) 
& \leq  \inf_{\omega\geq 0} (1+\omega) I_1 + 
(1+1/\omega)  I_2 \\
& = ( \sqrt{I_1} + \sqrt{I_2} )^2 \\
& \leq \left (\sqrt{\tilde  \L_\eps (\theta) - \tilde  \L_\eps (\theta^*)} +  \frac{1}{2}\sqrt{\frac{L}{c^2}\left (C_{\Ptt} +1 \right ) \t \eps }  \right). 
\ee 
This completes the proof. 
\end{proof}

\begin{lem} 
For any $a, b\in \RR^d$,  and $\omega \geq 0$, 
$$
\norm{a+b}^2_2 \leq (1+\omega) \norm{a}^2_2 +(1+1/\omega) \norm{b}_2^2. 
$$
\end{lem}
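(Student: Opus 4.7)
The plan is to expand $\|a+b\|_2^2$ using the inner product identity $\|a+b\|_2^2 = \|a\|_2^2 + \|b\|_2^2 + 2\langle a, b\rangle$, and then bound the cross term $2\langle a,b\rangle$ by a weighted AM-GM inequality with weight $\omega$.

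First, I would apply Cauchy-Schwarz: $2\langle a, b\rangle \leq 2\|a\|_2 \|b\|_2$. Then I would use the standard scalar AM-GM inequality in the form $2xy \leq \omega x^2 + \omega^{-1} y^2$ (valid for any $\omega > 0$), which follows from expanding $(\sqrt{\omega}\,x - y/\sqrt{\omega})^2 \geq 0$. Applied with $x = \|a\|_2$ and $y = \|b\|_2$, this gives $2\langle a,b\rangle \leq \omega \|a\|_2^2 + \omega^{-1} \|b\|_2^2$. Substituting back yields the claimed inequality
\[
\|a+b\|_2^2 \leq (1+\omega)\|a\|_2^2 + (1+\omega^{-1})\|b\|_2^2.
\]

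There is no serious obstacle here; this is a one-line Young-type inequality. The only minor subtlety is the boundary case $\omega = 0$, where the right-hand side should be interpreted as $+\infty$ (since $1/\omega$ diverges), in which case the bound is vacuous and correct; the genuine content is for $\omega > 0$. One could equivalently derive the bound directly by observing that $\|a+b\|_2^2 = \|(1+\omega)^{1/2}(1+\omega)^{-1/2}a + (1+\omega^{-1})^{1/2}(1+\omega^{-1})^{-1/2}b\|_2^2$ and applying Cauchy-Schwarz to the two scaled vectors, but the AM-GM route above is cleaner.
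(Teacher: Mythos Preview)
Your proof is correct and matches the paper's approach: the paper simply writes $(1+\omega)\norm{a}_2^2 + (1+1/\omega)\norm{b}_2^2 \geq \norm{a}_2^2 + \norm{b}_2^2 + 2a^\top b = \norm{a+b}_2^2$, which is exactly your AM--GM bound on the cross term. The only cosmetic difference is that you pass through Cauchy--Schwarz first, whereas the paper (implicitly) uses $\omega\norm{a}_2^2 + \omega^{-1}\norm{b}_2^2 - 2a^\top b = \norm{\sqrt{\omega}\,a - b/\sqrt{\omega}}_2^2 \geq 0$ directly; this extra step is harmless.
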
 
\begin{proof}
\bb 
 (1+\omega) \norm{a}^2_2 +(1+1/\omega) \norm{b}_2^2  
 \geq  
\norm{a}_2^2 + \norm{b}_2^2  + 2 a \tt b  
 = \norm{a+b}^2_2
\ee 
\end{proof}

\begin{lem}\label{lem:tth}
Assume $\t \geq 0$, $\eps \geq 0$ and $\t/\eps  \in \mathbb N$. We have 
$$\int_0^\t   (t - \disc t \eps ) \dt  
 = \frac{\T \eps }{2}.$$
\end{lem}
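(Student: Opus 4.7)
The plan is a direct calculation that exploits the piecewise-constant nature of $\disc t \eps$ on the uniform grid $\{k\eps\}_{k=0}^{K}$, where $K = \T/\eps$.

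First I would partition the interval $[0,\T]$ into the $K$ subintervals $[k\eps, (k+1)\eps)$ for $k = 0, 1, \ldots, K-1$. On each such subinterval, the definition of $\disc t \eps$ as the largest grid point not exceeding $t$ immediately gives $\disc t \eps = k\eps$, so the integrand reduces to the linear function $t - k\eps$.

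Next I would evaluate the integral on a single subinterval by the substitution $u = t - k\eps$, obtaining
\[
\int_{k\eps}^{(k+1)\eps} (t - k\eps)\,\dt = \int_0^{\eps} u\,\d u = \frac{\eps^2}{2},
\]
which is the same value for every $k$. Summing over $k = 0, 1, \ldots, K-1$ and using $K\eps = \T$ then yields
\[
\int_0^\T (t - \disc t \eps)\,\dt = \sum_{k=0}^{K-1} \frac{\eps^2}{2} = \frac{K \eps^2}{2} = \frac{\T \eps}{2}.
\]

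There is no real obstacle here; the only thing to be careful about is the boundary behavior at $t = (k+1)\eps$, which is a measure-zero set and hence irrelevant to the integral. The hypothesis $\T/\eps \in \mathbb{N}$ is used precisely to ensure that the partition covers $[0,\T]$ exactly with no leftover fractional subinterval, so the sum telescopes cleanly.
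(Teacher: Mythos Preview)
Your proposal is correct and follows essentially the same approach as the paper: partition $[0,\T]$ into the $K$ subintervals determined by the grid, substitute $u = t - k\eps$ on each to obtain $\eps^2/2$, and sum to get $K\eps^2/2 = \T\eps/2$. The paper's proof is line-for-line the same computation (with slightly different notation for the substitution variable).
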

\begin{proof}
\bb 
\int_0^\t   (t - \disc t \eps ) \dt   
&  = \sum_{k=0}^{K-1} \int_0^\eps  (hk + x -  hk) \dx  \\
& = \sum_{k=0}^{K-1} \int_0^\eps  x\dx \\
& = K \eps^2/2 \\
& = \T \eps /2. 
\ee 
\end{proof}

\begin{lem}[Grönwall's inequality]
Let $I$ denote an interval of the real line of the form $[a, \infty)$ or $[a, b]$ or $[a, b)$ with $a<b$. Let $\alpha,\beta$ and $u$ be real-valued functions defined on $I$. Assume that $\beta$ and $u$ are continuous and that the negative part of $\alpha$ is integrable on every closed and bounded subinterval of $I$.

(a) If $\beta$ is non-negative and if $u$ satisfies the integral inequality
$${\displaystyle u(t)\leq \alpha (t)+\int _{a}^{t}\beta (s)u(s)\,\mathrm {d} s,\qquad \forall t\in I,}$$
which is true if 
$$
u'(t) \leq \alpha'(t) + \beta(t) u(t). 
$$
then
$${\displaystyle u(t)\leq \alpha (t)+\int _{a}^{t}\alpha (s)\beta (s)\exp {\biggl (}\int _{s}^{t}\beta (r)\,\mathrm {d} r{\biggr )}\mathrm {d} s,\qquad t\in I.}$$
(b) If, in addition, the function $\alpha$ is non-decreasing, then
$${\displaystyle u(t)\leq \alpha (t)\exp {\biggl (}\int _{a}^{t}\beta (s)\,\mathrm {d} s{\biggr )},\qquad t\in I.}$$
\end{lem}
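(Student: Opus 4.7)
The plan is to prove part (a) by the classical integrating factor trick applied to the auxiliary function that captures the integral term, then deduce part (b) as an easy corollary by using the monotonicity of $\alpha$.

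First I would introduce $v(t) \defeq \int_a^t \beta(s) u(s)\,\d s$, so that $v(a)=0$ and $v'(t)=\beta(t)u(t)$ a.e.\ (by the Lebesgue differentiation theorem; $u$ is continuous and $\beta$ is locally integrable, which is all that is needed). The hypothesis rewrites as $u(t)\leq \alpha(t)+v(t)$, and since $\beta\geq 0$ this gives the differential inequality
\[
v'(t)\leq \beta(t)\alpha(t)+\beta(t)v(t),\qquad t\in I.
\]
To eliminate the $\beta(t)v(t)$ term, introduce the integrating factor $E(t) \defeq \exp\!\bigl(-\int_a^t\beta(r)\,\d r\bigr)$, which is absolutely continuous with $E'(t)=-\beta(t)E(t)$. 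Then
\[
\bigl(v(t)E(t)\bigr)' \;=\; v'(t)E(t)-\beta(t)v(t)E(t) \;\leq\; \beta(t)\alpha(t)E(t).
\]
Integrating from $a$ to $t$, using $v(a)E(a)=0$, yields
\[
v(t)E(t)\leq \int_a^t \alpha(s)\beta(s)E(s)\,\d s,
\]
and multiplying by $E(t)^{-1}=\exp\!\bigl(\int_a^t\beta(r)\,\d r\bigr)$ moves the factor inside to give the key bound
\[
v(t)\leq \int_a^t \alpha(s)\beta(s)\exp\!\Bigl(\int_s^t\beta(r)\,\d r\Bigr)\,\d s.
\]
Substituting back into $u(t)\leq \alpha(t)+v(t)$ gives (a). The version under the differential hypothesis $u'(t)\leq \alpha'(t)+\beta(t)u(t)$ reduces to the integral hypothesis by integrating from $a$ to $t$ (assuming $u(a)\leq \alpha(a)$), so the same argument applies.

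For (b), monotonicity of $\alpha$ gives $\alpha(s)\leq \alpha(t)$ for $s\leq t$, so pulling $\alpha(t)$ out of the integral in (a),
\[
u(t) \leq \alpha(t)+\alpha(t)\int_a^t \beta(s)\exp\!\Bigl(\int_s^t\beta(r)\,\d r\Bigr)\d s.
\]
The remaining integral is elementary: $\beta(s)\exp(\int_s^t\beta)$ is $-\tfrac{\d}{\d s}\exp(\int_s^t\beta(r)\d r)$, so it integrates to $\exp(\int_a^t\beta(r)\d r)-1$. Combining, $u(t)\leq \alpha(t)\exp(\int_a^t\beta(s)\,\d s)$, which is (b).

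The only step that requires care is the measure-theoretic regularity: $v$ is absolutely continuous because $\beta u$ is locally integrable (product of continuous $u$ with the integrable part of $\beta$), so $v'$ exists a.e.\ and the fundamental theorem of calculus applies to $v\cdot E$. Beyond that, everything is manipulation of an ODE with an integrating factor, so there is no real obstacle; the proof is essentially self-contained once the auxiliary $v$ is introduced.
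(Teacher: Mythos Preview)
Your argument is the standard and correct proof of Gr\"onwall's inequality via the integrating factor trick. The paper itself does not supply a proof of this lemma: it is stated as a classical result and then invoked in the subsequent lemma to bound $\E[\norm{X_t-X_s}^2]$. So there is no paper proof to compare against; your write-up is exactly what one would expect for a self-contained appendix proof, and it is complete as written.
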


\begin{lem}\label{lem:bound}
Consider 
$$
\dX_t = b(\X_t, t)\dt + \sigma(\X_t, t)\dW_t,~~~~~ X_0 = 0, t \in [0, \T]. 
$$
Assume %
there exists a finite constant $C_0$, such that 
$$\norm{b(x, t)}_2^2 \leq C_0 (1+\norm{x}_2^2), ~~~\forall x\in 
\RR^d, ~~ t\in[0,\T], $$
$$
\trace(\sigma\sigma\tt (x,t)) \leq C_0,~~~\forall x\in \RR^d, t\in [0,\T].
$$
and $\E[\norm{\X_0}_2^2] \leq C_0$.

Then for any $0\leq s \leq t \leq \T$, we have 
$$
\E[\norm{\X_t - \X_s}_2^2] \leq K_{C_0, T}  (t-s),
$$
where $K_{C_0, T}$ is a finite constant that depends on $C_0$ and $T$. 
\end{lem}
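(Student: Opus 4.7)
The plan is to carry out the two standard a priori estimates for SDEs with linear-growth coefficients. First I would show that $\sup_{t\in[0,T]}\E[\norm{X_t}_2^2]$ is finite, and then use that bound to control the increments.

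\textbf{Step 1: a uniform second moment bound.} Apply Itô's formula to $\norm{X_t}_2^2$:
\[
\norm{X_t}_2^2 = \norm{X_0}_2^2 + 2\int_0^t \langle X_r, b(X_r,r)\rangle \, dr + \int_0^t \trace(\sigma\sigma\tt(X_r,r))\, dr + 2\int_0^t \langle X_r, \sigma(X_r,r)\, dW_r\rangle.
\]
Using a standard stopping-time localization to discard the martingale term and then passing to the limit by monotone convergence, take expectations and use $2\langle x, b\rangle \leq \norm{x}_2^2 + \norm{b}_2^2$ together with the linear-growth assumption $\norm{b(x,r)}_2^2 \leq C_0(1+\norm{x}_2^2)$ and $\trace(\sigma\sigma\tt)\leq C_0$ to obtain
\[
\E[\norm{X_t}_2^2] \leq C_0 + 2C_0 t + (1+C_0)\int_0^t \E[\norm{X_r}_2^2]\, dr.
\]
Grönwall's inequality (the version recalled just above in the paper) then yields a constant $M_{C_0,T}$ with $\sup_{t\in[0,T]}\E[\norm{X_t}_2^2] \leq M_{C_0,T}$.

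\textbf{Step 2: control of increments.} Write
\[
X_t - X_s = \int_s^t b(X_r,r)\, dr + \int_s^t \sigma(X_r,r)\, dW_r,
\]
and use $\norm{a+b}_2^2\leq 2\norm{a}_2^2+2\norm{b}_2^2$. For the drift piece, Cauchy--Schwarz gives
\[
\Bigl\|\int_s^t b(X_r,r)\, dr\Bigr\|_2^2 \leq (t-s)\int_s^t \norm{b(X_r,r)}_2^2\, dr \leq (t-s)\int_s^t C_0(1+\norm{X_r}_2^2)\, dr.
\]
For the stochastic-integral piece, Itô's isometry gives
\[
\E\Bigl[\Bigl\|\int_s^t \sigma(X_r,r)\, dW_r\Bigr\|_2^2\Bigr] = \E\Bigl[\int_s^t \trace(\sigma\sigma\tt(X_r,r))\, dr\Bigr] \leq C_0(t-s).
\]
Combining the two bounds with the uniform moment bound from Step 1,
\[
\E[\norm{X_t-X_s}_2^2] \leq 2C_0(t-s)^2(1+M_{C_0,T}) + 2C_0(t-s) \leq K_{C_0,T}(t-s),
\]
where I used $(t-s)^2 \leq T(t-s)$ and set $K_{C_0,T} = 2C_0\bigl(1+T(1+M_{C_0,T})\bigr)$.

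The only non-routine point is justifying that the stochastic integral in the Itô expansion of Step 1 is a true (not just local) martingale when one takes expectations; the standard remedy is the localization sketched above, together with Fatou/monotone convergence at the end, so no real obstacle arises. Everything else is Cauchy--Schwarz, Itô isometry, and Grönwall.
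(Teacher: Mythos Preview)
Your proof is correct. It differs from the paper's argument in organization rather than in spirit, but the difference is worth noting. The paper applies It\^o's lemma directly to $\norm{X_t-X_s}_2^2$ (with $s$ fixed), obtains a differential inequality of the form
\[
\frac{\d}{\dt}\E[\norm{X_t-X_s}_2^2]\le (1+2C_0)\E[\norm{X_t-X_s}_2^2]+\const(1+\E[\norm{X_s}_2^2]),
\]
and invokes Gr\"onwall on the \emph{increment} to get $\E[\norm{X_t-X_s}_2^2]\le C(t-s)\exp((t-s)(1+2C_0))$; specializing to $s=0$ then furnishes the moment bound, which is substituted back. You instead run It\^o+Gr\"onwall once on $\norm{X_t}_2^2$ to get the uniform moment bound first, and then handle the increment by Cauchy--Schwarz on the drift integral plus It\^o isometry on the stochastic integral, avoiding a second application of Gr\"onwall. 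Your route is slightly more elementary and gives an explicit constant $K_{C_0,T}=2C_0(1+T(1+M_{C_0,T}))$ without the exponential factor appearing in the intermediate estimate; the paper's route is a bit more uniform in that the same It\^o+Gr\"onwall step is reused for both the moment and the increment. Either argument is perfectly adequate for how the lemma is used downstream.
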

\begin{proof}
Let $\eta = \sup_{x,t} \trace(\sigma \sigma\tt(x, t)).$ We have by Ito Lemma, 
\bb
\frac{\d}{\dt }\E\left [\norm{\X_{t}- \X_s}_2^2 \right ] & 
= \E\left [
2(\X_t - \X_s)\tt (b(\X_t, t) + \d W_t) +  \eta
 \right ]   \\
& 
= \E\left [
2(\X_t - \X_s)\tt b(\X_t, t) +\eta
\right ]   \\
& 
= \E\left [\norm{\X_t - \X_s}_2^2 + \norm{b(\X_t, t)}_2^2 +  d \right ]   \\
& \leq \E\left [\norm{\X_t - \X_s}_2^2 + C_0(1+\norm{\X_t}_2^2) +  \eta \right ]   \\
& \leq (1+2C_0)\E\left [\norm{\X_t - \X_s}_2^2 \right ] +
\eta+ C_0(1 + 2\E\left [\norm{\X_s}_2^2\right ] ). 
\ee 
Using Gronwall's inequality, 
$$
\E\left [\norm{\X_t - \X_s}_2^2\right]
\leq (t-s)(\eta + C_0(1+2 \E[\norm{\X_s}_2^2])) \exp\left (
(t-s)(1+2C_0) \right).  
$$
Taking $s=0$ yields that 
$$
\E\left [\norm{\X_t - \X_0}_2^2\right]
\leq t(\eta + C_0(1+2 \E[\norm{\X_0}_2^2])) \exp\left (
t (1+2C_0) \right).  
$$
Hence 
\bb 
\E[\norm{\X_t}^2_2] 
& \leq 
2\E\left [\norm{\X_t - \X_0}_2^2\right] + 2\E\left [\norm{  X_0}_2^2\right] \\ 
& \leq  2t(\eta + C_0(1+2 \E[\norm{\X_0}_2^2])) \exp\left (
t (1+2C_0) \right) + 2\E\left [\norm{\X_0}_2^2\right]  \\
&\leq  4\T(C_0 + C_0^2) \exp(T(1+2C_0)) + 2 C_0. 
\ee 
Therefore, 
\bb 
& \E\left [\norm{\X_t - \X_s}_2^2\right]\\
& \leq (t-s)(\eta + C_0(1+2 \E[\norm{\X_s}_2^2])) \exp\left ((t-s)(1+2C_0) \right) \\
& \leq C(t-s),
 \ee 
 where 
 \bb 
 C & = (2C_0 
 + 4 C_0^2 + 8C_0\T(C_0 + C_0^2) \exp(T(1+2C_0))) \exp\left (T(1+2C_0) \right). 
 \ee 
\end{proof}

\subsection{Statistical Error Analysis (Proposition~\ref{thm:asymptotic})}

\begin{pro}\label{thm:asymptotic_dd} 
Assume the  conditions in Proposition~\eqref{thm:disc_dd}. 
Assume $\hat \theta_n = \argmin_{\theta} \hat \L_\epsilon(\theta)$ with $\hat \L_\epsilon(\theta)=\sum_{i=1}^n \ell_\epsilon(\theta; Z\datai)/n$, $Z\datai\sim \Q^\tg$. 
Take $\Q^x$ to be the standard Brownian bridge $\d Z^x_t = \frac{x-Z^x_t}{\T-t}\dt + \d W_t$ with $Z_0\sim \normal(0,v_0)$ and $v_0 >0$. 
Assume 
$
\sqrt{n}(\hat \theta_n-\theta\true) \dto 
\normal(0, \Sigma_*)$ as $n\to+\infty$, where $\Sigma_*$ is the asymptotic covariance matrix of the M estimator $\hat \theta_n$. 
Assume $\L_\epsilon(\theta)$ is second order continuously differentiable and strongly convex at  $\thetat$. 
Assume $\tg$ has a finite covariance matrix and  admits a density function $\pi$ 
that satisfies %
$\sup_{t\in[0,\t]}\E_{\Q^\tg}[\norm{\dd_\theta %
s^{\thetat}(Z_t, t)}^2(1+\norm{\dd\log\pi(Z_\t)}^2+\trace(\dd^2\log \pi(Z_\t)))]<+\infty$.
We have 
$$
\E[\sqrt{\KL(\tg ~||~ \P_\t^{\hat \theta_n, \epsilon})}] = \bigO{ \sqrt{\frac{\log (1/\epsilon)+1}{n}} + \sqrt{\epsilon} },
$$
where the expectation is w.r.t. the randomness of $\hat\theta_n$. 
\end{pro}
\begin{proof}[Proof of Proposition~\ref{thm:asymptotic}] 
Let 
\bb 
\theta^* = \argmin_{\theta}
\L_\epsilon(\theta):=\E_{Z\sim \Q^\tg} [\ell(\theta; Z)], &&
\hat\theta_n = \argmin_{\theta}  
\hat \L_\epsilon(\theta):=\frac{1}{n}\sum_{i=1}^n  \ell(\theta; Z\datai), 
\ee 
where $\{Z\datai\}_{i=1}^n$ is drawn i.i.d. from $\Q^\tg$.  
We assume that $\hat \theta_n$ is an asymptotically normal M-estimator, in which case we have 
$$
\sqrt{n} (\theta_n - \theta\true)  \dto  
\normal(0, \Sigma_*), 
$$
where 
\bb 
\Sigma_* = H_*^{-1} V_* H_*^{-1},
&& H_* = \E_{Z\sim \Q^\tg} \left [\dd_{\theta\theta}^2 \ell(\theta\true; Z) \right], 
&&  V_* = \E[ \dd_\theta \ell(\theta\true; Z)  \dd_\theta \ell(\theta\true; Z)\tt ], 
\ee 
and %
\bb 
n \E[(\L(\hat \theta_n) - \L(\theta\true))] 
 \asymp  \left [ \frac{1}{2} \sqrt{n}(\theta_* - \hat \theta_n)\tt H_*  \sqrt{n}(\theta^*-\hat\theta_n) \right]  
 \asymp \frac{1}{2} \trace(H_*^{-1} V_*),
\ee 
where $f\asymp g $ denotes that $f-g = \smallo{1}$.  We now need to bound $ \trace(H_*^{-1} V_*).$
Combining the results in Lemma~\ref{lem:assumelambdamin}
and Lemma~\ref{lem:letpistarbe}, we have when $t_k = (k-1) \epsilon$ and $T = K\epsilon$, 
$$
\trace(H_*^{-1} V_*)
= \bigO{
1 + \frac{1}{K} \sum_{k=1}^K \frac{1}{\t -t_k} 
} 
= \bigO{1 + \log(1/\epsilon)} . 
$$
Hence, 
\bb
\E[\sqrt{\KL(\tg ~||~ \P_\t^{\hat \theta_n, \epsilon})}] & = \bigO{ \E[\sqrt{\L(\hat\theta_n) - \L(\theta\true)}] + \sqrt{\epsilon} } \\ 
& = \bigO{ \sqrt{\E[\L(\hat\theta_n) - \L(\theta\true)]} + \sqrt{\epsilon} }\\
& = \bigO{\sqrt{\frac{\log(1/\epsilon+1)}{n}} + \sqrt{\epsilon}}. 
\ee
\end{proof} 

\begin{lem} 
Assume the conditions in Proposition~\ref{thm:asymptotic_dd}.  
\label{lem:assumelambdamin}
Define 
\bb 
I_0 = \E_{Z\sim \Q^\tg}\left [\norm{\dd \log p_0^{\theta\true}(Z_0)}^{2} \right ], && 
I_k = \E_{Z\sim \Q^\tg} \left [\norm{\dd_\theta s^{\thetat}(Z_{t_k}, t_k)}^2 \trace(\cov(\eta^{Z_\t}(Z_{[0,t_k]}, t_k)~|~Z_{t_k})) \right ], 
\ee 
for $\forall k = 1,\ldots K$. 
Then 
$$
\trace(H_*^{-1}V_*) ^{1/2}
\leq 
\frac{1}{\lambda_{\min}(H_*)^{1/2}} 
\left (I_0^{1/2} + \left(\frac{1}{K}\sum_{k=1}^K I_k\right)^{1/2} \right ). 
$$ 
\end{lem}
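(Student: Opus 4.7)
The plan is to first reduce the matrix quantity $\trace(H_*^{-1} V_*)$ to a scalar problem using the bound $\trace(H_*^{-1} V_*) \le \lambda_{\min}(H_*)^{-1} \trace(V_*)$, which holds because $V_*$ is positive semi-definite and $H_*^{-1} \preceq \lambda_{\min}(H_*)^{-1} I$ under the strong convexity assumption on $\L_\epsilon$. Since $\trace(V_*) = \E_{Z\sim \Q^\tg}[\norm{\dd_\theta \ell_\epsilon(\theta\true;Z)}^2]$, the problem reduces to bounding the expected squared norm of the score at $\theta\true$.

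The next step is to decompose the score as $\dd_\theta \ell_\epsilon(\theta\true;Z) = G_0 + \tfrac{1}{K}\sum_{k=1}^K G_k$, where $G_0 = -\dd_\theta \log p_0^{\theta\true}(Z_0)$ and $G_k = [\dd_\theta s^{\theta\true}(Z_{\tk},\tk)]^\top r_k$ with residual $r_k := s^{\theta\true}(Z_{\tk},\tk) - \eta^{Z_\t}(Z_{[0,\tk]},\tk)$ (noting $\sigma_k = 1$ under the standard Brownian bridge in Proposition~\ref{thm:asymptotic_dd}). Two elementary inequalities then separate the initial and trajectory contributions: Minkowski in $L^2$ yields $\sqrt{\E\norm{\dd_\theta\ell_\epsilon}^2} \le \sqrt{\E\norm{G_0}^2} + \sqrt{\E\norm{\tfrac{1}{K}\sum_k G_k}^2}$, and convexity of $\norm{\cdot}^2$ (Jensen or Cauchy--Schwarz) gives the pointwise bound $\norm{\tfrac{1}{K}\sum_k G_k}^2 \le \tfrac{1}{K}\sum_k \norm{G_k}^2$. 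The first term matches $I_0$ exactly, so it remains to show $\E\norm{G_k}^2 \le I_k$.

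For the per-$k$ bound I invoke the optimality characterization from Proposition~\ref{thm:markov}, namely $s^{\theta\true}(z,t) = \E_{\Q^\tg}[\eta^{Z_\t}(Z_{[0,t]},t)\mid Z_t = z]$, which immediately gives $\E[r_k \mid Z_{\tk}] = 0$ and hence $\E[r_k r_k^\top \mid Z_{\tk}] = \cov(\eta^{Z_\t}(Z_{[0,\tk]},\tk)\mid Z_{\tk})$. Writing $A_k = \dd_\theta s^{\theta\true}(Z_{\tk},\tk)$ and conditioning on $Z_{\tk}$,
\bb
\E[\norm{G_k}^2 \mid Z_{\tk}] = \trace\bigl(A_k^\top \cov(\eta^{Z_\t}\mid Z_{\tk}) A_k\bigr) \le \norm{A_k}^2 \, \trace\bigl(\cov(\eta^{Z_\t}\mid Z_{\tk})\bigr),
\ee
using the standard identity $\trace(A^\top C A) \le \lambda_{\max}(A A^\top)\trace(C) \le \norm{A}^2 \trace(C)$ for positive semi-definite $C$. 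Taking expectation yields $\E\norm{G_k}^2 \le I_k$, and assembling the pieces produces the stated inequality with prefactor $\lambda_{\min}(H_*)^{-1/2}$.

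The subtle point is the last step. It is tempting to try to exploit mutual orthogonality $\E[G_k^\top G_{k'}] = 0$ to shave a factor of $K$, but this fails: $r_k$ and $r_{k'}$ both depend on the common endpoint $Z_\t$ and the shared trajectory $Z_{[0,\min(\tk,\tk')]}$, so they are correlated in a non-trivial way and the conditional-expectation trick does not close cleanly. Side-stepping this with Jensen still delivers the $\tfrac{1}{K}\sum_k I_k$ averaging that is exactly what Proposition~\ref{thm:asymptotic} needs downstream when the tail terms behave like $I_k \sim 1/(\t - \tk)$ and the sum becomes a harmonic $\log(1/\epsilon)$.
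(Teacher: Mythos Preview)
Your proposal is correct and follows essentially the same route as the paper: reduce via $\trace(H_*^{-1}V_*)\le \lambda_{\min}(H_*)^{-1}\trace(V_*)$, decompose the score into the initial term and the $K$ trajectory terms, then combine the submultiplicativity $\norm{A_k r_k}^2\le\norm{A_k}^2\norm{r_k}^2$ with the identification $\E[\norm{r_k}^2\mid Z_{t_k}]=\trace(\cov(\eta^{Z_\t}\mid Z_{t_k}))$ (via $\E[r_k\mid Z_{t_k}]=0$ from Proposition~\ref{thm:markov}). The only cosmetic difference is ordering: the paper applies Minkowski to all $K{+}1$ terms and then Cauchy--Schwarz on $\tfrac{1}{K}\sum_k I_k^{1/2}$, whereas you split off $G_0$ first and apply Jensen inside the averaged sum; both land on the same final inequality.
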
 
\begin{proof}
From Lemma~\ref{thm:letaandb}, 
$\trace(H_*^{-1} V_*) \leq 
(\lambda_{\min}(H_*))^{-1} \trace(V_*)$. 
Hence we just need to bound $\trace(V_*)$. 

\bb
&\trace(V_*)^{1/2}
 = \E_{Z\sim \Q^\tg}\left [\norm{\dd_\theta \ell(\thetat, Z)}_2^2 \right]^{1/2}  \\
  & \leq 
  \E_{Z\sim \Q^\tg}\left [\norm{\dd_\theta \ell(\thetat, Z)}_2^2 \right]^{1/2}
  + \frac{1}{K}\sum_{k=1}^K 
  \E_{Z\sim \Qt}\left[ \norm{\dd_\theta s^{\theta\true}(Z_{t_k}, t_k)( s^{\theta^*}(Z_{t_k}, t_k) - \eta^{Z_\t}(Z_{[0,t_k]}, t_k) ) }_2^2 \right] ^{1/2}
  \\
  & \leq 
  \E_{Z\sim \Q^\tg}\left [\norm{\dd_\theta \ell(\thetat, Z)}_2^2 \right]^{1/2}
  + \frac{1}{K}\sum_{k=1}^K 
  \E_{Z\sim \Qt}\left[ \norm{\dd_\theta s^{\theta\true}(Z_{t_k}, t_k)}_2^2\norm{(s^{\theta^*}(Z_{t_k}, t_k) - \eta^{Z_\t}(Z_{[0,t_k]}, t_k) ) }_2^2 \right] ^{1/2}\\
  & = 
  \E_{Z\sim \Q^\tg}\left [\norm{\dd_\theta \ell(\thetat, Z)}_2^2 \right]^{1/2}
  + \frac{1}{K}\sum_{k=1}^K 
  \E_{Z\sim \Qt}\left[ \norm{\dd_\theta s^{\theta\true}(Z_{t_k}, t_k)}_2^2
  \trace\left (\cov\left (\eta^{Z_\t}(Z_{[0,t_k]}, t_k)~|~Z_{t_k} \right )\right) 
  \right] ^{1/2}   \\
  & = I_0^{1/2} + \frac{1}{K}\sum_{k=1}^K I_k^{1/2} \\
  & \leq  I_0^{1/2} + \sqrt{\frac{1}{K}\sum_{k=1}^K I_k}. 
\ee 
\end{proof}

\begin{lem}
Assume the results in Lemma~\ref{lem:assumelambdamin} and Lemma~\ref{lem:letpistarbe} hold.   
Assume $$\max_{k\in1,\ldots,K}\E_{Z\sim \tg}\left [\norm{\dd_\theta \ts^{\theta\true}(Z_{t_k}, t_k)}_2^2 \left ( 1 +  \norm{\dd\log \tgd(Z_\t)}_2^2 + \trace(\dd^2 \log \tgd(Z_\t)) \right ) \right ] < +\infty,$$ 
Then for $k=1,\ldots, K$, we have
$I_k 
 = \bigO{\frac{1}{T-t_k} + 1 }.  $
\end{lem}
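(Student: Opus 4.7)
The plan is to bound $\trace(\cov(Z_\t \mid Z_{t_k}))$ via two successive Tweedie-type identities that express the conditional covariance as a smoothed version of second-order moments of $\log \pi$, after which the stated moment assumption closes the bound. Since $\Q^x$ is the Brownian bridge $\d Z^x_t = (x - Z^x_t)/(\t - t)\,\dt + \d W_t$ with $Z_0 \sim \normal(0, v_0 I)$, an explicit solution of the SDE gives the linear Gaussian observation model $Z_{t_k} = \alpha_k Z_\t + V_k$ under $\Q^\tg$, where $\alpha_k = t_k/\t$, $V_k \perp Z_\t$, and $V_k \sim \normal(0, \tilde\sigma_k^2 I)$ with $\tilde\sigma_k^2 = t_k(\t-t_k)/\t + (\t-t_k)^2 v_0/\t^2$. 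Because $\eta^{Z_\t}(Z_{[0,t_k]}, t_k) = (Z_\t - Z_{t_k})/(\t - t_k)$, we have $\trace(\cov(\eta \mid Z_{t_k})) = \trace(\cov(Z_\t \mid Z_{t_k}))/(\t - t_k)^2$, so it suffices to control the latter.

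For this I would chain two Tweedie identities. First, set $W = Z_{t_k}/\alpha_k = Z_\t + \nu_k \xi$ with $\nu_k^2 = \tilde\sigma_k^2/\alpha_k^2$; the standard Tweedie covariance formula then reads $\cov(Z_\t \mid Z_{t_k} = y) = \nu_k^2 I + \nu_k^4\, \dd_w^2 \log p_W(y/\alpha_k)$. Second, writing $p_W = \pi * \phi_{\nu_k^2 I}$ as the smoothed prior and integrating by parts twice yields the identity $\dd_w^2 \log p_W(w) = \E[\dd^2 \log \pi(Z_\t) \mid W = w] + \cov(\dd \log \pi(Z_\t) \mid W = w)$. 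Taking traces and using $\trace(\cov(X \mid Y)) \leq \E[\norm{X}^2 \mid Y]$ gives the pointwise bound
\[
\trace(\cov(Z_\t \mid Z_{t_k})) \;\leq\; d\,\nu_k^2 + \nu_k^4\,\E\!\left[\trace(\dd^2 \log \pi(Z_\t)) + \norm{\dd \log \pi(Z_\t)}^2 \,\big|\, Z_{t_k}\right].
\]
Multiplying through by $\norm{\dd_\theta s^{\theta^*}(Z_{t_k}, t_k)}^2$, taking expectation, and applying the tower property collapses each term to a quantity controlled by the hypothesized moment bound.

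It remains to analyze the prefactors $\nu_k^2/(\t - t_k)^2$ and $\nu_k^4/(\t - t_k)^2$. A direct calculation gives $\nu_k^2/(\t - t_k)^2 = \t/(t_k(\t - t_k)) + v_0/t_k^2$. In the regime $t_k \geq \t/2$ this is $O(1/(\t-t_k))$ while $\nu_k^4/(\t-t_k)^2 = O(1)$, which produces $I_k = O(1/(\t - t_k) + 1)$ as claimed. The main obstacle is the small-$t_k$ regime, where the Tweedie coefficients formally blow up as $\alpha_k \to 0$. For $t_k < \t/2$, however, $1/(\t - t_k) \leq 2/\t$ is bounded, so it suffices to show $I_k = O(1)$; this I would prove by the direct estimate $\trace(\cov(Z_\t \mid Z_{t_k})) \leq \E[\norm{Z_\t}^2 \mid Z_{t_k}]$ together with the finite-second-moment hypothesis on $\tg$ and the moment assumption on $\norm{\dd_\theta s^{\theta^*}}^2$. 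Finally, the truly degenerate boundary $k = 1$ (with $t_1 = 0$) follows immediately from $Z_0 \perp Z_\t$: $\cov(Z_\t \mid Z_0) = \cov(\pi)$ gives $I_1 = \trace(\cov(\pi))\,\E[\norm{\dd_\theta s^{\theta^*}(Z_0, 0)}^2]/\t^2 = O(1)$, so the uniform bound $I_k = \bigO{1/(\t - t_k) + 1}$ holds across all $k$.
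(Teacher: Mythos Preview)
Your argument is essentially correct and recovers the same pointwise variance bound as the paper, but the route differs slightly. The paper's proof is literally one line: it invokes equation~\eqref{equ:tracecovyt}, which is the content of Lemma~\ref{lem:letpistarbe} (already assumed in the hypotheses). That lemma obtains the bound
\[
\trace(\cov_{\rho_t}(x\mid z)) \;\leq\; w_t\, d + w_t^2\,\E_{\rho_t}\!\left[\norm{\dd\log\pi(x)}^2 + \trace(\dd^2\log\pi(x))\,\Big|\,z\right]
\]
via two applications of Stein's identity (Lemma~\ref{lem:steinvar}, with test functions $\phi=x$ and $\phi=\dd\log\pi$). You instead derive the same inequality through Tweedie's covariance formula plus an integration-by-parts identity for $\dd^2\log p_W$; the two derivations are dual to one another and produce identical constants, so this part is fine.

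Where your write-up is slightly weaker is the small-$t_k$ regime. Your bound $\trace(\cov(Z_\t\mid Z_{t_k}))\leq \E[\norm{Z_\t}^2\mid Z_{t_k}]$, after multiplying by $\norm{\dd_\theta s^{\theta^*}(Z_{t_k},t_k)}^2$ and taking expectation, leaves you needing $\E\!\left[\norm{\dd_\theta s^{\theta^*}(Z_{t_k},t_k)}^2\,\norm{Z_\t}^2\right]<\infty$, which does not follow from the separate finiteness of $\E[\norm{Z_\t}^2]$ and $\E[\norm{\dd_\theta s^{\theta^*}}^2]$. The paper sidesteps this: Lemma~\ref{lem:letpistarbe} (which you are allowed to assume) asserts a \emph{uniform-in-$z$} constant bound $\trace(\cov_{\rho_t}(x\mid z))\leq c$ for all $t\in[0,\tau]$, so only $\E[\norm{\dd_\theta s^{\theta^*}}^2]<\infty$ is needed there. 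Replacing your direct estimate by that uniform bound closes the argument cleanly.
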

\begin{proof}
It is a direction application of \eqref{equ:tracecovyt}. 
\end{proof}

\begin{lem}\label{thm:letaandb}
Let $A$ and $B$ be two $d\times d$ positive semi-definite matrices. Then  
$\trace(AB) \leq \lambda_{\max}(A)\trace(B).$
\end{lem}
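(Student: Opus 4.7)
The plan is to use the fact that both matrices are positive semi-definite and diagonalize $B$ to reduce the trace identity to a sum of quadratic forms of $A$. Concretely, since $B$ is symmetric PSD, I would write its spectral decomposition $B = \sum_{i=1}^d \lambda_i(B) v_i v_i^{\mt}$ with $\lambda_i(B)\geq 0$ and $\{v_i\}$ an orthonormal basis. Then by linearity and cyclicity of trace,
\[
\trace(AB) = \sum_{i=1}^d \lambda_i(B)\, \trace(A v_i v_i^{\mt}) = \sum_{i=1}^d \lambda_i(B)\, v_i^{\mt} A v_i.
\]

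The key inequality is that since $A$ is PSD, its Rayleigh quotients are bounded above by $\lambda_{\max}(A)$, so $v_i^{\mt} A v_i \leq \lambda_{\max}(A)\,\|v_i\|^2 = \lambda_{\max}(A)$ for every $i$. Combining with $\lambda_i(B)\geq 0$ gives
\[
\trace(AB) \leq \lambda_{\max}(A) \sum_{i=1}^d \lambda_i(B) = \lambda_{\max}(A)\,\trace(B),
\]
which is the desired bound.

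An alternative one-line route, which I would mention only if space permits, is to note that $\lambda_{\max}(A)I - A\succeq 0$ and $B \succeq 0$, so $\trace((\lambda_{\max}(A)I - A)B) = \trace(B^{1/2}(\lambda_{\max}(A)I - A)B^{1/2})\geq 0$, which rearranges to the same inequality. There is no real obstacle here—the only subtlety is that the statement implicitly assumes $A$ is symmetric (so that $\lambda_{\max}(A)$ is well-defined as the largest eigenvalue and the spectral theorem applies); this is the standard convention when calling a matrix ``positive semi-definite'' in this paper, and I would simply note it in passing.
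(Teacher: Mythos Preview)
Your proof is correct and essentially the same as the paper's: both use the spectral decomposition of one matrix and positive semidefiniteness of the other, the only cosmetic difference being that the paper diagonalizes $A$ (bounding each eigenvalue $\lambda_i \leq \lambda_{\max}(A)$ against the nonnegative terms $u_i^\mt B u_i$) whereas you diagonalize $B$ and bound the Rayleigh quotients of $A$.
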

\begin{proof}
Write $A$ into $A = \sum_{i=1}^d\lambda_i u_i u_i\tt$ where  $\lambda_i$ and $u_i$ is the $i$-th eigenvalue and eigenvectors of $A$, respectively.  Then 
$$
\trace(AB) = \trace(\sum_{i=1}^d \lambda_i u_i \tt B u_i )
\leq  \lambda_{\max}(A)\trace(\sum_{i=1}^d u_i \tt B u_i ) 
= \lambda_{\max}(A)\trace(B).   
$$
\end{proof}

\paragraph{Controlling the Conditional Variance of the Regression Problem}
Assume $\Q^x$ is the standard Brownian bridge: 
\bbb\label{equ:sbb}
\Q^x \colon ~~~~ 
\d Z_t^x = \frac{x - Z^x_t}{\t-t} \dt +  \d W_t,~~~~ Z_0\sim\normal(0, v_0). 
\eee
In this case, the (ideal) loss function is %
\bb  
\L(\theta) = - \E_{X\sim \tg, Z\sim \Q^X} \left [ \log p_0^\theta(Z_0) + \frac{1}{2}\int_0^\t \norm{s^\theta(Z_t, t) - Y_t}^2\dt \right ], 
&&
\text{where $Y_t = \frac{X - Z_t}{1-t}.$} 
\ee 
The second part of the loss is a least square regression for predicting $Y_t = \eta^{X}(Z_{t}, t)$ with $s^\theta(Z_t,t)$.  
The conditioned variance $\cov(Y_t~|~Z_t)$ 
is an important factor that influences the error of the regression problem. 
We now show that  $\trace(\cov(Y_t~|~Z_t)) = O(1/\T-t)$ which means that it explodes to infinity when $t \uparrow \t$. 
 
 First, note that $\trace(\cov(Y_t~|~Z_t)) = \frac{1}{(\T-t)^2}\trace(\cov(X~|~Z_t))$. Using the estimate  in Lemma~\ref{lem:letpistarbe}, we have 
 \bbb \label{equ:tracecovyt} 
\trace(\cov(Y_t~|~Z_t)) = \bigO{\frac{1}{T-t} + 
\E\left [\norm{\dd_x\log\tgd\left (X  \right ) }_2^2 + \trace\left (\dd^2 \log \tgd\left (X \right)\right ) \bigg | Z_t \right ] }. 
\eee  

\begin{lem}
For the standard Brownian bridge in \eqref{equ:sbb}, we have 
$$
Z_t^x \sim \normal\left (\frac{t}{T}x, ~~\frac{t(T-t)}{T} + \frac{(\t-t)^2}{T^2} v_0 \right ). 
$$
\end{lem}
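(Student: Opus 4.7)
The plan is to solve the SDE explicitly as a linear SDE with time-dependent coefficients, then read off the Gaussian parameters. The equation $\d Z_t^x = \tfrac{x - Z_t^x}{T-t}\dt + \d W_t$ has the form $\d Z_t = (a(t) Z_t + b(t))\dt + \d W_t$ with $a(t) = -1/(T-t)$ and $b(t) = x/(T-t)$, so the standard integrating factor trick applies directly.

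First I would compute the integrating factor $\phi(t) = \exp\left(-\int_0^t a(s)\,\d s\right) = \exp\left(\int_0^t \tfrac{1}{T-s}\,\d s\right) = \tfrac{T}{T-t}$ and apply Ito's formula to $\phi(t) Z_t$, which yields $\d(\phi(t) Z_t) = \phi(t) b(t)\,\dt + \phi(t)\,\d W_t$. Integrating from $0$ to $t$ and using $\phi(0) = 1$ gives
\[
\phi(t) Z_t^x = Z_0 + \int_0^t \frac{Tx}{(T-s)^2}\,\d s + \int_0^t \frac{T}{T-s}\,\d W_s,
\]
and the deterministic integral evaluates to $\tfrac{xt}{T-t}$ via the antiderivative $\tfrac{1}{T-s}$. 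Dividing by $\phi(t)$ yields the closed form
\[
Z_t^x = \frac{T-t}{T}\,Z_0 + \frac{t}{T}\,x + (T-t)\int_0^t \frac{1}{T-s}\,\d W_s.
\]

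Since $Z_0 \sim \normal(0, v_0)$ is independent of $W$ and the Ito integral of a deterministic integrand is Gaussian, $Z_t^x$ is an affine combination of independent Gaussians and is therefore itself Gaussian. The mean is clearly $(t/T)\,x$, and by Ito isometry the variance decomposes as
\[
\var(Z_t^x) = \frac{(T-t)^2}{T^2}\,v_0 + (T-t)^2 \int_0^t \frac{\d s}{(T-s)^2} = \frac{(T-t)^2}{T^2}\,v_0 + \frac{t(T-t)}{T},
\]
which matches the claimed expression. There is no real obstacle here — everything is a routine application of the integrating factor method and Ito isometry; the only mild subtlety is verifying that the stochastic integral $\int_0^t (T-s)^{-1}\,\d W_s$ is well-defined up to time $t < T$, which follows from $\int_0^t (T-s)^{-2}\,\d s = t/(T(T-t)) < \infty$ for $t < T$.
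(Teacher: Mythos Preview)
Your proof is correct. The paper's own argument is slightly different in presentation: it invokes the textbook formula for the Brownian bridge with a \emph{deterministic} starting point $z_0$, namely $Z_t^{z_0,x} = \tfrac{tx + (T-t)z_0}{T} + \sqrt{\tfrac{t(T-t)}{T}}\,\xi_t$ with $\xi_t$ standard Gaussian, and then randomizes $z_0 \sim \normal(0,v_0)$ to obtain the stated law. Your integrating-factor computation is exactly how one derives that textbook formula in the first place, so the two proofs coincide at the level of the explicit representation $Z_t^x = \tfrac{T-t}{T}Z_0 + \tfrac{t}{T}x + (T-t)\int_0^t (T-s)^{-1}\d W_s$; you simply work it out from scratch rather than citing it. The gain of your route is self-containment; the paper's route is a two-line appeal to a known result.
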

\begin{proof}
Let $Z^{z_0, x}_t$ be the same process that is initialized from $Z_0^{z_0, x} = z_0$. We have from the textbook result regarding Brownian bridge that we can write $Z^{z_0, x}_t 
= \frac{t x + (\t - t) z_0}{\t} + \sqrt{\frac{t(\t-t)}{\t} } \xi_t $ where $\xi_t$ is some standard Gaussian random variable. 
The result  follows 
directly as $Z_t^x = Z^{Z_0, x}_t$ with $Z_0 \sim \normal(0, v_0)$.  
\end{proof}

\begin{lem}\label{lem:letpistarbe}
Let $\pi^*$ be the density function $\tg$ on $\RR^d$ whose covariance matrix exists.  
When $X\sim \tg$ and $Z \sim \Q^X$ from \eqref{equ:sbb} with $v_0>0$.  
Then the density function $\rho_t(x|z)$ of $X |Z_t = z_t$ satisfies
\bbb \label{equ:rho} 
\rho_t(x|z) 
\propto \tgd(x) \exp\left (-\frac{\norm{\frac{\t}{t} z -  x}^2_2}{2(\frac{\t (\T-t)}{t} + v_0 \frac{(\t-t)^2}{t^2}) } \right ).  
\eee  
In addition, there exists positive constants $c<+\infty $ and $\tau\in(0, \T)$, such that 
\bb
\trace(\cov_{\rho_t}(x|z)) 
\leq 
\begin{cases} 
w_t d + w_t^2 
\E_{\rho_t}\left [\norm{\dd_x\log\tgd\left (x  \right ) }_2^2 + \trace\left (\dd^2 \log \tgd\left (x \right)\right ) ~\bigg |~  z \right ], &\text{ when $\tau \leq t\leq  \t$} \\
c, & \text{ when $0\leq t\geq \tau$}, 
\end{cases}
\ee 
where $w_t = {\frac{\t (\T-t)}{t} + v_0 \frac{(\t-t)^2}{t^2}}$. 
So $\trace(\cov_{\rho_t}(x|z))$ is bounded and decay to zero with rate $O(T-t)$ as $t\uparrow T$. 
\end{lem}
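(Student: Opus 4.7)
My plan is to prove the two assertions in turn: the explicit formula via Bayes' rule, and the covariance bound via a Tweedie-style identity combined with integration by parts.

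\textbf{Formula for $\rho_t(x|z)$.} I would start by applying Bayes' rule: $\rho_t(x|z) \propto \tgd(x) \cdot p(z \mid X = x)$. By the preceding lemma, $Z_t^x \sim \normal((t/T)x,\, \sigma_t^2)$ with $\sigma_t^2 = t(T-t)/T + v_0(T-t)^2/T^2$, so $p(z \mid X = x) \propto \exp(-\|z - (t/T)x\|^2/(2\sigma_t^2))$. Pulling out the factor $t/T$ via $z - (t/T)x = (t/T)((T/t)z - x)$ rewrites the exponent as $-\|(T/t)z - x\|^2/(2 w_t)$ where $w_t = (T/t)^2 \sigma_t^2 = T(T-t)/t + v_0(T-t)^2/t^2$. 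This is exactly \eqref{equ:rho}.

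\textbf{Covariance bound for $\tau \le t \le T$.} I would interpret $\rho_t(\cdot \mid z)$ as the Bayesian posterior in the observation model $M = X + \sqrt{w_t}\,\xi$ with $\xi \sim \normal(0,I)$ independent of $X\sim\tg$, evaluated at $M = \mu_t := (T/t)z$. For this model, a direct computation (differentiating $\log p_M(\mu) = \log \int \tgd(x)\, g(\mu-x)\,dx$ twice, where $g$ is the $\normal(0,w_t I)$ density) yields the Tweedie-type identity
\begin{align*}
\cov_{\rho_t}(X \mid z) = w_t I + w_t^2 \nabla^2_\mu \log p_M(\mu_t), \qquad \trace \cov_{\rho_t}(X\mid z) = w_t d + w_t^2\, \Delta \log p_M(\mu_t).
\end{align*}
The key step is bounding $\Delta \log p_M(\mu)$. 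I would exploit the symmetry $\Delta_\mu g(\mu-x) = \Delta_x g(\mu-x)$ to move the Laplacian onto $\tgd$ via two integrations by parts, yielding $\Delta p_M(\mu) = \int \Delta \tgd(x)\, g(\mu - x)\, dx$; dividing by $p_M(\mu)$ gives $\Delta p_M(\mu)/p_M(\mu) = \E_{\rho_t}[\Delta\tgd(X)/\tgd(X) \mid z]$. Expanding $\Delta\tgd/\tgd = \Delta\log\tgd + \|\nabla\log\tgd\|^2$ and dropping the non-positive term in $\Delta \log p_M = \Delta p_M/p_M - \|\nabla\log p_M\|^2$ delivers the stated bound.

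\textbf{Uniform constant bound for $0\le t\le \tau$.} For small $t$, $w_t$ diverges and the above bound is useless. My plan is to note that $\mu_t/w_t \to 0$ and $1/w_t \to 0$ as $t \downarrow 0$, so the Gaussian tilt in \eqref{equ:rho} becomes vanishingly weak and $\rho_t(\cdot\mid z)$ converges to $\tgd$, whose trace covariance is finite by assumption. Choosing $\tau$ sufficiently small then yields a finite $c$ by continuity.

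\textbf{Main obstacle.} The Tweedie-type derivation and the integration-by-parts step are clean given mild regularity on $\tgd$ (twice differentiable with vanishing boundary terms, which will be standard). The delicate part is the small-$t$ bound: because $\mu_t = (T/t)z$ diverges with $\|z\|$, the convergence $\rho_t(\cdot\mid z) \to \tgd$ is not uniform in $z$, so producing a genuinely $z$-uniform constant $c$ is more subtle than the limit argument suggests. I expect this will require either a direct estimate controlling the tilted second moment via the finite covariance of $\tgd$, or weakening the claim to an averaged bound $\E_{Z_t}[\trace\cov(X\mid Z_t)] \le \trace\cov(X)$ (immediate from the law of total variance), which is all that is actually used downstream in the estimate of $I_k$.
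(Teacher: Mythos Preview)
Your proposal is correct and arrives at the same bound, but the route for the covariance inequality differs from the paper's. The paper packages the bound into a separate lemma (Lemma~\ref{lem:steinvar}) proved by two applications of Stein's identity to the tilted density $p(x)\propto\tgd(x)\exp(-\alpha\|x-b\|^2/2)$: taking $\phi(x)=x$ gives $\E_p[\|x\|^2]=\alpha^{-1}(\E_p[\dd\log\tgd\cdot x]+d)$, and taking $\phi(x)=\dd\log\tgd(x)$ expresses the cross term as $\alpha^{-1}\E_p[\|\dd\log\tgd\|^2+\trace\dd^2\log\tgd]$, yielding the result directly without introducing an auxiliary observation model. Your Tweedie/Miyasawa derivation is a legitimate alternative: it exposes the Gaussian-convolution structure and reduces everything to the identity $\cov(X\mid M)=w_tI+w_t^2\nabla^2\log p_M$, after which your integration-by-parts step is essentially Stein's identity in disguise. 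The paper's argument is slightly more self-contained (no marginal density $p_M$ or second-order Tweedie formula needed), while yours connects more transparently to the denoising-score literature that underlies the whole paper. For the Bayes-rule formula and the small-$t$ limit your argument coincides with the paper's; your explicit flag about the lack of uniformity in $z$ is well taken---the paper's proof has exactly the same loose end and simply asserts $\rho_t\to\tgd$ pointwise in $z$, so your suggested fallback to the law of total variance for the downstream use is a reasonable patch.
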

\begin{proof}
We know that $X \sim \tg$ and $Z_t^X | X \sim \normal(t/T X, ~w_t)$. Hence, \eqref{equ:rho} is a direct result of Bayes rule. %
Then  Lemma~\ref{lem:steinvar} gives 
\bb
\trace(\cov_{\rho_t}(x|z)   )  
& =   w_t d + w_t^2 
\E_{\rho_t}\left [\norm{\dd_x\log\tgd\left (x  \right ) }_2^2 + \trace\left (\dd^2 \log \tgd\left (x \right)\right ) \bigg | z \right ]. 
\ee 

On the other hand, 
\bb
\rho_t(x|z) \propto \tgd(x) 
\exp( -\frac{1}{2 w_t} \norm{x}^2 + \frac{\t}{t w_t}  z \tt x ),
\ee 
When $t\to 0$, we have $1/w_t \to 0$ and $\t /(t w_t) \to 0$. Hence, $\rho_t(x|z)$ converges to $\tgd(x)$ as $t\to 0$, as a result, $\trace(\cov_{\rho_t}(x|z)) \to \trace(\cov_{\tgd}(x)) < +\infty$.  
Therefore, for any $c>0$, there exists  $t_0 >0$, such that $\trace(\cov_{\rho_t}(x|z)) \leq \trace(\cov_{\tgd}(x)) + c$ when $0\leq t \leq t_0$. 
\end{proof}

\begin{rem}
We need to have $v_0 >0$ to ensure that $\t/(t w_t) \to 0$ in the proof of Lemma~\ref{lem:letpistarbe}. This is purely a technical reason, for yielding a finite bound of the conditioned variance when $t$ is close to $0$. 
We can 
establish the same result when $v_0 = 0$ 
by adding the assumption that $\max_{k\in1,\ldots,K}\E_{Z\sim \Qt}\left [\norm{\dd_\theta s^{\theta\true}(Z_{t_k}, t_k)}_2^2 \trace(\cov_{\Pi^*_{Z_{t_k}}}(Z_\t)) \right ] < +\infty$, where $\Pi_z^*$ is the distribution with density $\pi^*_z(x)  \propto \tgd(x) \exp(z\tt x/\t)$. 
\end{rem}

\begin{lem}\label{lem:steinvar} 
Let $p(x) \propto \pi(x) \exp\left (  -\alpha \frac{\norm{x -b}_2^2}{2}\right)$ be a positive probability density function on $\RR^d$, 
where $\alpha > 0$, $b\in \RR$ and $\log \pi$ is continuously second order differentiable. Then 
\bb 
\trace(\cov_p(x))\leq 
\E_p[\norm{x}^2_2] 
=\alpha^{-1}   d 
+ \alpha^{-2} 
\left (
\E_p[\norm{\dd_x\log\pi(x) }_2^2 + \trace(\dd^2 \log \pi(x)) ] 
\right). 
\ee 
\end{lem}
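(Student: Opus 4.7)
The plan is to derive the identity by two successive applications of the Stein / integration-by-parts formula
\[ \E_p[\Delta f(x) + \nabla f(x)^\top \nabla \log p(x)] = 0, \]
which holds for any sufficiently smooth test function $f$ whose associated boundary terms vanish. Since $p$ carries the Gaussian factor $\exp(-\alpha\|x-b\|^2/2)$, the required decay is easy to justify under the mild smoothness assumed on $\log\pi$, so the sole technical care is to note that everything is dominated by this Gaussian tail. The first (and essentially trivial) ingredient is the covariance inequality $\trace(\cov_p(x)) \leq \E_p[\|x-b\|^2]$ for any fixed shift $b$, obtained from the standard fact that the mean minimizes the mean squared distance.

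First, I would apply the identity with $f(x) = \tfrac{1}{2}\|x-b\|^2$, for which $\nabla f = x-b$ and $\Delta f = d$, yielding
\[ \E_p\bigl[(x-b)^\top \nabla \log p(x)\bigr] = -d. \]
Since $p \propto \pi \cdot \exp(-\alpha\|x-b\|^2/2)$ gives $\nabla \log p(x) = \nabla \log \pi(x) - \alpha(x-b)$, substitution rearranges to
\[ \alpha\, \E_p[\|x-b\|^2] \;=\; d + \E_p\bigl[(x-b)^\top \nabla \log \pi(x)\bigr]. \]

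Second, I would eliminate the remaining inner product by applying the identity again with $f(x) = \log \pi(x)$, giving
\[ \E_p\bigl[\Delta \log \pi(x) + \nabla \log \pi(x)^\top \nabla \log p(x)\bigr] = 0. \]
Expanding $\nabla \log p$ the same way yields
\[ \alpha\, \E_p\bigl[(x-b)^\top \nabla \log \pi(x)\bigr] \;=\; \E_p\bigl[\|\nabla \log \pi(x)\|^2 + \trace(\nabla^2 \log \pi(x))\bigr]. \]

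Finally, plugging this into the previous line and dividing by $\alpha$ gives the claimed identity
\[ \E_p[\|x-b\|^2] \;=\; \alpha^{-1}d + \alpha^{-2}\E_p\bigl[\|\nabla \log \pi\|^2 + \trace(\nabla^2 \log \pi)\bigr], \]
which combined with the initial covariance inequality proves the bound (with $b$ playing the role of the shift in the lemma's stated form). The only genuine obstacle is justifying that the boundary terms in the two integrations by parts vanish, but this is immediate from the Gaussian suppression in $p$ provided $\nabla \log \pi$ and $\nabla^2 \log \pi$ grow at most polynomially, which is implicit in the finiteness of the expectations on the right-hand side.
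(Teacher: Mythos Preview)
Your proposal is correct and follows essentially the same approach as the paper: two applications of Stein's identity with the test vector fields $x-b$ and $\nabla\log\pi$, then combining the resulting relations and using $\trace(\cov_p(x))\leq \E_p[\|x-b\|^2]$. The only cosmetic difference is that you handle the shift $b$ directly via $\|x-b\|^2$, whereas the paper first proves the $b=0$ case and then reduces general $b$ to it by the change of variables $\tilde p(x)\propto \pi(x+b)\exp(-\alpha\|x\|^2/2)$; your route is slightly more direct but identical in substance.
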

\begin{proof} 
Let us focus on the case when $b = 0$ first. 
Stein's identity says that 
\bb
\E_p\left  [
(\dd_x \log \pi(x) - \alpha x)\tt  \phi(x) + \dd_x \tt \phi(x) 
\right ] = 0, 
\ee 
for a general  continuously differentiable function $\phi$ when the integrals above are finite.  %

Taking $\phi = x$ yields that 
\bb
\E_p\left  [(\dd_x \log \pi(x) - \alpha x)\tt  x + d  \right ] = 0, 
\ee 
which gives 
$$
\E_p[\norm{x}_2^2] = \alpha^{-1}(\E_p[\dd_x\log \pi(x)\tt x] + d). 
$$
\myempty{Let $g = \E_p[\norm{\dd_x \log \pi(x)}_2^2]^{1/2}$. 
This gives 
$$
(\E_p[\norm{x}_2^2]^{1/2} - \alpha^{-1} g/2 )^2 
\leq \alpha^{-2}  g^2/4 + \alpha^{-1} d.   
$$
This gives 
$$
\E_p[\norm{x}_2^2]^{1/2} \leq 
\alpha^{-1} g/2
+ \sqrt{\alpha^{-2}  g^2/4 + \alpha^{-1} d} . 
$$
Or
\bb 
\E_p[\norm{x}_2^2]  
&\leq \alpha^{-2} g/4+ \alpha^{-2}  g^2/4 + \alpha^{-1} d  + 2\alpha^{-1} g/2\sqrt{ \alpha^{-2}  g^2/4 + \alpha^{-1} d } \\
& = \alpha^{-1} d  + \alpha^{-2} g/2  + \alpha^{-1} g\sqrt{ \alpha^{-2} g^2/4 + \alpha^{-1} d }. 
\ee }
On the other hand, 
taking $\phi(x) = \dd_x \log \pi(x)$ yields 
\bb
\E_p\left  [(\dd_x \log \pi(x) - \alpha x)\tt  \dd_x \log \pi(x) + \trace(\dd^2 \log \pi(x)) \right ] = 0, 
\ee 
which gives 
$$
\E_p[\dd_x\log\pi(x)\tt x] = \alpha^{-1} \left (
\E_p[\norm{\dd_x\log\pi(x) }_2^2 + \trace(\dd^2 \log \pi(x)) ] 
\right). 
$$
This gives 
\bb 
\E_p[\norm{x}^2_2] 
= d \alpha^{-1}  
+ \alpha^{-2} \left (
\E_p[\norm{\dd_x\log\pi(x) }_2^2 + \trace(\dd^2 \log \pi(x)) ] 
\right). 
\ee 

For $b\neq 0$,
define $\tilde p(x ) \propto \pi\left (x + b\right) \exp\left ( - \frac{\alpha}{2} \norm{x}^2 \right)$, which is the distribution of $\tilde x = x-b$ when $x\sim p$. 
Then applying the result above to $\tilde p$ yields 
\bb
\trace(\cov_{p}(x)   )  
& = \trace(\cov_{\tilde p}(x) ) \\ 
& \leq   \alpha^{-1} d + \alpha^{-2}  
\E_{x\sim \tilde p}\left [\norm{\dd_x\log\pi\left (x + b \right ) }_2^2 + \trace\left (\dd^2 \log \pi\left (x + b \right)\right )   \right ] \\ 
& =   \alpha^{-1} d + \alpha^{-2}  
\E_{\sim p}\left [\norm{\dd_x\log\pi\left (x  \right ) }_2^2 + \trace\left (\dd^2 \log \pi\left (x \right)\right )  \right ]. 
\ee 
\end{proof}

\section{Additional Materials of the Experiments}
In our experiments, $T=1$ and $\epsilon = T/K = 1 / K$. Moreover, 
we take the time grid by randomly sampling from $\{i/K\}_{i=0}^{K-1}$ for the training objective Eq.~\eqref{equ:disc_training}. For evaluation, 
we calculate the standard  evidence lower bound (ELBO) by viewing the resulting time-discretized model as a latent variable model: 
\begin{equation*}
    \E_{X\sim \tg} [- \log \hat p^\theta_T (
    X)] \leq  
    \E_{Z\sim \Qt}\left[ - \log \frac{\hat p^\theta_0(Z_0)}{q_0(Z_0)} - \sum_{k= 1}^{K} \log \frac{\hat p^\theta_{t_{k+1}|t_k} ( Z_{t_{k+1}} |  Z_{t_k})}{q_{t_{k+1}|t_k}( Z_{t_{k+1}} |  Z_{t_k})} \right],
\end{equation*}
where $t_k = (k-1) \epsilon$, and 
$\hat p^\theta$ is the density function of the time-discretized version of $\P^\theta$, and $q$ is the density function of $\Q$. 
We adopt Monte-Carlo sampling to estimate the log-likelihood. As in~\citep{song2020score}, we repeat 5 times in the test set for the estimation. For categorical/integer/grid generation, the likelihood of the last step should take the rounding into account: in practice, we have $\hat Z_{T} = \mathrm{rounding}(\hat Z_{t_{K}} + \epsilon s^{\theta}(\hat Z_{t_K}, t_K) + \sqrt{\epsilon} \sigma(Z_{t_k}, t_K) \xi_K, ~ \Omega)$, where $\mathrm{rounding}(x, \Omega)$ denotes finding the nearest element of $x$ on $\Omega$, and hence the likelihood $\hat p^\theta_{T|t_{K}}$  of the last step should incorporate  the rounding operator as a part of the model.  

\subsection{Generating Integer-valued Point Clouds} 
In this experiment, we need to process point cloud data on integer grid. 
To prepare the data, we firstly sample 2048 points from the ground truth mesh. 
Then, we normalize all the point clouds to a unit bounding box. 
After this, we simply project the points onto grid point by rounding the coordinate to integer. 
The metrics in the main text, MMD, COV and 1-NNA are computed with respect to the post-processed integer-valued training point clouds.

\subsection{Generating Semantic Segmentation Maps on CityScapes}
In this experiment, we set \emph{(Noise Decay A)}: $\varsigma^2_t=3\exp(-3t)$;  \emph{(Noise Decay B)}: $\varsigma^2_t=3(1-t)$; \emph{(Noise Decay C)} $\varsigma^2_t = 3 - 3\exp(-3(1-t))$. We visualize the noise schedule in Figure~\ref{fig:appendix_varsigma}. 
Note that, except for Constant Noise, all the other three processes gradually decrease the magnitude of the noise as $t \rightarrow 1$.
For fair comparison, we use the same neural network as in~\cite{hoogeboom2021argmax}. The network is optimized with Adam optimizer with a learning rate of $0.0002$. 
The model is trained for 500 epochs.
The CityScapes  dataset~\citep{Cordts2016Cityscapes} contains photos captured by the cameras on the driving cars.
A pixel-wise semantic segmentation map is labeled for each photo.
As in~\citep{hoogeboom2021argmax}, we rescale the segmentation maps from cityscapes to $32 \times 64$ images using nearest neighbour interpolation.
Our training set and test set is exactly the same as that of \citep{hoogeboom2021argmax} for fair comparison.
We provide more samples in Figure~\ref{fig:appendix_segmentation}.

\begin{figure}
    \centering
    \includegraphics[width=0.4\textwidth]{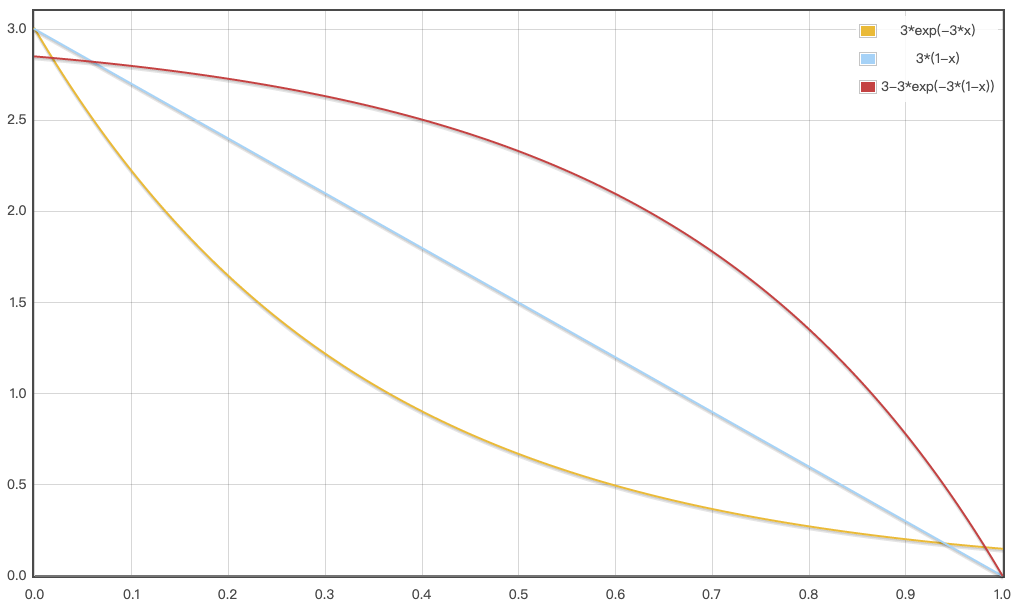}
    \caption{Visualization of the noise schedule of Noise decay A, Noise decay B and Noise decay C.}
    \label{fig:appendix_varsigma}
\end{figure}

\begin{figure}
    \centering
    \includegraphics[width=0.5\textwidth]{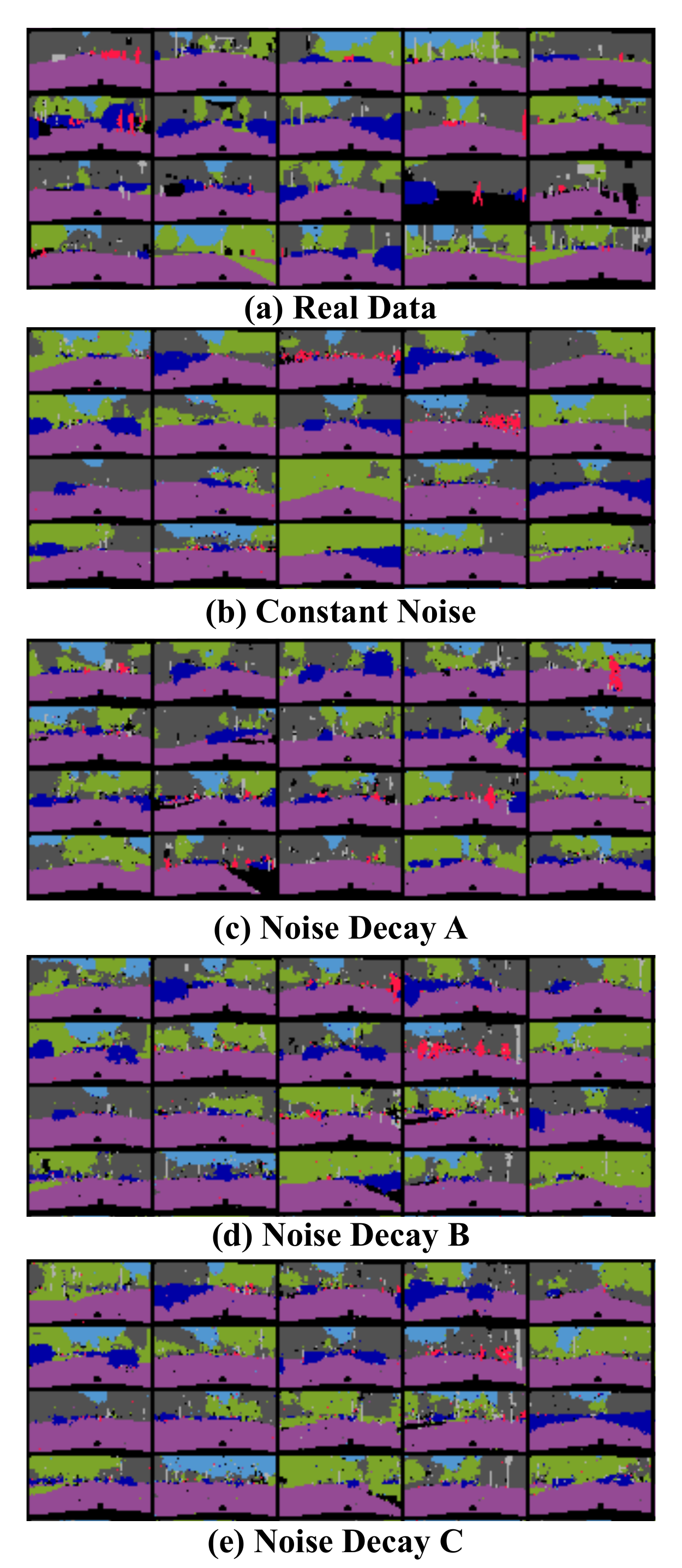}
    \caption{Additional samples from real data, Constant Noise, Noise decay A, Noise decay B and Noise decay C.}
    \label{fig:appendix_segmentation}
\end{figure}

\begin{figure}
    \centering
    \includegraphics[width=0.4\textwidth]{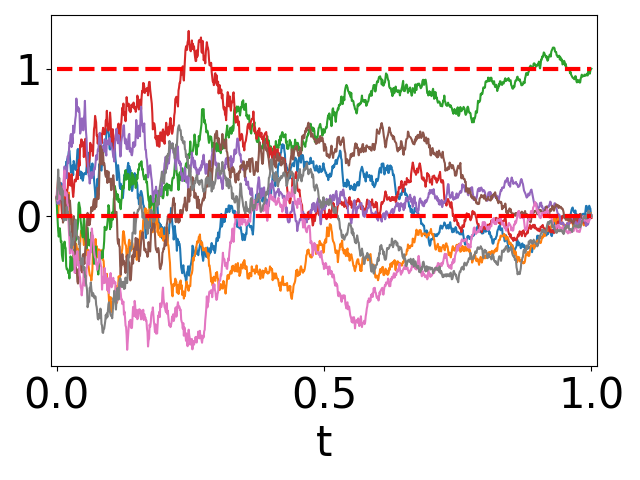}
    \caption{Diffusion process of one pixel (a $8$-dimensional vector) in CityScapes. As $t \rightarrow 1$, 7 of the dimensions reaches $0$, while 1 of the dimensions reaches $1$, turning the vector into a \texttt{one-hot} vector.}
    \label{fig:appendix_cityscape_trajectory}
\end{figure}

\subsection{Continuous CIFAR10 generation}
We provide additional results on generating CIFAR10 images in the continuous domain.
The model is trained using the same training strategy  as DDPM~\citep{ho2020denoising} with the code base provided in~\citep{song2020score}. Specifically, the neural network is the same U-Net structure as the implementation in~\citep{song2020score}. The optimizer is Adam  with a learning rate of $0.0002$. According to common practice~\citep{song2020improved, song2020score}, the training is smoothed by exponential moving average (EMA) with a factor of $0.999$. The results are shown in Figure~\ref{fig:appendix_cifar_continuous} and Table~\ref{tab:continuous_cifar10}. We use $K=1000$ and $\d t=0.001$ for discretizing the SDE. Bridge variants yields similar generation quality as other diffusion models.

\begin{figure}
    \centering
    \includegraphics[width=0.99\textwidth]{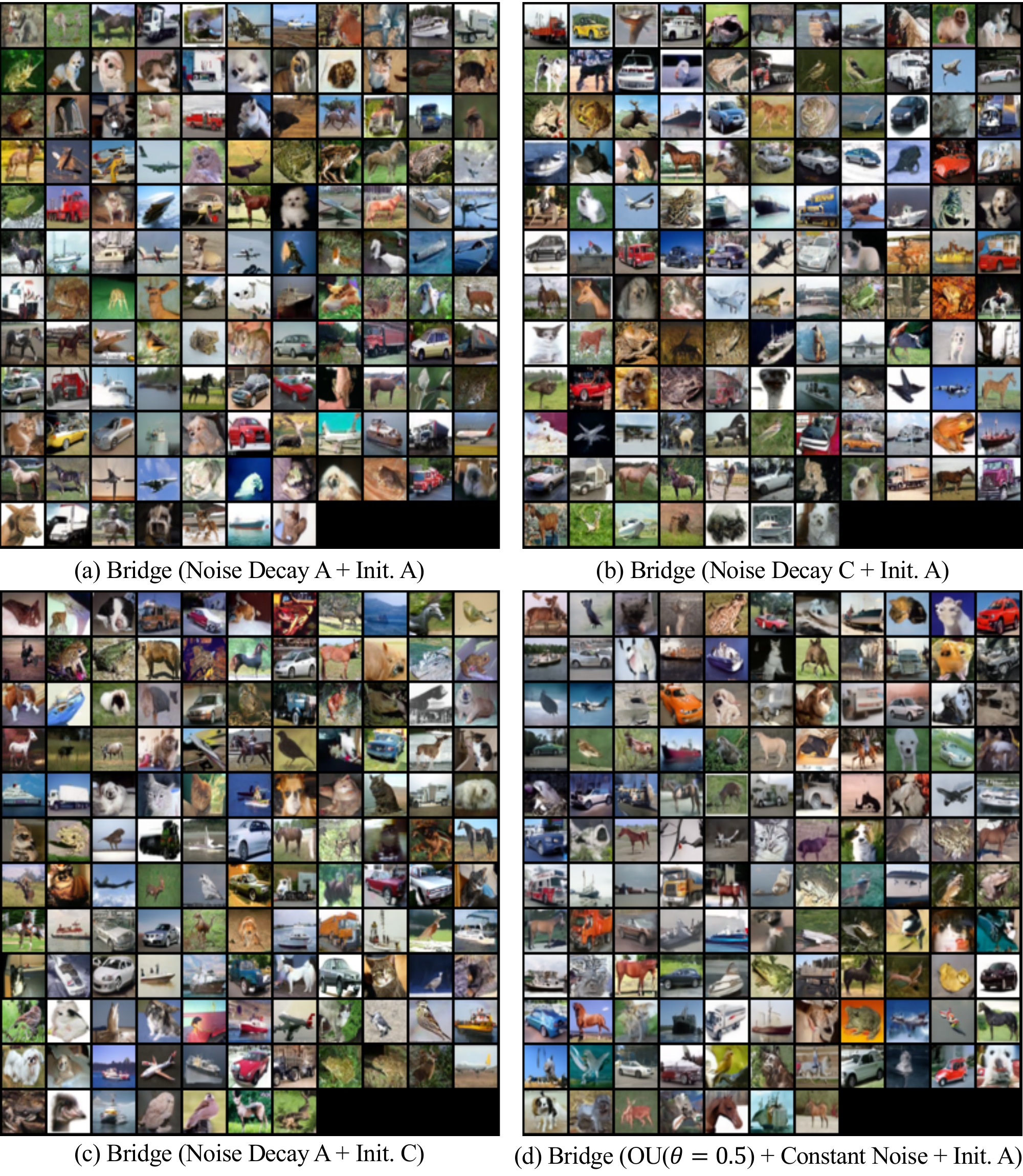}
    \caption{Samples for continuous CIFAR10 generation. Bridge variants can generate high-quality images.}
    \label{fig:appendix_cifar_continuous}
\end{figure}

\begin{figure}
    \centering
    \includegraphics[width=0.99\textwidth]{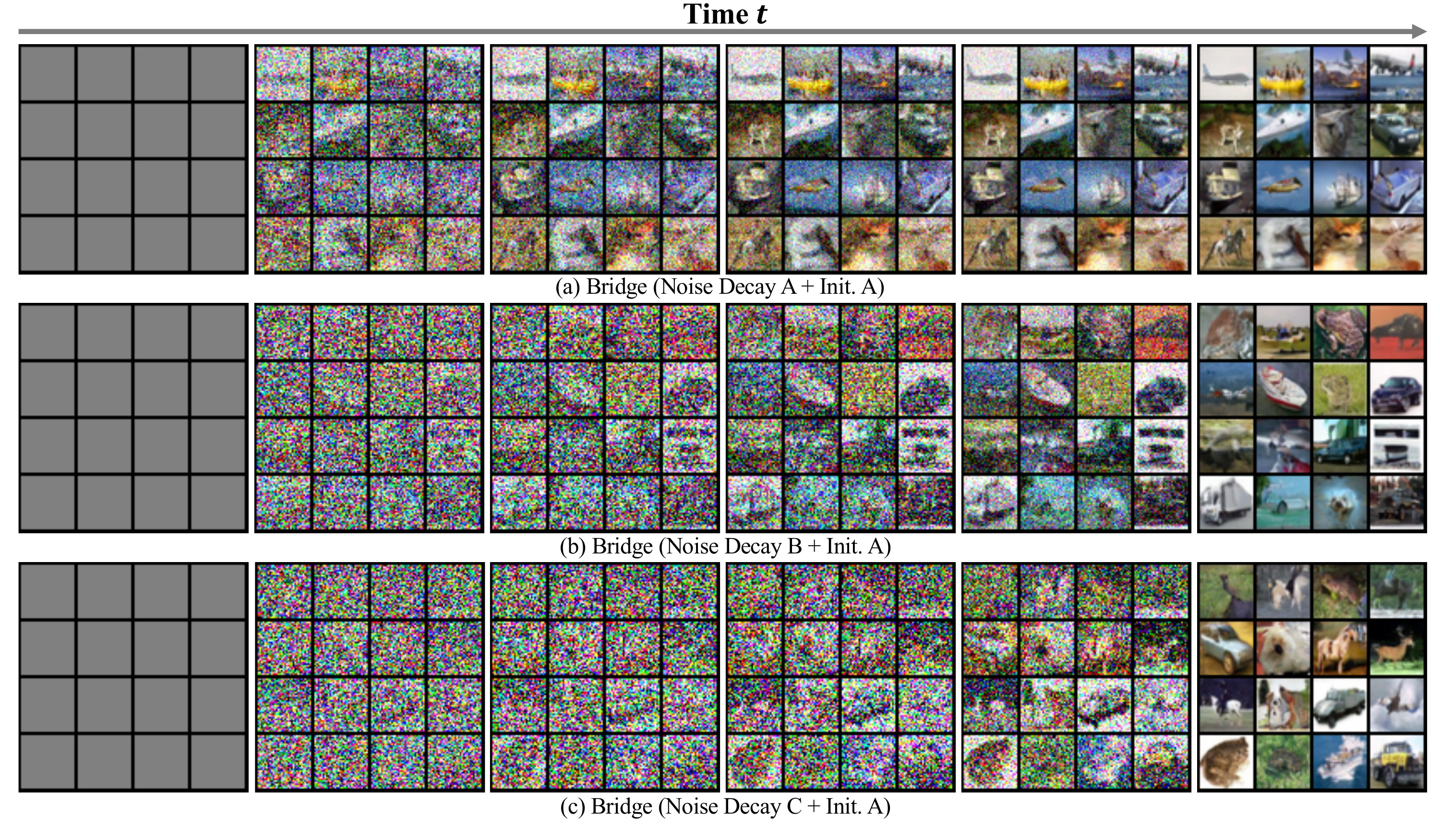}
    \caption{Sampling trajectory for continuous CIFAR10 generation with Noise Decay A, Noise Decay B, Noise Decay C. Bridge can generate images with different noise schedules.}
    \label{fig:appendix_noise}
\end{figure}

\subsection{Discrete CIFAR10 generation}
The experiment details are similar to the continuous CIFAR10 generation, except that the domain of generation is limited to integer values. To account for the discretization error, after the final step, we apply rounding to the generated images to get real integer-valued images. We compare the value distribution of the generated images in Figure~\ref{fig:appendix_value_cifar}.

\begin{figure}
    \centering
    \includegraphics[width=0.5\textwidth]{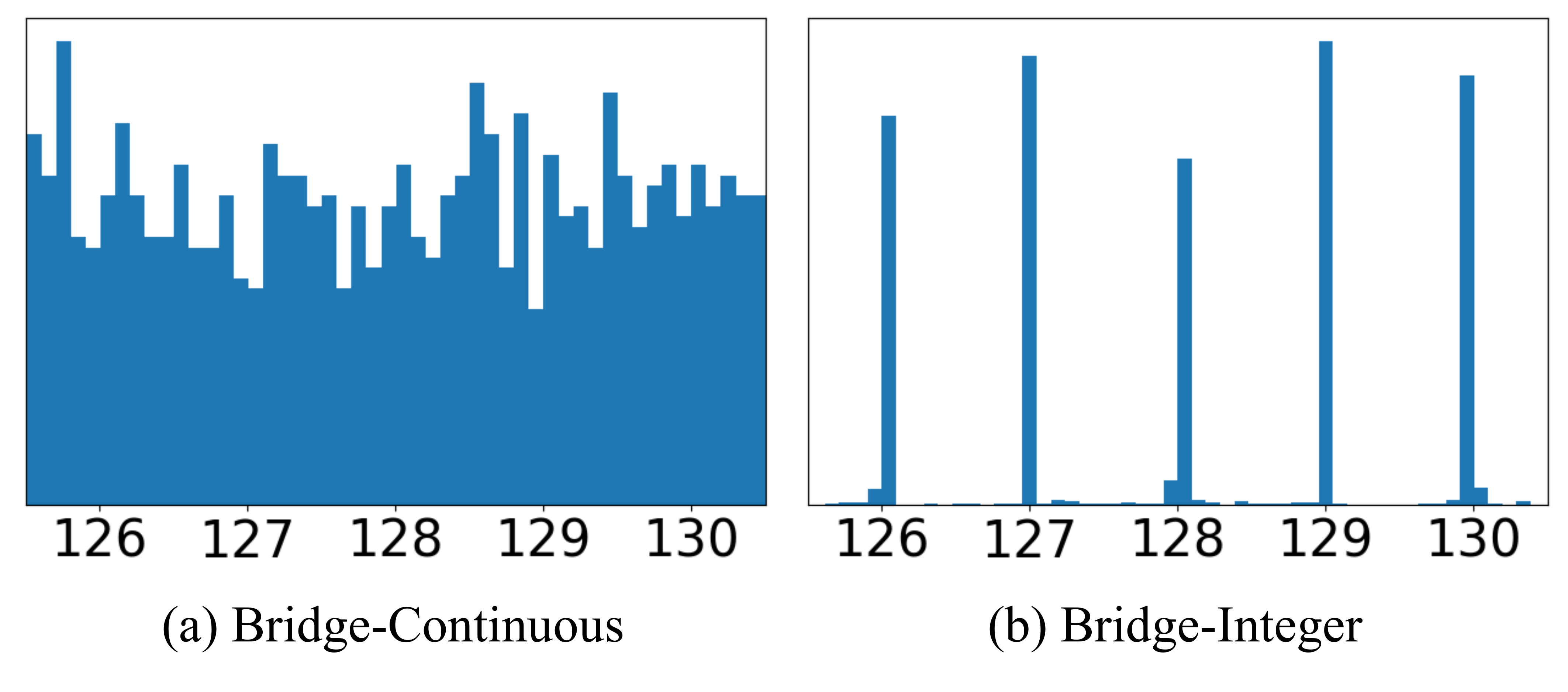}
    \caption{Final value distribution of the generated images with Bridge-Continuous and Bridge-Integer (before rounding) on CIFAR10. We only show the values in $[125.5, 130.5]$ for visual clarity. Our Bridge-Integer generates discrete values.}
    \label{fig:appendix_value_cifar}
\end{figure}

\begin{table}[]
    \centering
    \begin{tabular}{l|ccc}
    \hline \hline
    Methods & IS & FID   \\ \hline 
    \multicolumn{1}{l}{\textbf{Conditional}} &  &  \\ \hline \hline
    EBM~\citep{du2019implicit} & 8.30 & 37.9  \\
    JEM~\citep{grathwohl2019your} & 8.76 & 38.4  \\
    BigGAN~\citep{brock2018large} & 9.22 & 14.73  \\
    StyleGAN2+ADA~\citep{karras2020training} & \textbf{10.06} & \textbf{2.67} \\ \hline \hline
    \multicolumn{1}{l}{\textbf{Unconditional}} & \multicolumn{1}{c}{} &\\ \hline \hline
    NCSN~\citep{song2019generative} & 8.87 & 25.32 \\
    NCSNv2~\citep{song2020improved} & 8.40 & 10.87 \\
    DDPM ($L$)~\citep{ho2020denoising} & 7.67 & 13.51 \\
    DDPM ($L_{simple}$)~\citep{ho2020denoising} & 9.46 & 3.17 \\ 
    Schr{\"o}dinger~\citep{wang2021deep} & 8.14 & 12.32 \\ \hline \hline
    Bridge (Constant Noise + Init. A) & 8.22 & 9.80 \\ 
    Bridge (Noise Decay A + Init. A) & 8.83 & 6.76 \\
    Bridge (Noise Decay B + Init. A) & \color{blue}{\textbf{8.84}} & 6.52  \\ 
    Bridge (Noise Decay C + Init. A) & 8.62 & 7.62  \\ \hline
    Bridge (Noise Decay A + Init. B ) & 8.82 & 7.21  \\
    Bridge (Noise Decay A + Init. C ) & 8.75 & \color{blue}{\textbf{6.28}} \\
    Bridge (OU($\theta=0.1$) + Constant Noise + Init. A) & 8.15 & 10.94 \\
    Bridge (OU($\theta=0.5$) + Constant Noise + Init. A) & 8.19 & 10.85 \\ \hline 
    \hline \hline
    \end{tabular}
    \caption{Additional results on continuous CIFAR10 generation. `Init.' refers to `Initialization'. Bridge yields comparable IS and FID with other diffusion models.}
    \label{tab:continuous_cifar10}
\end{table}

\section{Broader Impact}
This paper focuses on theoretical analysis of diffusion models.
In terms of social impact, 
our method aims to open the black-box of diffusion generative models, and hence increase the interpretability and reliability of this family of ML models. 
Yet, it is not impossible to use the generative models for the generation of harmful contents. We believe how to incorporate the safety constraints into the generative models is still a valuable open question.

\end{document}